\crefname{condition}{condition}{conditions}
\Crefname{condition}{Condition}{Conditions}
\renewcommand\section{\@startsection{section}{1}{\z@}%
	{-8\p@ \@plus -4\p@ \@minus -4\p@}%
	{6\p@ \@plus 4\p@ \@minus 4\p@}%
	{\normalfont\large\bfseries\boldmath
		\rightskip=\z@ \@plus 8em\pretolerance=10000 }}
\renewcommand\subsection{\@startsection{subsection}{2}{\z@}%
	{-8\p@ \@plus -4\p@ \@minus -4\p@}%
	{6\p@ \@plus 4\p@ \@minus 4\p@}%
	{\normalfont\normalsize\bfseries\boldmath
		\rightskip=\z@ \@plus 8em\pretolerance=10000 }}
\renewcommand\subsubsection{\@startsection{subsubsection}{3}{\z@}%
	{-4\p@ \@plus -4\p@ \@minus -4\p@}%
	{-1.5em \@plus -0.22em \@minus -0.1em}%
	{\normalfont\normalsize\bfseries\boldmath}}
\newcommand\Rset{{\mathbb{R}}}
\newcommand\Nset{{\mathbb{N}}}
\newcommand\Zset{{\mathbb{Z}}}
\newcommand\probm{{\mathbb{P}}}
\newcommand\expv{{\mathbb{E}}}
\newcommand\calL{{\mathcal{L}}}
\newcommand{\supp}[1]{\textsf{supp}\left({#1}\right)}
\tikzstyle{ipe stylesheet} = [
\definecolor{red}{rgb}{1,0,0}
\definecolor{blue}{rgb}{0,0,1}
\definecolor{green}{rgb}{0,1,0}
\definecolor{yellow}{rgb}{1,1,0}
\definecolor{orange}{rgb}{1,0.647,0}
\definecolor{gold}{rgb}{1,0.843,0}
\definecolor{purple}{rgb}{0.627,0.125,0.941}
\definecolor{gray}{rgb}{0.745,0.745,0.745}
\definecolor{brown}{rgb}{0.647,0.165,0.165}
\definecolor{navy}{rgb}{0,0,0.502}
\definecolor{pink}{rgb}{1,0.753,0.796}
\definecolor{seagreen}{rgb}{0.18,0.545,0.341}
\definecolor{turquoise}{rgb}{0.251,0.878,0.816}
\definecolor{violet}{rgb}{0.933,0.51,0.933}
\definecolor{darkblue}{rgb}{0,0,0.545}
\definecolor{darkcyan}{rgb}{0,0.545,0.545}
\definecolor{darkgray}{rgb}{0.663,0.663,0.663}
\definecolor{darkgreen}{rgb}{0,0.392,0}
\definecolor{darkmagenta}{rgb}{0.545,0,0.545}
\definecolor{darkorange}{rgb}{1,0.549,0}
\definecolor{darkred}{rgb}{0.545,0,0}
\definecolor{lightblue}{rgb}{0.678,0.847,0.902}
\definecolor{lightcyan}{rgb}{0.878,1,1}
\definecolor{lightgray}{rgb}{0.827,0.827,0.827}
\definecolor{lightgreen}{rgb}{0.565,0.933,0.565}
\definecolor{lightyellow}{rgb}{1,1,0.878}
\definecolor{black}{rgb}{0,0,0}
\definecolor{white}{rgb}{1,1,1}
\newcommand{\uniqq}{{\sf UniQQ}}
\newcommand{\inlsec}[1]{\smallskip\noindent\textbf{#1.}}
\begin{document}
	\setlength{\abovedisplayskip}{8pt}
	\setlength{\belowdisplayskip}{8pt}

\title{Unifying Qualitative and  Quantitative Safety Verification of DNN-Controlled  Systems}

\titlerunning{Unifying Qualitative and  Quantitative Safety Verification of DNN-Controlled  Systems}

\author{Dapeng Zhi\inst{1} \and
Peixin Wang$^\dag$\inst{2} \and
Si Liu\inst{3} \and Luke Ong\inst{2} \and Min Zhang\inst{1}}

\institute{East China Normal University \\
	\email{zhi.dapeng@163.com} \\ \email{zhangmin@sei.ecnu.edu.cn} \\
	 \and Nanyang Technological University \\
	\email{\{peixin.wang,luke.ong\}@ntu.edu.sg} 
\and ETH Zurich \\
\email{si.liu@inf.ethz.ch}
}


\maketitle      

\def\thefootnote{\dag}\footnotetext{Corresponding author \\}\def\thefootnote{\arabic{footnote}}   
\def\thefootnote{}\footnotetext{This work is a technical report for the paper with the same name to appear in the 36th International Conference on Computer Aided Verification (CAV 2024).}\def\thefootnote{\arabic{footnote}}

\begin{abstract}

The rapid advance of deep reinforcement learning techniques enables the oversight of safety-critical systems through the utilization of Deep Neural Networks (DNNs). 
This underscores the pressing need to promptly establish certified safety guarantees for such DNN-controlled systems.
Most of the existing verification approaches rely on qualitative approaches, predominantly employing reachability analysis. However, qualitative verification proves inadequate for DNN-controlled systems as their behaviors exhibit stochastic tendencies when operating in open and adversarial environments.
In this paper, we propose a novel framework for unifying both qualitative and quantitative safety verification problems of DNN-controlled systems. This is achieved by formulating the verification tasks as the synthesis of valid neural barrier certificates (NBCs).
Initially, the framework seeks to establish almost-sure safety guarantees through qualitative verification.
In cases where qualitative verification fails,
our quantitative verification method is invoked, yielding precise lower and upper bounds on probabilistic safety
across both infinite and finite time horizons.
To facilitate the synthesis of  NBCs, we  
introduce their $k$-inductive variants.  We also devise a simulation-guided approach for training NBCs, aiming to achieve tightness in computing precise certified lower and upper bounds.  
We prototype our approach into a tool called \uniqq\ and showcase its efficacy on four classic DNN-controlled systems. 

\end{abstract}

\section{Introduction}
\label{sec:intro}

The widespread adoption of deep reinforcement learning techniques has propelled advancements in autonomous systems, endowing them with adaptive decision-making capabilities by Deep Neural Networks (DNNs)~\cite{lillicrap2015continuous}.
Ensuring the safety of these DNN-controlled systems emerges as a critical concern, necessitating the provision of certified safety guarantees.
Formal methods, renowned for their rigorousness and automaticity in delivering verified safety assurances, stand as a promising means to address this concern.
However, 
most of the existing formal verification approaches rely on qualitative approaches, predominantly employing reachability analysis \cite{seshia2022toward}. 
Despite their significance, qualitative results fall short for DNN-controlled systems  due to the constant influence of various uncertainties from different sources, such as environment noises \cite{SAMDP}, unreliable sensors \cite{sensors}, and even malicious attacks \cite{optimal_attack}. 
When qualitative  verification fails, it becomes both desirable and practical to obtain quantitative guarantees, including quantified lower and upper bounds on the safety probabilities of the systems. 
This necessitates the use of quantitative verification engines \cite{seshia2022toward}.

Quantitative verification has proven its efficacy in enhancing the design and deployment across a variety of applications, including
autonomous systems \cite{kwiatkowska2022probabilistic}, self-adaptive systems \cite{calinescu2012self}, distributed  communication protocols \cite{verify_communication_protocols}, and probabilistic programs \cite{WinklerGK22}. 
These applications are commonly modeled using automata-based quantitative formalisms \cite{hahn20192019}, such as Markov chains, timed automata, and hybrid automata, and undergo verification using tools such as  Prism \cite{kwiatkowska2011prism} and \textsc{Storm} \cite{hensel2021probabilistic}. 
Nonetheless, the quantitative verification of DNN-controlled systems is challenging due to the incorporation of intricate and almost inexplicable  decision-making models by DNNs \cite{samek2021explaining}. 
Compounding the issue, the difficulty is amplified by the continuous and infinite state space, as well as the non-linear dynamics inherent in DNN-controlled systems.
First, building a faithful automata-based probabilistic model for a DNN-controlled system is challenging. This difficulty arises as one cannot predict the action a DNN might take until a specific state is provided, and exhaustively enumerating all continuous states is impractical.
Second, even if such a model is constructed under certain constraints, such as bounded steps \cite{bacci2020probabilistic} and state abstractions \cite{jin2022trainify}, verification is susceptible to state exploration issues---a well-known problem in model checking\cite{tschaikowski2014tackling}. For instance, the verification process can take up to 50  minutes for just 7 steps~\cite{bacci2020probabilistic}.

Leveraging barrier certificates (BCs) for verification emerges as a promising technique for formally establishing the safety of non-linear and stochastic systems \cite{prajna2007framework,lavaei2022safety}.
A BC partitions the state space of the system into two parts, ensuring that all trajectories starting from a given initial set, located within one side of the BC, cannot reach a given set of states (deemed to be unsafe), located on the other side, almost surely (i.e., with probability $1$) or with probability at least $p\in [0,1)$.
Once a BC is computed, it can be used to certify systems' safety properties either qualitatively or quantitatively. Recently, studies have shown that BCs can be implemented and trained in neural forms  called Neural Barrier Certificates (NBCs). NBCs facilitate the synthesis of BCs and improve their expressiveness  \cite{zhao2020synthesizing,mathiesen2022safety,xia2022accelerated,meng2021reactive,AbateAEGP21}. A relevant survey is delegated to \cite{dawson2023safe}.

In this paper, we propose a unified framework for both qualitatively and quantitatively verifying the safety of DNN-controlled systems by leveraging NBCs. 
The key idea is to reduce both qualitative and quantitative verification problems into a cohesive synthesis task of their respective NBCs. 
Specifically, we first seek to establish almost-sure safety guarantees through qualitative verification. In cases where qualitative
verification fails, our quantitative verification method is invoked, yielding precise
lower and upper bounds on probabilistic safety across both infinite and finite time
horizons.

We also establish relevant theoretical results. In qualitative verification, we prove that an NBC satisfying corresponding conditions serves as a qualitative safety certificate. In the quantitative counterpart, we establish that
valid NBCs can be utilized to calculate certified upper and lower bounds on the probabilistic safety of systems, encompassing both infinite and finite time horizons.
For infinite time horizons, as the lower bounds on probabilistic safety approach zero, indicating a decreasing trend in safety probabilities along the time horizon, 
we provide both linearly and exponentially decreasing lower and upper bounds on the safety probabilities over finite time horizons.

To facilitate the synthesis of valid NBCs, we further relax their constraints by defining their $k$-inductive variants ~\cite{AnandM0Z22}. This necessitates the conditions to be inductive for $k$-compositions of the transition relation within a specified bound $k$~\cite{DBLP:conf/sas/Brain0KS15}. Consequently, synthesizing a qualified NBC becomes more manageable under these $k$-inductive conditions, while ensuring safety guarantees. 
As valid NBCs are not unique and yield different certified bounds, we devise a simulation-guided approach to train potential NBCs. This approach  aims to enhance their capability to produce more precise certified bounds.
Specifically, we estimate safety probabilities through simulation.
The differences between the simulation results and the bounds provided by potential NBCs are incorporated into the loss function. 
This integration can yield more precise certified bounds after potential NBCs are successfully validated.

We prototype our approach into a tool, called \uniqq,
and apply it to four classic DNN-controlled problems. 
The experimental results showcase the effectiveness of our unified verification approach in delivering both qualitative and quantitative safety guarantees across diverse noise scenarios. 
Additionally, the results underscore the efficacy of $k$-inductive variants in reducing verification overhead, by 25\% on average, and that of our simulation-based training method in yielding tighter safety bounds, with an up to 47.5\% improvement over ordinary training approaches.

\inlsec{Contributions} Overall, we make the following contributions.\begin{enumerate}
    \item We present a novel framework that unifies both  qualitative and quantitative safety verification of  DNN-controlled systems by reducing these verification problems into the cohesive task of synthesizing NBCs.
\item We establish relevant theoretical results,  including new constraints of NBCs for both qualitative and quantitative safety verification and the associated  lower and upper bounds for safety probabilities in both linear and exponential forms.
    \item To accelerate training, we relax the constraints of NBCs by introducing their $k$-inductive variants.  We also  present a simulation-guided  approach designed to train potential NBCs to compute safety bounds as tightly as possible.
    \item We develop a prototype of our approach,  
    showcasing its efficacy across four classic DNN-controlled systems.
\end{enumerate}

\noindent All omitted proofs and supplementary experimental results are given in the Appendix.

\section{Preliminaries}
\label{sec:prelim}

Let $\Nset$, $\Zset$, and $\Rset$ be the sets of natural numbers, integers, and real numbers, respectively.

\subsection{DNN-controlled Systems}
We consider DNN-controlled systems where the control policies are implemented by deep neural networks and suppose the networks are trained for specific tasks. 
Formally, a DNN-controlled system is a tuple $M=(S,S_0,A,\pi,f,R)$, where $S\subseteq \Rset^n$ is the set of (possibly continuous and infinite) system states, $S_0\subseteq S$ is the set of initial  states, $A$ is the set of actions, $\pi:S\rightarrow A$ is the trained policy implemented by a neural network, $f:S\times A\rightarrow S$ is the system dynamics, and $R:S\times A\times S\rightarrow \Rset$ is the reward function.

\inlsec{Trajectories} A trained DNN-controlled system $M=(S,S_0,A,\pi,f,R)$ is a decision-making system that continuously interacts with the environment. 
At each time step $t\in\Nset_0$, it observes a state $s_t$ and feeds $s_t$ into its planted NN to compute the optimal action $a_t=\pi(s_t)$ that shall be taken. Action $a_t$ is then performed, which transits $s_t$ into the next state $s_{t+1}=f(s_t,a_t)$ via the system dynamics $f$ and earns a reward $r_{t+1} = R(s_t,a_t,s_{t+1})$. Given an initial state $s_0 \in S_0$, a sequence of states generated during interaction is called a \emph{trajectory}, denoted as $\omega=\{s_t\}_{t\in\Nset_0}$.  To ease the notation, we denote by $\omega_t$ the $t$-th element of $\omega$, i.e., $\omega_t=s_t$, 
and by $\Omega$ the set of all trajectories.

\inlsec{State Perturbations}
As DNN-controlled systems collect state information via
sensors, uncertainties inevitably originate from sensor errors,
equipment inaccuracy, or even adversarial attacks~\cite{SAMDP,vulnerability_DRL}.
Therefore, the observed states of the systems can be perturbed and actions are computed based on the perturbed states. Formally, an observed state at time step $t$ is $\hat{s}_t:=s_t+\delta_t$ where $\delta_t\sim \mu$ is a random noise and $\mu$ is a probability distribution over $\Rset^n$. We denote by $W:=\supp{\mu}$ the support of $\mu$.
Due to perturbation, the actual successor state is 
$s_{t+1}:=f(s_t,\hat{a}_t)$ with $\hat{a}_t:=\pi(\hat{s}_t)$ and the reward is $r_{t+1}:=R(s_t,\hat{a}_t,s_{t+1})$. Note that the successor state and the reward are calculated according to the actual state and the action on the perturbed state, and this update is common~\cite{SAMDP}. 
We then denote a DNN-controlled system $M$ perturbed by a noise distribution $\mu$ as $M_\mu=(S,S_0,A,\pi,f,R,\mu)$.

\inlsec{Assumptions} Given a DNN-controlled system $M=(S,S_0,A,\pi,f,R)$, we assume that the state space $S$ is compact in the Euclidean topology of $\Rset^n$, its system dynamics $f$ and trained policy $\pi$ are Lipschitz continuous. This assumption is common in control theory \cite{reach_avoid_martingale}. For perturbation, we assume that the noise distribution $\mu$ either has bounded support or is a product of independent univariate distributions.

\inlsec{Probability Space} Given a DNN-controlled system $M_\mu=(S,S_0,A,\pi,f,R,\mu)$, 
for each initial state $s_0\in S_0$, there exists a \emph{probability space} $(\Omega_{s_0},\mathcal{F}_{s_0},\probm_{s_0})$ 
such that $\Omega_{s_0}$ is the set of all trajectories  starting from $s_0$ by the environmental interaction, $\mathcal{F}_{s_0}$ is a $\sigma$-algebra over $\Omega_{s_0}$ (i.e., a collection of subsets of $\Omega_{s_0}$ that contains the empty set $\emptyset$ and is closed under complementation and countable union), and $\probm_{s_0}:\mathcal{F}_{s_0}\rightarrow [0,1]$ is a probability measure on $\mathcal{F}_{s_0}$. We denote 
the expectation operator in this probability space by $\expv_{s_0}$.

\subsection{Barrier Certificate and its Neural Implementation}

Barrier certificates (BCs) are powerful tools to certify the safety of continuous-time dynamical systems. 
In the following we describe the discrete-time BCs which this work is based upon. We refer readers to~\cite{PrajnaJ04,prajna2005necessity} for details about continuous-time BCs.

\begin{definition}[Discrete-time Barrier Certificates]\label{def:discrete-BC}
    Given a DNN-controlled system $M=(S,S_0,A,f,\pi,R)$ with an unsafe set $S_u\subseteq S$ such that $S_u\cap S_0=\emptyset$.  A \emph{discrete-time barrier certificate} is a real-valued function $B:S\rightarrow \Rset$ such that for some constant $\lambda\in (0,1]$, it holds that:
    \begin{align}
        & B(s)\le 0 \ &\text{for all }s\in S_0, \label[condition]{eq:dis-bc-initial} \\
       & B(s)>0 \ & \text{for all } s\in S_u,  \label[condition]{eq:dis-bc-unsafe} \\
       & B(f(s,\pi(s)))-B(s)+\lambda\cdot B(s)\le 0  & \text{for all } s \in S.  \label[condition]{eq:dis-bc-decrease}   
    \end{align}
\end{definition}

If there exists such a BC for the system $M$, then $M$ is safe, i.e., the system cannot reach a state in the unsafe set $S_u$ from the initial set $S_0$. The intuition is that: 
\Cref{eq:dis-bc-decrease} implies that for any $s\in S$ such that $B(s)\le 0$, $B(f(s,\pi(s)))\le 0$.
Since \Cref{eq:dis-bc-initial} asserts that the initial value of $B$ is not greater than zero, any trajectory $\omega\in \Omega_{s_0}$ starting from an initial state $s_0\in S_0$ cannot enter the unsafe set $S_u$, where $B(s) > 0$ (see \Cref{eq:dis-bc-unsafe}), thereby ensuring the safety of the system.

Finding a BC is restricted to the expressiveness of templates. For example, even if there exists a function satisfying \Cref{eq:dis-bc-initial,eq:dis-bc-unsafe,eq:dis-bc-decrease}, it may be not found under polynomial forms.
Recent work \cite{zhao2020synthesizing,PeruffoAA21,ZHAO_NBC} proposes a neural implementation of BCs as deep neural networks, leveraging the expressiveness of neural networks.
The neural implementation of a BC is called a neural barrier certificate (NBC), which consists of training and validation. First, a learner trains a neural network (NN) to fit over a finite set of samples the conditions for a BC.
After training, an NBC is then checked whether it meets the conditions. This is achieved by a verifier using SMT solvers~\cite{zhao2020synthesizing,PeruffoAA21} or other methods like Sum-of-Squares programming~\cite{ZHAO_NBC}.
If the validation result is false, a set of counterexamples can be generated for future training. This iteration is repeated until a trained candidate is validated or a given timeout is reached. This training and validation iteration is called  
CounterExample-Guided Inductive Synthesis (CEGIS)~\cite{AbateDKKP18}.

\section{Verification Problem and Our Framework}
\label{sec:overview}

\subsection{Problem Statement}

We consider the safety of DNN-controlled systems from both qualitative and quantitative perspectives. 
Below we fix a DNN-controlled system $M_\mu=(S,S_0,A,\pi,f,R,\mu)$ and an unsafe set $S_u\subseteq S$ such that $S_0\cap S_u =\emptyset$ throughout the paper.

\begin{definition}[Almost-Sure Safety]\label{def:almost-safe}
 The system $M_\mu$ is \emph{almost-surely (a.s.) safe}, 
 if a.s. no trajectories starting from any initial state $s_0\in S_0$ enter $S_u$, i.e., 
    \begin{equation}
    \forall s_0\in S_0.\omega\in\Omega_{s_0} \Longrightarrow \omega_t \not\in S_u \ \forall t\in\Nset.\notag 
    \end{equation}
\end{definition}
This almost-sure safety is a qualitative property and we call it ``almost-sure'' due to the stochasticity from state perturbations. Since the almost-sure safety does not always exist with the increase of state perturbations, we propose the notion of probabilistic safety over infinite time horizons.

\begin{definition}[Probabilistic Safety over Infinite Time Horizons]\label{def:inf-prob-safe}
 The system $M_\mu$ is \emph{probabilistically safe over infinite time horizons} 
with  $[l_{\mathrm{inf}},u_{\mathrm{inf}}]$, where $0\leq  l_{\mathrm{inf}}\leq u_{\mathrm{inf}}\leq 1$,  
 if the probability of not entering $S_u$ falls into  $[l_{\mathrm{inf}},u_{\mathrm{inf}}]$ for all the  trajectories from any initial state $s_0\in S_0$, i.e.,  
	\begin{equation}
		\forall s_0\in S_0.\probm_{s_0}\left[\{\omega\in\Omega_{s_0} \mid \omega_t\not\in S_u\ \text{for all }t\in\Nset \}\right]\in [l_{\mathrm{inf}},u_{\mathrm{inf}}]. \notag 
	\end{equation}
\end{definition}

The probabilistic safety is a quantitative property and $l_{\mathrm{inf}},u_{\mathrm{inf}}$ are called \textit{lower} and \textit{upper} bounds on the safety probabilities over infinite time horizons, respectively. Once both bounds equal one, it implies the almost-sure safety. 
When the lower bound $l_{\mathrm{inf}}=0$, indicating that the system reaches the unsafe region at some time step $T<\infty$, it is significant to figure out how the safety probability decreases over the finite time horizon. Therefore, we present the probabilistic safety over finite time horizons as follows.

\begin{definition}[Probabilistic Safety over Finite Time Horizons]\label{def:fin-prob-safe}
The system $M_\mu$ is \emph{probabilistically safe over a finite time horizon $T\in [0,\infty)$} 
with $[l_{\mathrm{fin}},u_{\mathrm{fin}}]$, where 
$0\leq l_{\mathrm{fin}} \leq u_{\mathrm{fin}}\leq 1$, 
 if the probability of not entering $S_u$ within $T$ falls into $[l_{\mathrm{fin}},u_{\mathrm{fin}}]$ for all the trajectories starting from any initial state $s_0\in S_0$, 
 \begin{equation}
		\forall s_0\in S_0.\probm_{s_0}\left[\{ \omega\in\Omega_{s_0} \mid \omega_t\not\in S_u\ \text{for all }t\le T \}\right]\in [l_{\mathrm{fin}},u_{\mathrm{fin}}]. \notag 
	\end{equation}
\end{definition}

\vskip 2pt 
\noindent\textbf{Safety Verification Problems of DNN-Controlled Systems.} 
Consider a DNN-controlled system $M_\mu=(S,S_0,A,\pi,f,R,\mu)$ with an unsafe set $S_u\in S$ such that $S_0\cap S_u=\emptyset$. 
We formulate the qualitative and quantitative safety verification problems of $M_\mu$ as follows: 
\begin{enumerate}
    \item \textbf{Qualitative Verification (QV):} To answer  whether $M_\mu$ is almost-surely safe. 
    \item \textbf{Quantitative Verification over Infinite Time Horizons (QVITH):} To compute certified  lower and upper bounds $l_{\mathrm{inf}},u_{\mathrm{inf}}$ on the safety probability of $M_\mu$ over infinite time horizons. 
    \item \textbf{Quantitative Verification over Finite Time Horizons (QVFTH):} 
    To compute certified lower and upper bounds $l_{\mathrm{fin}},u_{\mathrm{fin}}$ on the safety probability of $M_\mu$ over a finite time horizon $T$.
\end{enumerate}

\subsection{Overview of Our Framework }
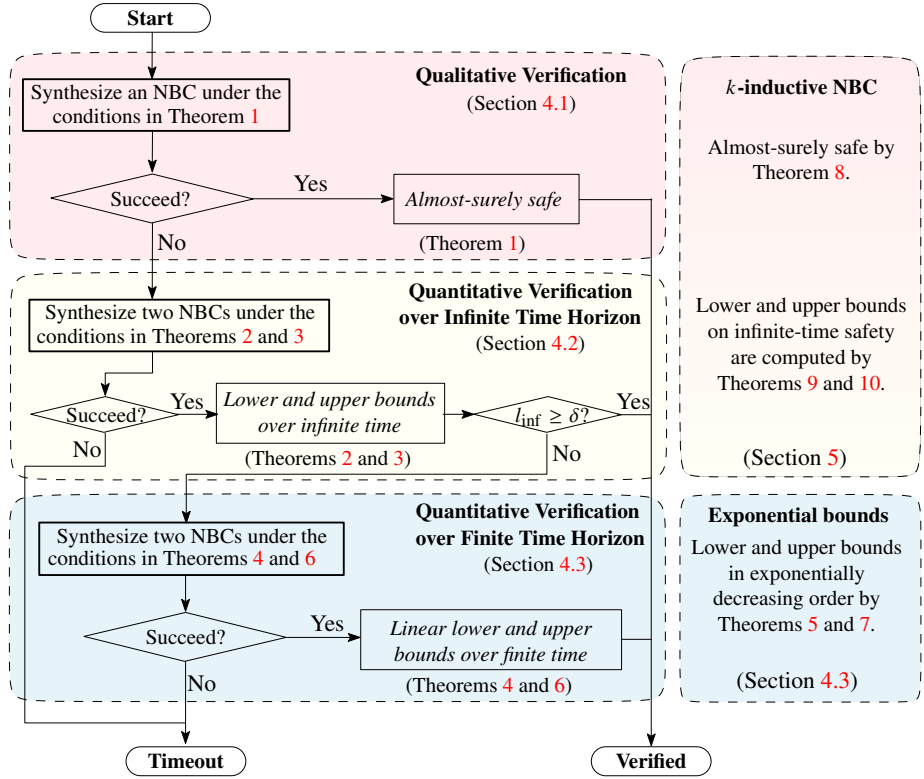
\begin{figure}[t]
\scalebox{0.89}{
\begin{tikzpicture}[ipe stylesheet]
	
	\filldraw[shift={(275.055, 711.998)}, xscale=1.0212, yscale=0.9166, ipe dash dashed, fill=pink!30]
	(0, 0)
	.. controls (112, 0) and (120, 0) .. (124, -2.6667)
	.. controls (128, -5.3333) and (128, -10.6667) .. (128, -24)
	.. controls (128, -37.3333) and (128, -58.6667) .. (128, -72)
	.. controls (128, -85.3333) and (128, -90.6667) .. (125.3333, -93.3333)
	.. controls (122.6667, -96) and (117.3333, -96) .. (74.6667, -96)
	.. controls (32, -96) and (-48, -96) .. (-90.6667, -96)
	.. controls (-133.3333, -96) and (-138.6667, -96) .. (-141.3333, -93.3333)
	.. controls (-144, -90.6667) and (-144, -85.3333) .. (-144, -72)
	.. controls (-144, -58.6667) and (-144, -37.3333) .. (-144, -24)
	.. controls (-144, -10.6667) and (-144, -5.3333) .. (-140, -2.6667)
	.. controls (-136, 0) and (-128, 0) .. (0, 0);
	\filldraw[shift={(465.802, 710.228)}, xscale=0.3751, yscale=1.8566, ipe dash dashed, top  color=pink!30, bottom color=lightyellow!30]
	(0, 0)
	.. controls (112, 0) and (120, 0) .. (124, -2.6667)
	.. controls (128, -5.3333) and (128, -10.6667) .. (128, -24)
	.. controls (128, -37.3333) and (128, -58.6667) .. (128, -72)
	.. controls (128, -85.3333) and (128, -90.6667) .. (125.3333, -93.3333)
	.. controls (122.6667, -96) and (117.3333, -96) .. (74.6667, -96)
	.. controls (32, -96) and (-48, -96) .. (-90.6667, -96)
	.. controls (-133.3333, -96) and (-138.6667, -96) .. (-141.3333, -93.3333)
	.. controls (-144, -90.6667) and (-144, -85.3333) .. (-144, -72)
	.. controls (-144, -58.6667) and (-144, -37.3333) .. (-144, -24)
	.. controls (-144, -10.6667) and (-144, -5.3333) .. (-140, -2.6667)
	.. controls (-136, 0) and (-128, 0) .. (0, 0);
	\filldraw[shift={(275.449, 618.693)}, xscale=1.0212, yscale=0.9166, ipe dash dashed, fill=lightyellow!30]
	(0, 0)
	.. controls (112, 0) and (120, 0) .. (124, -2.6667)
	.. controls (128, -5.3333) and (128, -10.6667) .. (128, -24)
	.. controls (128, -37.3333) and (128, -58.6667) .. (128, -72)
	.. controls (128, -85.3333) and (128, -90.6667) .. (125.3333, -93.3333)
	.. controls (122.6667, -96) and (117.3333, -96) .. (74.6667, -96)
	.. controls (32, -96) and (-48, -96) .. (-90.6667, -96)
	.. controls (-133.3333, -96) and (-138.6667, -96) .. (-141.3333, -93.3333)
	.. controls (-144, -90.6667) and (-144, -85.3333) .. (-144, -72)
	.. controls (-144, -58.6667) and (-144, -37.3333) .. (-144, -24)
	.. controls (-144, -10.6667) and (-144, -5.3333) .. (-140, -2.6667)
	.. controls (-136, 0) and (-128, 0) .. (0, 0);
	\filldraw[shift={(274.947, 524.762)}, xscale=1.0212, yscale=0.9166, ipe dash dashed, fill=lightblue!30]
	(0, 0)
	.. controls (112, 0) and (120, 0) .. (124, -2.6667)
	.. controls (128, -5.3333) and (128, -10.6667) .. (128, -24)
	.. controls (128, -37.3333) and (128, -58.6667) .. (128, -72)
	.. controls (128, -85.3333) and (128, -90.6667) .. (125.3333, -93.3333)
	.. controls (122.6667, -96) and (117.3333, -96) .. (74.6667, -96)
	.. controls (32, -96) and (-48, -96) .. (-90.6667, -96)
	.. controls (-133.3333, -96) and (-138.6667, -96) .. (-141.3333, -93.3333)
	.. controls (-144, -90.6667) and (-144, -85.3333) .. (-144, -72)
	.. controls (-144, -58.6667) and (-144, -37.3333) .. (-144, -24)
	.. controls (-144, -10.6667) and (-144, -5.3333) .. (-140, -2.6667)
	.. controls (-136, 0) and (-128, 0) .. (0, 0);
	\filldraw[shift={(465.848, 524.546)}, xscale=0.3751, yscale=0.9166, ipe dash dashed, fill=lightblue!30]
	(0, 0)
	.. controls (112, 0) and (120, 0) .. (124, -2.6667)
	.. controls (128, -5.3333) and (128, -10.6667) .. (128, -24)
	.. controls (128, -37.3333) and (128, -58.6667) .. (128, -72)
	.. controls (128, -85.3333) and (128, -90.6667) .. (125.3333, -93.3333)
	.. controls (122.6667, -96) and (117.3333, -96) .. (74.6667, -96)
	.. controls (32, -96) and (-48, -96) .. (-90.6667, -96)
	.. controls (-133.3333, -96) and (-138.6667, -96) .. (-141.3333, -93.3333)
	.. controls (-144, -90.6667) and (-144, -85.3333) .. (-144, -72)
	.. controls (-144, -58.6667) and (-144, -37.3333) .. (-144, -24)
	.. controls (-144, -10.6667) and (-144, -5.3333) .. (-140, -2.6667)
	.. controls (-136, 0) and (-128, 0) .. (0, 0);
	\node[ipe node, font=\small]
	at (300, 699.443) {\textbf{Qualitative Verification}};
	\draw[shift={(188.492, 660.313)}, cm={1.2951,-0.3212,1.3193,0.3118,(0,0)}]
	(0, 0) rectangle (32.908, -32.591);
	\node[ipe node, font=\small]
	at (170.675, 646.938) {Succeed?};
	\draw[shift={(290.717, 660.196)}, xscale=0.9799, yscale=0.655]
	(0, 0) rectangle (80, -32);
	\node[ipe node, font=\small]
	at (295.714, 646.753) {\textit{Almost-surely safe}};
	\draw[shift={(136.369, 607.07)}, xscale=1.4104, yscale=0.6843,line width=0.8]
	(0, 0) rectangle (90, -32);
	\node[ipe node, font=\small]
	at (143, 598.73) {Synthesize two NBCs under the };

	\node[ipe node, font=\small]
	at (137, 692.005) {Synthesize an NBC under the };

	\draw[shift={(140.546, 512.629)}, xscale=1.4652, yscale=0.6951,line width=0.8]
	(0, 0) rectangle (88, -32);

	\node[ipe node, anchor=north west]
	at (278.484, 471.18) {
		\begin{minipage}{106.652bp}\small\kern0pt
			\centering
			\textit{Linear lower and upper bounds over finite time}
		\end{minipage}
	};
	\node[ipe node, font=\small]
	at (278.062, 688.555) {\textbf{}};
	\node[ipe node, font=\small]
	at (321.653, 687.624) {(Section \ref{subsec:quali})};
	\node[ipe node, font=\small]
	at (297, 607.738) {\textbf{Quantitative Verification}};
	\node[ipe node, font=\small]
	at (293, 596.577) {\textbf{over Infinite Time Horizon }};
	\node[ipe node, font=\small]
	at (300, 514.437) {\textbf{Quantitative Verification}};
	\node[ipe node, font=\small]
	at (300, 503.276) {\textbf{over Finite Time Horizon }};
	\node[ipe node, font=\small]
	at (301.45, 628.266) {(Theorem \ref{thm:almost-safe})};
	\node[ipe node, font=\small]
	at (325.442, 585.549) {(Section \ref{subsec:quanti})};
\node[ipe node, font=\small]
at (227.609, 536.9) {(Theorems \ref{thm:inf-lower-safe} and  \ref{thm:inf-upper-safe})};
\node[ipe node, font=\small]
at (328.845, 492.062) {(Section \ref{subsec:quanti2})};
\node[ipe node, font=\small]
at (295.768, 440.794) {(Theorems \ref{thm:fin-lower-safe-linear} and  \ref{thm:fin-upper-safe-linear})};
\draw[shift={(276.664, 475.685)}, xscale=1.3806, yscale=0.7938]
(0, 0) rectangle (80, -32);
\draw[shift={(188.489, 678.515)}, xscale=0.8454, yscale=1.1092, -{ipe pointed[ipe arrow small]}]
(0, 0) -- (-0.119, -15.589);

\draw[shift={(188.489, 639.619)}, xscale=1.6981, yscale=0.9542, -{ipe pointed[ipe arrow small]}]
(0, 0) -- (-0.063, -33.566);

\draw[shift={(188.489, 585.223)}, xscale=1.1743, yscale=1.5765, -{ipe pointed[ipe arrow small]}]
(0, 0)
-- (0.03, -5.856) -- (-17.03, -5.856) -- (-17.03, -11.856);

\draw[shift={(355.489, 550.223)}, xscale=1.1743, yscale=1.5765, -{ipe pointed[ipe arrow small]}]
(0, 0)
-- (0, -10.856) -- (-130.03, -10.856) -- (-130.03, -24);

\draw[shift={(168.489, 550.687)}, xscale=1.1743, yscale=1.5765, -]
(0, 0)
-- (0, -9) -- (-29, -9) -- (-29, -78) -- (29,-78);


\draw[shift={(202.5, 490.612)}, yscale=0.8753, -{ipe pointed[ipe arrow small]}]
(0, 0)
-- (0, -17.765);
\draw[shift={(202.5, 453.818)}, xscale=0, yscale=2.3255, -{ipe pointed[ipe arrow small]}]
(0, 0) -- (0, -14.953);

\draw[shift={(215.543, 571.641)}, xscale=1.3806, yscale=0.7938]
(0, 0) rectangle (70, -32);

\node[ipe node, anchor=north west]
at (210.373, 568.139) {
	\begin{minipage}{106.28bp}\small\kern0pt
		\centering
		\textit{Lower and upper bounds over infinite time}
	\end{minipage}
};

\draw[shift={(168.489, 566)}, cm={1.2951,-0.3212,1.3193,0.3118,(0,0)}]
(0, 0) rectangle (23.908, -23.908);
\draw[shift={(356, 566))}, cm={1.2951,-0.3212,1.3193,0.3118,(0,0)}]
(0, 0) rectangle (23.908, -23.908);

\node[ipe node, font=\small]
at (342, 555.336) {$l_{\mathrm{inf}}\geq\delta?$};
\node[ipe node, font=\small]
at (152, 555.336) {Succeed?};

\draw[shift={(202.489, 474.543)}, cm={1.2951,-0.3212,1.3193,0.3118,(0,0)}]
(0, 0) rectangle (32.908, -32.591);

\node[ipe node, font=\small]
at (185.625, 461.169) {Succeed?};
\draw[shift={(229.919, 649.823)}, xscale=0.944, yscale=1.3858, -{ipe pointed[ipe arrow small]}]
(0, 0)
-- (64, 0);

\draw[shift={(232.536, 558.5)}, xscale=0.6574, yscale=47.0951, -{ipe pointed[ipe arrow small]}]
(-52, 0)
-- (-25, 0);

\draw[shift={(232.536, 558.5)}, xscale=0.6574, yscale=47.0951, -{ipe pointed[ipe arrow small]}]
(120, 0)
-- (140, 0);

\draw[shift={(245.231, 463.947)}, xscale=0.7086, yscale=3.09, -{ipe pointed[ipe arrow small]}]
(0, 0) -- (44, 00);
\draw[shift={(188.386, 720.428)}, xscale=1.2792, yscale=0.587, -{ipe pointed[ipe arrow small]}]
(0, 0)
-- (-0.063, -33.566);
\node[ipe node]
at (248.252, 653.142) {Yes};
\node[ipe node]
at (195.87, 560.822) {Yes};
\node[ipe node]
at (384.87, 560.822) {Yes};
\node[ipe node]
at (357.87, 538.822) {No};
\node[ipe node]
at (255.206, 466.694) {Yes};
\node[ipe node]
at (203.384, 441.137) {No};
\node[ipe node]
at (154.701, 540.278) {No};
\node[ipe node]
at (190.565, 628.303) {No};

\draw[shift={(399.674, 649.944)}, xscale=1.0922, yscale=0.7247, -{ipe pointed[ipe arrow small]}]
(0, 0) -- (0, -320);
\draw[shift={(369.049, 649.467)}, xscale=0.969]
(0, 0) -- (32, 0);
\draw[shift={(385.185, 558.319)}, xscale=0.4514, yscale=0.2583]
(0, 0) -- (32, 0);

\draw[shift={(387.201, 463.144)}, xscale=0.3935, yscale=-1.2281]
(0, 0)-- (32, 0);

\node[ipe node, font=\small]
at (139.5, 588.605) {conditions in Theorems \ref{thm:inf-lower-safe} and  \ref{thm:inf-upper-safe}};
\node[ipe node, font=\small]
at (148, 503.873) {Synthesize two NBCs under the };
\node[ipe node, font=\small]
at (144.257, 494.742) {conditions in Theorems  \ref{thm:fin-lower-safe-linear} and  \ref{thm:fin-upper-safe-linear}};
\draw[shift={(133.643, 700.496)}, xscale=1.4104, yscale=0.6843,line width=0.8]
(0, 0) rectangle (80, -32);
\node[ipe node, font=\small]
at (146.257, 682.023) {conditions in Theorem \ref{thm:almost-safe}};

\node[ipe node, anchor=north west]
at (415.306, 518.983) {
	\begin{minipage}{91.997bp}\small\kern0pt
		\textbf{\hspace{3mm}Exponential bounds}\\\vspace{-3mm}

\centering		Lower and upper bounds in exponentially decreasing order by 
Theorems \ref{thm:fin-lower-safe-exp} and \ref{thm:fin-upper-safe-exp}.
	\end{minipage}
};
\node[ipe node, anchor=north west]
at (417.308, 700.263) {
	\begin{minipage}{91.997bp}\small\kern0pt
		\centering \textbf{$k$-inductive NBC}\\
		\vspace{5mm}
		
		Almost-surely safe by Theorem \ref{thm:almost-safe-induc}. 
		
            \vspace{16mm}
		Lower and upper bounds on infinite-time safety are computed by Theorems \ref{thm:k-ind-lower} and  \ref{thm:k-induc-upper}.
	\end{minipage}
};
\node[ipe node]
at (438.643, 536.848) {(Section \ref{sec:k-induction})};
\node[ipe node]
at (435.83, 442.353) {(Section  \ref{subsec:quanti2})};
\node at (187.659, 726.473) [draw, rounded rectangle, minimum width=2cm] {\textbf{Start}};
\node at (202.659, 411.473) [draw, rounded rectangle, minimum width=2cm] {\textbf{Timeout}};
\node at (399.659, 411.473) [draw, rounded rectangle, minimum width=2cm] {\textbf{Verified}};
\end{tikzpicture}}
	\vspace{-4mm}
	\caption{Our unified verification framework.}
	\label{fig:unified-framework}
 \vspace{-3mm}
\end{figure}

We first provide an overview of our unified framework designed to address the three safety verification problems.
Our framework builds on two fundamental results: 
(i) all the problems can be reduced to the task of defining BCs under specific conditions, and the defined BCs can be used to certify almost-sure safety for \textbf{QV} or safety bounds for \textbf{QVITH} and \textbf{QVFTH}, respectively, and (ii) these BCs can be implemented and trained in neural forms. The fundamental results are presented in \Cref{sec:bc,sec:k-induction,sec:neural}, respectively.

The synthesis of NBCs has a preset timeout threshold, i.e., it will fail if NBCs cannot be successfully synthesized within the time threshold. 
The procedure of our framework is sketched in \Cref{fig:unified-framework}, which consists of the following three steps:

\vskip 1mm\noindent
 \textbf{Step 1: QV}. We try to synthesize an NBC satisfying conditions in~\Cref{thm:almost-safe}. If such an NBC is successfully synthesized, we can conclude that the system $M_\mu$ is almost-surely safe by~\Cref{thm:almost-safe} and  finish the verification.  Alternatively, we can resort to synthesizing a $k$-inductive NBC in \Cref{thm:almost-safe-induc} whose conditions are weaker than those in \Cref{thm:almost-safe}. If the synthesis fails, we proceed to quantitative verification.

\vskip 1mm\noindent
 \textbf{Step 2: QVITH}. We try to synthesize two NBCs  under the conditions in \Cref{thm:inf-lower-safe,thm:inf-upper-safe}, respectively. If the synthesis fails, a timeout will be reported and the process will be terminated.
 Otherwise, we can obtain the lower bound $l_{\mathrm{inf}}$ and the upper bound $u_{\mathrm{inf}}$ on probabilistic safety over infinite time horizons.  Alternatively, we can choose to synthesize the $k$-inductive variants of NBCs in \Cref{thm:k-ind-lower,thm:k-induc-upper}. If the lower bound $l_{\mathrm{inf}}$ is no less than some preset safety threshold $\delta\in (0,1)$, we terminate the verification.  The purpose of setting $\delta$ is to prevent the verification from returning a meaningless lower bound such as $0$. If $l_{\mathrm{inf}}$ is less than $\delta$, we resort to computing safety bounds over finite time horizons. 

\vskip 1mm\noindent \textbf{Step 3: QVFTH}.  We try to synthesize two NBCs satisfying conditions in \Cref{thm:fin-lower-safe-linear,thm:fin-upper-safe-linear}, respectively. If the synthesis fails,  a timeout will be reported and the verification will terminate. Otherwise, we can compute  the linear lower and upper bounds on probabilistic safety over finite time horizons according to the synthesized NBCs. Alternatively, we can choose to synthesize two NBCs satisfying conditions in \Cref{thm:fin-lower-safe-exp,thm:fin-upper-safe-exp} to achieve exponential bounds, which might be tighter than linear ones.

\section{Qualitative and Quantitative Safety Verification}
\label{sec:bc}

In this section, we reduce all three safety verification problems of DNN-controlled systems into a cohesive problem of defining corresponding BCs. 
We establish specific conditions for candidate BCs and provide formulas for computing lower and upper bounds for quantitative verification based on the defined BCs.

\subsection{Qualitative Safety Verification} \label{subsec:quali}

\begin{theorem}[Almost-Sure Safety]\label{thm:almost-safe}
Given an $M_\mu$ with an initial set $S_0$ and an unsafe set $S_u$, if there exists a barrier certificate $B:S\rightarrow \Rset$ such that for some constant $\lambda\in (0,1]$, the following conditions hold:
     \begin{align}
       & B(s)\le 0 \ &\text{for all }s\in S_0, \label{eq:thm-almost-initial} \\
       & B(s)>0 \ & \text{for all } s\in S_u,  \label{eq:thm-almost-unsafe} \\
       & B(f(s,\pi(s+\delta)))-B(s)+\lambda\cdot B(s)\le 0  & \text{for all } (s,\delta)\in S\times W,  \label[condition]{eq:thm-almost-decrease} 
    \end{align}
    then $M_\mu$ is almost-surely safe, i.e., $\forall s_0\in S_0. \  \omega\in\Omega_{s_0} \Longrightarrow \omega_t \not\in S_u \ \forall t\in\Nset.$
\end{theorem}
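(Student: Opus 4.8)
The plan is to show that the sublevel set $\mathcal{I} := \{s \in S : B(s) \le 0\}$ is forward-invariant under the perturbed closed-loop dynamics for \emph{every} noise value in $W$, and then lift this deterministic invariance to an almost-sure statement over trajectories. This is the same reasoning as the informal intuition given after \Cref{def:discrete-BC}, with the state perturbation $\delta$ now quantified over $W = \supp{\mu}$.

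First I would establish one-step invariance: for any $s \in \mathcal{I}$ and any $\delta \in W$, the successor $s' = f(s,\pi(s+\delta))$ again lies in $\mathcal{I}$. Rewriting \Cref{eq:thm-almost-decrease} as $B(f(s,\pi(s+\delta))) \le (1-\lambda)\,B(s)$ and using $\lambda \in (0,1]$, hence $1-\lambda \ge 0$, we get that $B(s) \le 0$ implies $(1-\lambda)B(s) \le 0$, so $B(s') \le 0$. Next I would run an induction on $t$ along any fixed trajectory $\omega \in \Omega_{s_0}$ whose noise realization $(\delta_0,\delta_1,\dots)$ lies entirely in $W$: the base case $B(\omega_0) = B(s_0) \le 0$ is \Cref{eq:thm-almost-initial} applied to $s_0 \in S_0$, and the inductive step is precisely the one-step invariance above applied to $\omega_{t+1} = f(\omega_t,\pi(\omega_t+\delta_t))$. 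This yields $B(\omega_t) \le 0$ for all $t \in \Nset_0$. Since \Cref{eq:thm-almost-unsafe} forces $B(s) > 0$ on $S_u$, it follows that $\omega_t \notin S_u$ for all $t$ along every such trajectory.

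Finally I would convert this into the probability-one statement. Because $W = \supp{\mu}$, we have $\mu(W) = 1$, so each noise variable lands in $W$ almost surely; by countable additivity the event $E := \{\omega \in \Omega_{s_0} : \delta_t(\omega) \in W \text{ for all } t \in \Nset_0\}$ satisfies $\probm_{s_0}(E) = 1$. The previous step shows $E \subseteq \{\omega \in \Omega_{s_0} : \omega_t \notin S_u \text{ for all } t\}$, so this latter event also has probability $1$, which is exactly almost-sure safety of $M_\mu$ in the sense of \Cref{def:almost-safe}.

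The algebraic core (invariance of $\{B \le 0\}$) is routine; the main obstacle is the concluding measure-theoretic step, namely checking that ``the step-$t$ noise belongs to $W$'' is a measurable, full-measure event in the trajectory probability space $(\Omega_{s_0},\mathcal{F}_{s_0},\probm_{s_0})$. This is cleanest if one works with the canonical product construction in which the i.i.d.\ noises $\delta_t \sim \mu$ are the primitive random variables and the trajectory is their deterministic image under $f$ and $\pi$, and it relies on the standard fact that a Borel probability measure on $\Rset^n$ assigns full mass to its support.
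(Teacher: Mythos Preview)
Your proposal is correct and follows essentially the same approach as the paper: both proofs hinge on the forward invariance of $\{s:B(s)\le 0\}$ derived from \Cref{eq:thm-almost-decrease}, though the paper packages this as a short contradiction argument rather than a direct induction. Your added measure-theoretic care (that each $\delta_t\in W$ almost surely, hence all trajectories stay in $W$ with probability one) is more explicit than the paper, which silently treats every realized noise as lying in $W$.
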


\noindent\emph{Intuition.} The BC in \Cref{thm:almost-safe} is similar to that in \Cref{def:discrete-BC} except \Cref{eq:thm-almost-decrease},
in which we consider all stochastic behaviors of the system from state perturbations. We prove the theorem by contradiction and the proof resembles that in \cite[Proposition~2]{prajna2007framework}.

\subsection{Quantitative Safety  Verification over Infinite Time Horizon}\label{subsec:quanti}

Below we present the state-dependent lower and upper bounds on probabilistic safety over infinite time horizons.

\begin{theorem}[Lower Bounds on Infinite-time Safety]\label{thm:inf-lower-safe}
   Given an $M_\mu$ with an initial set $S_0$ and an unsafe set $S_u$, if there exists a barrier certificate $B:S\rightarrow \Rset$ such that for some constant $\epsilon\in [0,1]$,
    the following conditions hold:
     \begin{align}
     & B(s)\ge 0 \ & \text{for all } s\in S,
       \label[condition]{eq:thm-inf-nonnegative}\\ 
    & B(s) \le \epsilon \ & \text{for all } s\in S_0,  \label[condition]{eq:thm-inf-lower-initial}\\
       & B(s)\ge 1 \ & \text{for all } s\in S_u,
       \label[condition]{eq:thm-inf-lower-unsafe}\\ 
       & \expv_{\delta\sim \mu}[B(f(s,\pi(s+\delta)))\mid s]-B(s)\le 0\  & \text{for all }s\in S \setminus S_u , 
       \label[condition]{eq:thm-inf-lower-decrease}
    \end{align}
then the safety probability over infinite time horizons is bounded from below by
    \begin{equation}
       \forall s_0\in S_0.\  \probm_{s_0}\left[\{\omega\in\Omega_{s_0}\mid \omega_t\not\in S_u \ \text{for all } t \in \Nset  \} \right]\ge 1-B(s_0).
        \label{eq:thm-inf-lowerbound}
    \end{equation}
\end{theorem}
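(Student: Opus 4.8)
The plan is to exhibit a suitable supermartingale and invoke the optional‐stopping / supermartingale convergence machinery, which is the standard route for turning local drift conditions like \Cref{eq:thm-inf-lower-decrease} into a global probabilistic safety bound. Fix an initial state $s_0 \in S_0$ and work in the probability space $(\Omega_{s_0},\mathcal{F}_{s_0},\probm_{s_0})$. Let $\{\mathcal{F}_t\}_{t\in\Nset_0}$ be the natural filtration generated by the first $t$ states of the trajectory (equivalently, by the noises $\delta_0,\dots,\delta_{t-1}$). Define the stopping time $\tau := \inf\{t \in \Nset_0 \mid \omega_t \in S_u\}$, the first hitting time of the unsafe set, and set $X_t := B(\omega_{t\wedge\tau})$.

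The first key step is to show $\{X_t\}_{t\in\Nset_0}$ is a nonnegative supermartingale with respect to $\{\mathcal{F}_t\}$. Nonnegativity is immediate from \Cref{eq:thm-inf-nonnegative}. For the supermartingale property we condition on $\mathcal{F}_t$: on the event $\{\tau \le t\}$ the process is frozen, so $\expv_{s_0}[X_{t+1}\mid \mathcal{F}_t] = X_t$ there; on the event $\{\tau > t\}$ we have $\omega_t \in S \setminus S_u$, and since $\omega_{t+1} = f(\omega_t,\pi(\omega_t+\delta_t))$ with $\delta_t \sim \mu$ independent of $\mathcal{F}_t$, \Cref{eq:thm-inf-lower-decrease} gives exactly $\expv_{s_0}[B(\omega_{t+1})\mid \mathcal{F}_t] = \expv_{\delta\sim\mu}[B(f(\omega_t,\pi(\omega_t+\delta)))\mid \omega_t] \le B(\omega_t) = X_t$. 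Combining the two cases yields $\expv_{s_0}[X_{t+1}\mid\mathcal{F}_t]\le X_t$. Here one must be a little careful that the noise distribution $\mu$ and the Lipschitz-continuity of $f,\pi$ ensure $B\circ f\circ(\mathrm{id},\pi)$ is measurable and the conditional expectation is well defined; this is where the structural assumptions on $M_\mu$ are used.

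The second step extracts the bound. Iterating the supermartingale inequality gives $\expv_{s_0}[X_t] \le \expv_{s_0}[X_0] = B(s_0)$ for all $t$. On the event that the trajectory is unsafe, i.e. $\{\tau < \infty\}$, we have $X_t = B(\omega_\tau) \ge 1$ for all $t \ge \tau$ by \Cref{eq:thm-inf-lower-unsafe}; more robustly, on $\{\tau \le t\}$ we have $X_t \ge 1$. Hence by Markov's inequality $\probm_{s_0}[\tau \le t] \le \expv_{s_0}[X_t] \le B(s_0)$ for every $t$, and letting $t\to\infty$ (monotone continuity of measure, since $\{\tau\le t\}\uparrow\{\tau<\infty\}$) yields $\probm_{s_0}[\tau < \infty] \le B(s_0)$. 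The complementary event $\{\tau = \infty\}$ is precisely $\{\omega \in \Omega_{s_0} \mid \omega_t \notin S_u \text{ for all } t\in\Nset\}$, so its probability is at least $1 - B(s_0)$, which is \Cref{eq:thm-inf-lowerbound}. Note \Cref{eq:thm-inf-lower-initial} is not needed for this statement itself (it will matter when deriving the uniform bound $1-\epsilon$ over all of $S_0$).

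I expect the main obstacle to be the measure-theoretic bookkeeping around the stopped process: defining the filtration carefully so that the drift condition \Cref{eq:thm-inf-lower-decrease}, which is stated pointwise in $s$ and in expectation over a fresh noise, correctly yields the conditional-expectation inequality along the trajectory — in particular handling the freezing at $\tau$ and confirming integrability of $X_t$ (automatic here since $B$ is continuous on the compact set $S$, hence bounded). The rest is a routine application of Markov's inequality plus continuity of measure; no convergence theorem beyond the elementary monotone limit of probabilities is actually required for the lower bound.
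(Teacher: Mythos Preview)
Your proposal is correct and matches the paper's approach: both construct the stopped process $B(\omega_{t\wedge\tau})$, verify it is a nonnegative supermartingale using \Cref{eq:thm-inf-nonnegative} and \Cref{eq:thm-inf-lower-decrease}, and then bound $\probm_{s_0}[\tau<\infty]$ by $B(s_0)$ via \Cref{eq:thm-inf-lower-unsafe}. The only cosmetic difference is that the paper invokes Ville's inequality directly, whereas you reprove the needed instance via Markov's inequality at each finite time plus monotone continuity of measure; these are equivalent here.
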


\noindent\emph{Intuition.} A BC under conditions  in \Cref{thm:inf-lower-safe} is a non-negative real-valued function satisfying the supermartingale property, i.e.,  the expected value of the function remains non-increasing at every time step for all states not in $S_u$ (see \Cref{eq:thm-inf-lower-decrease}). We prove the theorem by Ville's Inequality~\cite{ville1939etude} and the proof resembles that in \cite[Theorem 15]{prajna2007framework}.

\begin{theorem}[Upper Bounds on Infinite-time Safety]\label{thm:inf-upper-safe}
Given an $M_\mu$  with an initial set $S_0$ and an unsafe set $S_u$, if there exists a barrier certificate $B:S\rightarrow \Rset$ such that for some constants $\gamma\in (0,1)$, $0\le \epsilon'<\epsilon\le 1$, the following conditions hold:
     \begin{align}
     & 0\le B(s)\le 1 \ & \text{for all } s\in S,
       \label[condition]{eq:thm-inf-upper-bound}\\ 
       & B(s)\ge \epsilon \ &\text{ for all } s\in S_0,\label[condition]{eq:thm-inf-upper-initial} \\
         & B(s)\le \epsilon' \ &\text{ for all } s\in S_u,
         \label[condition]{eq:thm-inf-upper-unsafe}\\
       & B(s)-\gamma\cdot\expv_{\delta\sim \mu}[B(f(s,\pi(s+\delta)))\mid s]\le 0\  & \text{for all }s\in S \setminus S_u, 
       \label[condition]{eq:thm-inf-upper-decrease}
    \end{align}
then the safety probability over infinite time horizons is bounded from above by
    \begin{equation}
       \forall s_0\in S_0.\  \probm_{s_0}\left[\{\omega\in\Omega_{s_0}\mid \omega_t\not\in S_u \ \text{for all } t\in \Nset  \} \right]\le 1-B(s_0).
        \label{eq:thm-inf-upperbound}
    \end{equation}
\end{theorem}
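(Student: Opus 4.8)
\emph{Proof proposal.} The plan is to run the dual of the supermartingale argument behind \Cref{thm:inf-lower-safe}, the twist being that the discount factor $\gamma<1$ in \Cref{eq:thm-inf-upper-decrease} lets us rescale $B$ into a genuine \emph{submartingale}. Fix an initial state $s_0\in S_0$ and work in the probability space $(\Omega_{s_0},\mathcal F_{s_0},\probm_{s_0})$ equipped with the natural filtration $\mathcal F_t=\sigma(\omega_0,\dots,\omega_t)$. Let $\tau:=\inf\{t\in\Nset_0:\omega_t\in S_u\}$ be the first time a trajectory hits $S_u$ (with $\inf\emptyset=\infty$); since $S_0\cap S_u=\emptyset$ we have $\omega_0=s_0\notin S_u$ and $\tau\ge 1$. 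The event whose probability we must bound from above is precisely $\{\tau=\infty\}=\{\omega\in\Omega_{s_0}\mid \omega_t\notin S_u\text{ for all }t\in\Nset\}$.

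Next I would introduce the stopped, exponentially rescaled process $Y_t:=\gamma^{\,t\wedge\tau}\,B(\omega_{t\wedge\tau})$ and check it is a bounded $(\mathcal F_t)$-submartingale. Uniform boundedness $0\le Y_t\le 1$ is immediate from \Cref{eq:thm-inf-upper-bound} and $\gamma\in(0,1)$. For the submartingale inequality, on $\{\tau\le t\}$ the process is frozen so $\expv_{s_0}[Y_{t+1}\mid\mathcal F_t]=Y_t$; on $\{\tau>t\}$ we have $\omega_t\in S\setminus S_u$, and writing the Markovian update $\omega_{t+1}=f(\omega_t,\pi(\omega_t+\delta))$ with $\delta\sim\mu$ independent of $\mathcal F_t$, condition \Cref{eq:thm-inf-upper-decrease} gives
\[
\expv_{s_0}[Y_{t+1}\mid\mathcal F_t]=\gamma^{t+1}\,\expv_{\delta\sim\mu}[B(f(\omega_t,\pi(\omega_t+\delta)))\mid\omega_t]\ \ge\ \gamma^{t+1}\cdot\gamma^{-1}B(\omega_t)=\gamma^t B(\omega_t)=Y_t .
\]
Hence $\expv_{s_0}[Y_t]$ is nondecreasing in $t$, so $\expv_{s_0}[Y_t]\ge\expv_{s_0}[Y_0]=B(s_0)$ for every $t$.

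Then I would pass to the limit $t\to\infty$. Splitting $\expv_{s_0}[Y_t]=\expv_{s_0}[\gamma^{\tau}B(\omega_\tau)\mathbf 1_{\tau\le t}]+\expv_{s_0}[\gamma^{t}B(\omega_t)\mathbf 1_{\tau> t}]$, the second term is at most $\gamma^t\to 0$ because $\gamma<1$ and $B\le 1$, while the first term increases to $\expv_{s_0}[\gamma^{\tau}B(\omega_\tau)\mathbf 1_{\tau<\infty}]$ by monotone convergence. Combining with $\expv_{s_0}[Y_t]\ge B(s_0)$ yields
\[
B(s_0)\ \le\ \expv_{s_0}[\gamma^{\tau}B(\omega_\tau)\mathbf 1_{\tau<\infty}]\ \le\ \probm_{s_0}[\tau<\infty],
\]
where the last step uses $\gamma^{\tau}\le 1$ and $B(\omega_\tau)\le 1$ (the sharper bound $B(\omega_\tau)\le\epsilon'$ from \Cref{eq:thm-inf-upper-unsafe} is not needed here; it matters only for the tightness of the synthesized certificate). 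Rearranging gives $\probm_{s_0}[\tau=\infty]=1-\probm_{s_0}[\tau<\infty]\le 1-B(s_0)$, which is exactly \Cref{eq:thm-inf-upperbound}.

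I expect the only delicate points to be (i) getting the rescaling exponent right: it must be the \emph{decaying} factor $\gamma^{t}$, and it is precisely this decay that makes the ``survival'' contribution $\gamma^{t}B(\omega_t)\mathbf 1_{\tau>t}$ vanish in the limit, forcing all of the mass of $\lim_t\expv_{s_0}[Y_t]$ onto the absorption event; and (ii) justifying the exchange of limit and expectation, which is routine since $Y_t$ is uniformly bounded (monotone/bounded convergence suffices; equivalently one may note the bounded submartingale $Y_t$ converges a.s.\ and in $L^1$ to $Y_\infty=\gamma^{\tau}B(\omega_\tau)\mathbf 1_{\tau<\infty}$ and apply the optional-stopping-type inequality $\expv_{s_0}[Y_\infty]\ge\expv_{s_0}[Y_0]$). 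Everything else mirrors the proof of \Cref{thm:inf-lower-safe}, with the non-negative supermartingale and Ville's inequality there replaced by the bounded submartingale $\gamma^{t}B(\omega_{t\wedge\tau})$ here.
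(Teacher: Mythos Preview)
Your proposal is correct and follows essentially the same route as the paper: construct the stopped, $\gamma$-rescaled process $Y_t=\gamma^{t\wedge\tau}B(\omega_{t\wedge\tau})$, verify it is a bounded $[0,1]$-valued submartingale using \Cref{eq:thm-inf-upper-decrease}, and conclude $B(s_0)\le\probm_{s_0}[\tau<\infty]$. The only cosmetic difference is that the paper packages the limiting step as an application of the Optional Stopping Theorem (bounded stopped process case), whereas you unfold it directly via bounded/monotone convergence; you already note these are equivalent.
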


\noindent\emph{Intuition.} A BC under conditions in \Cref{thm:inf-upper-safe} is a bounded non-negative function satisfying the $\gamma$-scaled submartingale property~\cite{DBLP:conf/lics/UrabeHH17}, i.e., the expected value of $B$ is increasing at each time step for states not in $S_u$ (\Cref{eq:thm-inf-upper-decrease}). We prove the theorem by Optional Stopping Theorem~\cite{williams1991}, while former work is based on fix-point theory~\cite{TakisakaOUH18}.

\subsection{Quantitative Safety Verification over Finite Time Horizon}
\label{subsec:quanti2}

When the safety probability over infinite time horizons exhibits a decline,
it becomes advantageous to analyze the decreasing changes over finite time horizons. 
In the following, we present our theoretical results on finite-time safety verification, starting with  two results  related to  lower bounds.

\begin{theorem}[Linear Lower Bounds on Finite-time Safety]\label{thm:fin-lower-safe-linear}
 Given an $M_\mu$  with an initial set $S_0$ and an unsafe set $S_u$, if there exists a barrier certificate $B:S\to \Rset$ such that for some constants $\lambda > \epsilon  \ge 0$ and $c \ge 0$, the following conditions hold:
        \begin{align} 
        & B(s)\ge 0 \ & \text{ for all }s\in S, \label[condition]{eq:thm-fin-lower-linear-nonnegative} \\
       & B(s)\le \epsilon \ &\text{for all }s\in S_0, \label[condition]{eq:thm-fin-lower-linear-initial}\\
       & B(s)\ge \lambda \ & \text{for all } s\in S_u, \label[condition]{eq:thm-fin-lower-linear-unsafe}\\ 
       & \expv_{\delta\sim \mu}[B(f(s,\pi(s+\delta)))\mid s]-B(s)\le c\  & \text{for all }s\in S, \label[condition]{eq:thm-fin-lower-linear-decrease}
        \end{align}
 then the safety probability over a finite time horizon $T$ is bounded from below by
 \begin{equation}\label{eq:fin-lowerbound-linear}
     \forall s_0\in S_0.\    \probm_{s_0}\left[\{\omega\in\Omega_{s_0}\mid \omega_t\not\in S_u \ \text{for all } t\le T \} \right]\ge 1-(B(s_0)+cT)/\lambda. \notag 
    \end{equation}
\end{theorem}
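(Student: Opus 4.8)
The plan is to set up a supermartingale-style argument analogous to the one underlying \Cref{thm:inf-lower-safe}, but now tracking the accumulated ``drift'' $c$ over the finite horizon. Fix an initial state $s_0 \in S_0$. Let $\tau$ denote the first hitting time of $S_u$ along a trajectory, i.e., $\tau(\omega) = \inf\{t \in \Nset_0 \mid \omega_t \in S_u\}$ (with $\tau = \infty$ if the trajectory never enters $S_u$). The key object is the stopped-and-shifted process $X_t := B(\omega_{t \wedge \tau}) - c \cdot (t \wedge \tau)$. I would first verify that $\{X_t\}$ is a supermartingale with respect to the natural filtration: on the event $\{\tau > t\}$ we have $\omega_t \in S \setminus S_u$, so by \Cref{eq:thm-fin-lower-linear-decrease}, $\expv[B(\omega_{t+1}) \mid \mathcal{F}_t] \le B(\omega_t) + c$, which gives $\expv[X_{t+1} \mid \mathcal{F}_t] \le X_t$; on $\{\tau \le t\}$ the process is constant so the inequality is trivial. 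Note that, unlike the infinite-horizon case, here I need the drift condition to hold on all of $S$ (not just $S \setminus S_u$) only to keep the bookkeeping uniform — actually the stopped process only ever evaluates the one-step condition at states outside $S_u$, so \Cref{eq:thm-fin-lower-linear-decrease} as stated suffices.

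Next I would apply the supermartingale inequality at the fixed time $T$ (this is where finiteness of the horizon makes life easy — no Optional Stopping subtleties, just $\expv[X_T] \le \expv[X_0] = \expv[B(\omega_0)] \le \epsilon$ by \Cref{eq:thm-fin-lower-linear-initial}). Then I would lower-bound $\expv[X_T]$ using a Markov-type estimate. On the event $A := \{\omega_t \in S_u \text{ for some } t \le T\} = \{\tau \le T\}$, we have $\omega_{T \wedge \tau} = \omega_\tau \in S_u$, so $B(\omega_{T\wedge\tau}) \ge \lambda$ by \Cref{eq:thm-fin-lower-linear-unsafe}, and hence $X_T = B(\omega_{T\wedge\tau}) - c(T \wedge \tau) \ge \lambda - cT$ on $A$ (using $T \wedge \tau \le T$ and $c \ge 0$). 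On the complement $A^c$, nonnegativity \Cref{eq:thm-fin-lower-linear-nonnegative} combined with $T \wedge \tau \le T$ gives $X_T \ge B(\omega_{T\wedge\tau}) - cT \ge -cT$. Combining,
\begin{equation}
\epsilon \ge \expv[X_T] \ge (\lambda - cT)\cdot \probm_{s_0}[A] + (-cT)\cdot(1 - \probm_{s_0}[A]) = \lambda \cdot \probm_{s_0}[A] - cT. \notag
\end{equation}
Solving, $\probm_{s_0}[A] \le (\epsilon + cT)/\lambda \le (B(s_0) + cT)/\lambda$ (the last step again via \Cref{eq:thm-fin-lower-linear-initial}, which one could also skip to get the slightly different bound with $\epsilon$; I'd match the statement by keeping $B(s_0)$), and taking complements yields $\probm_{s_0}[\omega_t \notin S_u \text{ for all } t \le T] \ge 1 - (B(s_0) + cT)/\lambda$, as required. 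The hypothesis $\lambda > \epsilon$ ensures the bound is nonvacuous when $c = 0$ or $T$ is small.

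The main obstacle — and it is a mild one — is the careful handling of the stopped process so that the drift condition is only ever invoked at states in $S \setminus S_u$ while still getting a clean bound involving $\lambda$ at the stopping time; getting the signs right in the two-case lower bound on $X_T$ (especially that the $-c(T\wedge\tau)$ term can be bounded below uniformly by $-cT$ since $c \ge 0$) is the only place requiring attention. Everything else is a direct finite-horizon martingale computation, and no limiting argument or Ville's inequality is needed, so this is genuinely simpler than the proof of \Cref{thm:inf-lower-safe}.
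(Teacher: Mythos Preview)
Your approach is correct and actually cleaner than the paper's. The paper constructs $Y_t = B(s_t) + (\kappa - t)c$ (with $\kappa$ the first hitting time of $S_u$), shows the stopped version is a \emph{non-negative} supermartingale, and then invokes Ville's maximal inequality to bound $\probm[\sup_t \tilde{Y}_t \ge \lambda]$. You instead work with the process $B(\omega_{t\wedge\tau}) - c\,(t\wedge\tau)$ (a supermartingale that need not be non-negative), evaluate the supermartingale property at the single deterministic time $T$, and extract the bound by a direct case split on $\{\tau \le T\}$ versus $\{\tau > T\}$ together with a Markov-type estimate. This avoids Ville's inequality entirely and sidesteps the slightly awkward bookkeeping in the paper's argument where the random quantity $\kappa$ appears inside the process definition. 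The trade-off is nil for this theorem: both routes yield the same bound under the same hypotheses, but yours is more self-contained for a finite-horizon statement. One small slip to fix: in the line ``$\probm_{s_0}[A] \le (\epsilon + cT)/\lambda \le (B(s_0)+cT)/\lambda$'' the second inequality points the wrong way (since $B(s_0)\le\epsilon$); as you already indicate in your parenthetical, the right move is simply to start from $\expv[X_T]\le X_0 = B(s_0)$ and never route through $\epsilon$.
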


\noindent\emph{Intuition.} 
A BC in~\Cref{thm:fin-lower-safe-linear} satisfies the c-martingale property \cite{SteinhardtT12}, i.e., the expected value of $B$ can increase at every time step as long as it is bounded by a constant $c$ (\Cref{eq:thm-fin-lower-linear-decrease}), which is less conservative than the supermartingle property (\Cref{eq:thm-inf-lower-decrease}), at the cost providing safety guarantees over finite time horizons. We prove the theorem by Ville's Inequality~\cite{ville1939etude} and the proof resembles that in \cite[Theorem 9]{AnandM0Z22}.

\begin{theorem}[Exponential Lower Bounds on Finite-time Safety]\label{thm:fin-lower-safe-exp}
 Given an $M_\mu$  if there exists a function $B:S\rightarrow \Rset$ such that for some constants $\alpha> 0, \beta\in\Rset$,  
 and $\gamma\in [0,1)$, the following conditions hold:
     \begin{align}
      & B(s)\ge 0 \ & \text{ for all }s\in S, \label[condition]{eq:thm-fin-lower-exp-nonnegative} \\
       & B(s)\le \gamma \ &\text{for all }s\in S_0,\label[condition]{eq:thm-fin-lower-exp-initial} \\
       & B(s) \ge 1 \ & \text{for all } s\in S_u,\label[condition]{eq:thm-fin-lower-exp-unsafe} \\ 
       & \alpha \expv_{\delta\sim \mu}[B(f(s,\pi(s+\delta)))\mid s]-B(s) \le \alpha \beta \  & \text{for all }s\in S\setminus S_u.\label[condition]{eq:thm-fin-lower-exp-moni}   
    \end{align}
then the safety probability over a finite time horizon $T$ is bounded from below by
   \begin{equation}\label{eq:fin-lower-expbound}
     \forall s_0\in S_0.\    \probm_{s_0}\left[\{ \omega\in\Omega_{s_0}\mid\omega_t\not\in S_u \ \text{for all } t\le T \} \right]\ge 1-\frac{\alpha\beta}{\alpha-1} + (\frac{\alpha\beta}{\alpha-1}-B(s_0))\cdot \alpha^{-T}.\notag 
    \end{equation}
\end{theorem}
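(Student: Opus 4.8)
The plan is to turn the scaled supermartingale-type condition \Cref{eq:thm-fin-lower-exp-moni} into an honest supermartingale by an affine reparametrization, apply Ville's inequality, and then unwind the shift to read off the exponential bound. Fix $s_0 \in S_0$ and work in the probability space $(\Omega_{s_0}, \mathcal{F}_{s_0}, \probm_{s_0})$. Let $\tau$ be the first hitting time of $S_u$ along a trajectory, i.e. $\tau(\omega) = \inf\{t \mid \omega_t \in S_u\}$, and consider the stopped process $Y_t := B(\omega_{t \wedge \tau})$. The key algebraic move: define the constant $b := \frac{\alpha\beta}{\alpha - 1}$ (note $\alpha > 1$ is forced whenever the statement is non-vacuous, since we need $\alpha^{-T}$-decay toward $b$; I would flag this) and set $Z_t := \alpha^{\,t}\bigl(Y_t - b\bigr)$, or more precisely work with $\tilde Z_t := \alpha^{\,t \wedge \tau}(B(\omega_{t\wedge\tau}) - b)$. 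First I would check that on $\{t < \tau\}$, \Cref{eq:thm-fin-lower-exp-moni} rewritten as $\expv_{\delta\sim\mu}[B(f(s,\pi(s+\delta))) \mid s] \le \frac{1}{\alpha}B(s) + \beta$ gives $\expv[Y_{t+1} - b \mid \mathcal{F}_t] \le \frac{1}{\alpha}(Y_t - b)$, using the identity $\beta - b(1-\tfrac1\alpha) = \beta - \tfrac{\alpha\beta}{\alpha-1}\cdot\tfrac{\alpha-1}{\alpha} = 0$. Multiplying by $\alpha^{t+1}$ then yields $\expv[\tilde Z_{t+1} \mid \mathcal{F}_t] \le \tilde Z_t$, i.e. $\tilde Z_t$ is a supermartingale; on $\{t \ge \tau\}$ the process is constant so the inequality is trivial.

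Next I would address the sign issue needed to apply Ville's inequality, which requires a non-negative supermartingale. Here $Y_t - b$ can be negative, so instead I would bound directly: the event ``$\omega$ reaches $S_u$ within $T$'' is contained in $\{\tau \le T\} \subseteq \{Y_{T} \ge 1\}$ wait — more carefully, on $\{\tau \le T\}$ we have $Y_{T\wedge\tau} = B(\omega_\tau) \ge 1$ by \Cref{eq:thm-fin-lower-exp-unsafe} (the stopped value freezes at the unsafe hitting state, which is $\ge 1$). Hence $\{\tau \le T\} \subseteq \{Y_T \ge 1\}$, so it suffices to upper bound $\probm_{s_0}[Y_T \ge 1]$. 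Using the supermartingale property of $\tilde Z_t$ together with the optional-stopping / telescoping estimate $\expv[\tilde Z_T] \le \tilde Z_0 = B(s_0) - b$, I get $\expv[\alpha^{T\wedge\tau}(B(\omega_{T\wedge\tau}) - b)] \le B(s_0) - b$, and then a Markov-type argument on the shifted non-negative quantity $B(\omega_{T\wedge\tau})$ (it is $\ge 0$ by \Cref{eq:thm-fin-lower-exp-nonnegative}) isolates $\probm_{s_0}[Y_{T\wedge\tau} \ge 1]$. Concretely, since $\tilde Z_t + \alpha^{t\wedge\tau} b = \alpha^{t\wedge\tau} B(\omega_{t\wedge\tau}) \ge 0$, the process $\tilde Z_t$ is bounded below by $-\alpha^{t\wedge\tau}b$, and after rearranging one extracts $\probm_{s_0}[\tau \le T] \le \frac{\alpha\beta}{\alpha-1} - (\frac{\alpha\beta}{\alpha-1} - B(s_0))\alpha^{-T}$; taking complements gives exactly \Cref{eq:fin-lower-expbound}.

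The remaining routine steps: verify the telescoping bound $\expv_{s_0}[\tilde Z_{T}] \le \tilde Z_0$ rigorously by summing the one-step supermartingale inequalities over $t = 0, \dots, T-1$ (finite horizon, so no integrability subtleties beyond Lipschitz continuity of $B, f, \pi$ on the compact $S$, which makes everything bounded hence integrable); and confirm that $\tilde Z_t$ is adapted and the conditional expectations are well-defined in $(\Omega_{s_0},\mathcal{F}_{s_0},\probm_{s_0})$. The main obstacle I anticipate is getting the affine shift and the $\alpha$-scaling to interact cleanly — specifically pinning down that $b = \frac{\alpha\beta}{\alpha-1}$ is the unique fixed point that kills the inhomogeneous term, handling the case $\alpha = 1$ or $\beta < 0$ where $b$ may behave badly or the bound degenerates, and making sure the inequality direction survives multiplication by the positive factor $\alpha^t$ and the subsequent Markov step (since we are bounding a probability from above, every inequality must point the right way). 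This mirrors the structure of \cite[Theorem 9]{AnandM0Z22} cited for the linear case, with the geometric weight $\alpha^t$ replacing the arithmetic drift budget $cT$.
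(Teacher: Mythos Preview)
Your supermartingale construction $\tilde Z_t = \alpha^{t\wedge\tau}\bigl(B(\omega_{t\wedge\tau}) - b\bigr)$ with $b=\tfrac{\alpha\beta}{\alpha-1}$ is sound and does give $\expv_{s_0}[\tilde Z_T]\le B(s_0)-b$, but the extraction step you sketch as ``a Markov-type argument\dots after rearranging one extracts $\probm_{s_0}[\tau\le T]\le\ldots$'' has a real gap. The obstruction is the random exponent in $\tilde Z_T$: on $\{\tau\le T\}$ you get $\tilde Z_T=\alpha^{\tau}\bigl(B(\omega_\tau)-b\bigr)$ with $\tau$ ranging over $\{1,\dots,T\}$, and you cannot trade $\alpha^{T\wedge\tau}$ for the deterministic $\alpha^{T}$ without controlling the sign of $B(\omega_{T\wedge\tau})-b$ pathwise. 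Splitting into $\{\tau\le T\}$ and $\{\tau>T\}$ leaves an inequality that couples $\probm[\tau\le T]$ with $\expv[\alpha^{\tau};\tau\le T]$, and this system does not close to the stated bound. Ville's inequality is also not directly available, since $\tilde Z_t$ is not non-negative.

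The paper takes a different and more elementary route that sidesteps this entirely: it never builds a pathwise supermartingale. Instead it passes to unconditional expectations immediately, forming the deterministic real sequence $u_t:=\expv_{s_0}[B(\tilde X_t)]$ where $\tilde X_t$ is the state process stopped at the first hitting time of $S_u$, observes that \Cref{eq:thm-fin-lower-exp-moni} yields the scalar recursion $u_{t+1}\le \alpha^{-1}u_t+\beta$, and applies the discrete Gronwall inequality to iterate this to $u_T\le \alpha^{-T}B(s_0)+b\,(1-\alpha^{-T})$. The probability bound then follows from a single Markov step at the end, $\probm_{s_0}[\tau\le T]=\expv[\mathrm{1}_{S_u}(\tilde X_T)]\le \expv[B(\tilde X_T)]=u_T$, using $\mathrm{1}_{S_u}\le B$ from \Cref{eq:thm-fin-lower-exp-nonnegative,eq:thm-fin-lower-exp-unsafe}. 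Your affine shift correctly identifies $b$ as the fixed point of $x\mapsto\alpha^{-1}x+\beta$, which is precisely what makes the Gronwall sum telescope; the fix is to run that iteration on the scalar expectations rather than trying to carry the factor $\alpha^{t}$ pathwise.
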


\noindent\emph{Intuition.} A BC in~\Cref{thm:fin-lower-safe-exp} satisfies that its $\alpha$-scaled expectation can increase at most $\alpha\beta$ at every time step (Condition (\ref{eq:thm-fin-lower-exp-moni})).
We establish a new result in discrete-time DNN-controlled systems and prove it by the discrete version of Gronwall's Inequality~\cite{gronwall1919note}, which is inspired by former work~\cite{newframwork} in continuous-time dynamical systems.

Then we propose our two results of upper bounds on safety probabilities.

\begin{theorem}[Linear Upper Bounds on Finite-time Safety]\label{thm:fin-upper-safe-linear}
 Given an $M_\mu$  with an initial set $S_0$ and an unsafe set $S_u$, if there exists a barrier function $B:S\rightarrow \Rset$ such that for some constants $\beta\in (0,1), \beta<\alpha<1+\beta, c\ge 0$, the following conditions hold:
     \begin{align}
      & B(s)\ge 0 \ & \text{ for all }s\in S, \label[condition]{eq:thm-fin-upper-linear-nonnegative} \\
       & B(s)\le \beta \ &\text{for all }s\in S\setminus S_u,\label[condition]{eq:thm-fin-upper-exp-initial} \\
       & \alpha \le B(s) \le 1+\beta \ & \text{for all } s\in S_u,\label[condition]{eq:thm-fin-upper-linear-unsafe} \\ 
       & \expv_{\delta\sim \mu}[B(f(s,\pi(s+\delta)))\mid s]-B(s) \ge c \  & \text{for all }s\in S\setminus S_u.\label[condition]{eq:thm-fin-upper-linear-moni}   
    \end{align}
then the safety probability over a finite time horizon $T$ is bounded from above by
     \begin{equation}\label{eq:fin-upper-linearbound}
      \forall s_0\in S_0.\   \probm_{s_0}\left[\{\omega\in\Omega_{s_0}\mid \omega_t\not\in S_u \ \text{for all } t\le T \} \right]\le  1-B(s_0)-\frac{1}{2}c\cdot T+\beta.\notag 
    \end{equation}
\end{theorem}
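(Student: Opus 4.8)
\smallskip\noindent\emph{Proof plan.}
The plan is to mirror the argument behind \Cref{thm:fin-lower-safe-linear}, but dualised: rather than a supermartingale that limits how fast $B$ may grow, we build a \emph{stopped submartingale} from $B$ that forces the trajectory into $S_u$ within $T$ steps with enough probability. Fix $s_0\in S_0$ and work in $(\Omega_{s_0},\mathcal{F}_{s_0},\probm_{s_0})$ with the natural filtration $\{\mathcal{F}_t\}_{t\in\Nset_0}$ generated by $s_0,\dots,s_t$. Let $\tau:=\inf\{t\in\Nset_0\mid s_t\in S_u\}$ (with $\tau=\infty$ if $S_u$ is never entered); since $S_0\cap S_u=\emptyset$ we have $\tau\ge 1$, and the event whose probability we must bound is exactly $\{\tau>T\}$, whose probability we denote $p_T$. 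Note \Cref{eq:thm-fin-upper-linear-nonnegative,eq:thm-fin-upper-exp-initial,eq:thm-fin-upper-linear-unsafe} jointly give $0\le B(s)\le 1+\beta$ on all of $S$, so every expectation below is finite. (The lower bound $\alpha$ on $B|_{S_u}$ is not needed for the estimate itself; as with $\lambda$ in \Cref{thm:fin-lower-safe-linear} it merely enforces that the $S_u$-level set sits strictly above the safe region.)

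First I would verify that $V_t:=B(s_{t\wedge\tau})-c\,(t\wedge\tau)$ is a submartingale for $\{\mathcal{F}_t\}$: it is constant on $\{\tau\le t\}$, while on $\{\tau>t\}$ we have $s_t\in S\setminus S_u$, so \Cref{eq:thm-fin-upper-linear-moni} gives $\expv_{s_0}[B(s_{t+1})\mid\mathcal{F}_t]\ge B(s_t)+c$, hence $\expv_{s_0}[V_{t+1}\mid\mathcal{F}_t]\ge V_t$. Applying the Optional Stopping Theorem at the bounded time $T$ (in the spirit of Ville's inequality used in \Cref{thm:fin-lower-safe-linear} and of the Optional Stopping Theorem used in \Cref{thm:inf-upper-safe}) yields
\[
 B(s_0)=V_0\ \le\ \expv_{s_0}[V_T]\ =\ \expv_{s_0}\big[B(s_{T\wedge\tau})\big]-c\,\expv_{s_0}[T\wedge\tau].
\]

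Next I would bound the two terms on the right. On $\{\tau\le T\}$ we have $s_\tau\in S_u$, so $B(s_\tau)\le 1+\beta$ by \Cref{eq:thm-fin-upper-linear-unsafe}; on $\{\tau>T\}$ we have $s_T\in S\setminus S_u$, so $B(s_T)\le\beta$ by \Cref{eq:thm-fin-upper-exp-initial}. Hence $\expv_{s_0}[B(s_{T\wedge\tau})]\le (1+\beta)(1-p_T)+\beta\,p_T=1+\beta-p_T$. Since also $T\wedge\tau\ge T\,\mathbf{1}_{\{\tau>T\}}$, we get $\expv_{s_0}[T\wedge\tau]\ge T\,p_T$. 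Substituting, $B(s_0)\le 1+\beta-p_T-cT\,p_T$, i.e.
\[
 p_T\ \le\ \frac{1+\beta-B(s_0)}{1+cT}.
\]

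The last step — converting this into the stated linear bound $p_T\le 1-B(s_0)-\tfrac12 c\,T+\beta$ — is the one I expect to be the main obstacle, since the claimed right-hand side is a looser, cleaner weakening of what we have. Because $s_0\in S_0\subseteq S\setminus S_u$, \Cref{eq:thm-fin-upper-exp-initial} gives $B(s_0)\le\beta$, so $K:=1+\beta-B(s_0)\ge 1$. Using the elementary inequality $\tfrac1{1+x}\le 1-\tfrac x2$ for $x\in[0,1]$ together with $K\ge 1$, one gets $\tfrac{K}{1+cT}\le K\bigl(1-\tfrac{cT}{2}\bigr)\le K-\tfrac{cT}{2}$, which is exactly the claim when $cT\le 1$; for $cT>1$ one finishes with a short case split (if $p_T\ge\tfrac12$ then $cT\,p_T\ge\tfrac{cT}{2}$ and the bound follows directly from $B(s_0)\le 1+\beta-p_T-cT\,p_T$, otherwise one compares $p_T<\tfrac12$ with $\tfrac{K}{1+cT}$). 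Making this case analysis airtight — and observing that the stated bound is informative only while its right-hand side is non-negative, i.e.\ for $T$ below a threshold set by $c,\beta$ and $B(s_0)$ — is the delicate point; it is the same flavour of simplification used to reach the clean bound in \Cref{thm:fin-lower-safe-linear}.
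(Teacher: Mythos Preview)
Your submartingale/optional-stopping argument is sound up through the bound $p_T\le\frac{K}{1+cT}$ with $K:=1+\beta-B(s_0)\ge 1$, and this is in fact sharper than the stated linear bound whenever $cT\le 2K-1$. The gap is in your final step. The inequality $\frac{K}{1+cT}\le K-\frac{cT}{2}$ is \emph{equivalent} to $cT\le 2K-1$, so for $cT\in(2K-1,2K]$ (where the stated bound is still in $[0,\tfrac12)$) your intermediate bound is strictly larger than $K-\frac{cT}{2}$ and cannot be weakened to it. Your case split does not close this: once $cT>2K-1$ you automatically have $p_T\le\frac{K}{1+cT}<\frac12$, so only the sub-case ``$p_T<\tfrac12$'' is live, and that sub-case gives you no comparison with $K-\frac{cT}{2}$.

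The paper does \emph{not} go through optional stopping here. Its key device is the pointwise comparison $B(\tilde X_t)\le\mathbf{1}_{S_u}(\tilde X_t)+\beta$ (an immediate consequence of \Cref{eq:thm-fin-upper-exp-initial,eq:thm-fin-upper-linear-unsafe}) together with the monotonicity $\probm[\tilde X_t\in S_u]\le\probm[\tilde X_T\in S_u]$ of the stopped process. One then \emph{averages} the inequalities $\expv[\mathbf{1}_{S_u}(\tilde X_t)]\ge\expv[B(\tilde X_t)]-\beta$ over $t=0,\dots,T$; the drift condition is invoked in the form $\expv[B(\tilde X_t)]\ge B(s_0)+ct$, and the arithmetic mean $\frac{1}{T+1}\sum_{t=0}^{T}ct=\frac{cT}{2}$ is exactly what produces the coefficient $\tfrac12$ in the statement. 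If you want to land on the stated linear form, this time-averaging idea---rather than a single application of OST at time $T$---is the missing ingredient.
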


\noindent\emph{Intuition.} A BC in~\Cref{thm:fin-upper-safe-linear} is non-negative and its value is bounded when states are in $S_u$ (\Cref{eq:thm-fin-upper-linear-unsafe}). Moreover,  \Cref{eq:thm-fin-upper-linear-moni} is the inverse of the c-martingale property in \Cref{thm:fin-lower-safe-linear}, i.e., the expected value of $B$ should increase at least $c$ at every time step. 

\begin{theorem}[Exponential Upper Bounds on Finite-Time Safety]\label{thm:fin-upper-safe-exp}
  Given an $M_\mu$  with an initial set $S_0$ and an unsafe set $S_u$, if there exists a barrier function $B:S\to \Rset$ such that for some constants $K'\le K<0$, $\epsilon>0$ and a non-empty interval $[a,b]$, the following conditions hold:
         \begin{align}
       & B(s)\ge 0 \ &\text{for all }s\in S\setminus S_u,\label[condition]{eq:thm-fin-upper-exp-nonnegative}\\
       & K'\le B(s) \le K \ & \text{for all } s\in S_u,\label[condition]{eq:thm-fin-upper-exp-unsafe}\\ 
       & \expv_{\delta\sim \mu}[B(f(s,\pi(s+\delta)))\mid s]-B(s)\le -\epsilon \  & \text{for all }s\in S\setminus S_u,  \label[condition]{eq:thm-fin-upper-exp-moni}\\
       &  a\le B(f(s,\pi(s+\delta)))-B(s) \le b \ & \text{for all } s\in S\setminus S_u \text{ and } \delta \in W\label[condition]{eq:thm-fin-upper-exp-diffbound}, 
    \end{align}
then the safety probability over a finite time horizon $T$ is bounded from above by
    \begin{equation}\label{eq:fin-upper-expbound}
       \forall s_0\in S_0.\  \probm_{s_0}\left[\{\omega\in\Omega_{s_0}\mid \omega_t\not\in S_u \ \text{for all } t\le T \} \right]\le exp(-\frac{2(\epsilon \cdot T- B(s_0))^2}{T\cdot (b-a)^2}).\notag 
    \end{equation}
\end{theorem}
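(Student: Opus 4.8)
The plan is to exhibit a bounded-difference supermartingale adapted to the trajectory filtration and then apply the Azuma--Hoeffding inequality, whose appearance in the claimed bound $\exp(-2(\epsilon T - B(s_0))^2 / (T(b-a)^2))$ is unmistakable. First I would fix $s_0 \in S_0$ and work in the probability space $(\Omega_{s_0},\mathcal F_{s_0},\probm_{s_0})$, letting $\mathcal F_t$ be the natural filtration generated by $s_0,\dots,s_t$. Define the stopping time $\tau := \inf\{t \ge 0 : s_t \in S_u\}$ and the stopped process $X_t := B(s_{t \wedge \tau}) + \epsilon\,(t\wedge\tau)$. Using Condition~(\ref{eq:thm-fin-upper-exp-moni}), on the event $\{\tau > t\}$ we have $\expv_{s_0}[B(s_{t+1}) \mid \mathcal F_t] \le B(s_t) - \epsilon$, so $\expv_{s_0}[X_{t+1}\mid \mathcal F_t] \le X_t$; on $\{\tau \le t\}$ the process is frozen, hence $X_t$ is a supermartingale. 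Condition~(\ref{eq:thm-fin-upper-exp-diffbound}) gives $a \le B(s_{t+1}) - B(s_t) \le b$ on $\{\tau > t\}$, so the increments $X_{t+1} - X_t$ lie in $[a+\epsilon, b+\epsilon]$, an interval of width $b - a$; on the frozen part the increment is $0$, which also lies in $[a+\epsilon,b+\epsilon]$ provided $a + \epsilon \le 0 \le b + \epsilon$ — I would note that $\epsilon > 0$ and that Condition~(\ref{eq:thm-fin-upper-exp-moni}) forces $a < 0$ (one-step decrease on average is possible only if some realized increment is negative), and more carefully that on the frozen part we can instead just observe $X$ is constant so Azuma applies to the difference sequence with bounded range $b-a$ directly. (If the zero increment is not in $[a+\epsilon,b+\epsilon]$ one truncates by stopping, which does not affect the final estimate since we only need the supermartingale up to time $T$.)

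Next I would relate the safety event to $X$. The event $E := \{\omega_t \notin S_u \text{ for all } t \le T\}$ is exactly $\{\tau > T\}$. On its complement $\{\tau \le T\}$ we have $X_T = B(s_\tau) + \epsilon \tau \le K + \epsilon T < \epsilon T$ by Condition~(\ref{eq:thm-fin-upper-exp-unsafe}) (since $K < 0$), whereas on the good event $\{\tau > T\}$ we have $X_T = B(s_T) + \epsilon T \ge \epsilon T$ by Condition~(\ref{eq:thm-fin-upper-exp-nonnegative}). Hence $\{\tau \le T\} \subseteq \{X_T - X_0 \le K + \epsilon T - B(s_0)\} \subseteq \{X_T - X_0 \le \epsilon T - B(s_0)\}$, and since $\epsilon T - B(s_0)$ is the deviation of interest I would write $\probm_{s_0}[E] = 1 - \probm_{s_0}[\tau \le T]$ and instead bound $\probm_{s_0}[\tau \le T]$ from below — wait, that is the wrong direction. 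The right move is: $\probm_{s_0}[E] = \probm_{s_0}[\tau > T] \le \probm_{s_0}[X_T - X_0 \ge \epsilon T - B(s_0)]$, because on $\{\tau > T\}$, $X_T - X_0 = B(s_T) - B(s_0) + \epsilon T \ge -B(s_0) + \epsilon T$ using $B(s_T) \ge 0$. Now Azuma--Hoeffding for the supermartingale $X_t$ with increments of range $b-a$ over $T$ steps gives $\probm_{s_0}[X_T - X_0 \ge r] \le \exp(-2r^2/(T(b-a)^2))$ for $r > 0$, and taking $r = \epsilon T - B(s_0)$ (which is positive when $T$ is large enough; otherwise the bound is vacuous as the right side exceeds $1$) yields exactly the stated inequality.

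The main obstacle, and the step I would spend the most care on, is the handling of the stopping time and the "frozen" part of the increment sequence so that a clean concentration inequality applies. Specifically: (i) making sure the Azuma bound is invoked for a one-sided (upper) deviation of a supermartingale, which is the correct and standard form; (ii) confirming that the increment range used is $b - a$ and not $b - a + 2\epsilon$ — the key observation is that Condition~(\ref{eq:thm-fin-upper-exp-diffbound}) already bounds the raw $B$-difference, and adding the deterministic drift $\epsilon$ shifts but does not widen the interval, while on $\{\tau \le t\}$ the increment is identically $0$, which we absorb either by the remark that a constant sequence trivially satisfies any centered bounded-difference condition after recentering, or by simply restricting attention to the non-stopped coordinates; and (iii) checking the edge case where $\epsilon T - B(s_0) \le 0$, in which the claimed bound is $\ge 1$ and hence trivially true, so no contradiction arises. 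Everything else — verifying the supermartingale property from Condition~(\ref{eq:thm-fin-upper-exp-moni}), the terminal comparisons from Conditions~(\ref{eq:thm-fin-upper-exp-nonnegative}) and (\ref{eq:thm-fin-upper-exp-unsafe}) — is routine bookkeeping analogous to the proofs of Theorems~\ref{thm:inf-lower-safe} and~\ref{thm:fin-lower-safe-linear}.
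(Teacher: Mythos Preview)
Your approach is essentially the same as the paper's: your process $X_t = B(s_{t\wedge\tau}) + \epsilon(t\wedge\tau)$ is exactly the paper's $Y_t := X_t + \epsilon\min\{\kappa,t\}$ (where their $X_t$ is the stopped $B$-process), both are shown to be supermartingales with increments in $[a+\epsilon,b+\epsilon]$, and both finish by applying Hoeffding/Azuma for supermartingales to the event $\{\tau>T\}\subseteq\{X_T-X_0\ge \epsilon T - B(s_0)\}$. If anything, you are slightly more careful than the paper about the frozen increments (whether $0\in[a+\epsilon,b+\epsilon]$), which the paper silently assumes.
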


\noindent\emph{Intuition.}  A BC under the \Cref{eq:thm-fin-upper-exp-nonnegative,eq:thm-fin-upper-exp-unsafe,eq:thm-fin-upper-exp-moni,eq:thm-fin-upper-exp-diffbound} is a difference-bounded ranking supermartingale~\cite{DBLP:conf/popl/ChatterjeeFNH16}.  \Cref{eq:thm-fin-upper-exp-moni} is the supermartingale difference condition, i.e., the expectation of $B$ should decrease at least $\epsilon$ at each time step, while  \Cref{eq:thm-fin-upper-exp-diffbound} implies that the update of $B$ should be bounded. We prove this theorem by Hoeffding's Inequality on Supermartingales~\cite{hoeffding1994probability} and the proof resembles that in the work \cite{DBLP:conf/popl/ChatterjeeFNH16}.

\section{Relaxed $k$-Inductive Barrier Certificates}
\label{sec:k-induction}

We now introduce $k$-inductive barrier certificates, capable of offering both qualitative and quantitative safety guarantees,  
while relaxing the strict conditions for safety through the utilization of the $k$-induction principle~\cite{DBLP:conf/sas/DonaldsonHKR11,DBLP:conf/sas/Brain0KS15}.
Prior to presenting our theoretical results, we first define the notion of $k$-inductive update functions as follows.

\begin{definition}[$k$-inductive Update Functions]
Given an $M_\mu=(S,S_0,A,\pi,f,R,\mu)$, a \emph{$k$-inductive update function} $g^k_{\pi,f}$ with respect to $\pi,f$ is defined recursively, i.e., 
$$g^k_{\pi,f}(s_t,\Delta_t^k)=
\begin{cases}
g_{\pi,f}(g^{k-1}_{\pi,f}(s_t,\Delta_t^{k-1}), \delta_{t+k-1})  & \text{ if } k>1 \\
f(s_t,\pi(s_t+\delta_t))& \text{ if } k=1 \\
s_t & \text{ if } k=0
\end{cases}
$$
where $\Delta_t^k=[ \delta_t,\delta_{t+1},\dots, \delta_{t+k-1} ]$ is a noise vector of length $k$ with each $\delta_t\sim\mu$, and $g_{\pi,f}(s_t,\delta_t):=f(s_t,\pi(s_t+\delta_t))$.
\end{definition}
Intuitively, $g_{\pi,f}^k$ computes the value of a state after $k$ steps given a $k$-dimensional noise vector $\Delta^k\in W^k\subseteq\Rset^{n\times k}$, where $W=\supp{\mu}$ is the support of $\mu$. To calculate the expectation w.r.t. $k$-dimensional noises, we denote by $\mu^k$ the product measure on $W^k$.

\subsection{$k$-Inductive Barrier Certificates for Qualitative Safety}\label{subsec:k-induc-almost}

\begin{theorem}[$k$-inductive Variant of Almost-Sure Safety]\label{thm:almost-safe-induc}
     Given an $M_\mu$  with an initial set $S_0$ and an unsafe set $S_u$, if there exists a $k$-inductive barrier certificate $B:S\rightarrow \Rset$ such that the following conditions hold:
     \begin{align}
       & \textstyle\bigwedge_{0\le i<k} B(g_{\pi,f}^{i}(s,\Delta^i))\le 0 \ & \forall (s,\Delta^i)\in S_0\times W^i, \label[condition]{eq:thm-almost-indc-initial} \\
       & B(s)>0 \ & \forall s\in S_u,  \label[condition]{eq:thm-almost-indc-unsafe} \\
       &\textstyle\bigwedge_{0\le i<k} (B(g_{\pi,f}^{i}(s,\Delta^i))\le 0)\Longrightarrow B(g_{\pi,f}^k(s,\Delta^k))\le 0  & \forall (s,\Delta^i)\in S\times W^i,  \label[condition]{eq:thm-almost-indc-decrease} 
    \end{align}
    then the system $M_\mu$ is almost-surely safe, i.e., $\forall s_0\in S_0.\  \omega\in\Omega_{s_0} \Longrightarrow \omega_t \not\in S_u \ \forall t\in\Nset$.
\end{theorem}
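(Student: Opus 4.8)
\noindent\emph{Proof plan.}
The plan is to show, by strong induction along trajectories, that a $k$-inductive barrier certificate stays non-positive on every realizable trajectory; the structure parallels the proof of \Cref{thm:almost-safe}, with its one-step argument replaced by a $k$-step one. Fix $s_0\in S_0$. Since each noise satisfies $\delta_t\in W=\supp{\mu}$ with probability one and a countable intersection of probability-one events again has probability one, it suffices to prove that every trajectory $\omega=\{s_t\}_{t\in\Nset_0}$ whose noises $\delta_0,\delta_1,\dots$ all lie in $W$ avoids $S_u$; the almost-sure claim of \Cref{def:almost-safe} then follows.

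First I would record the identity linking trajectories to $k$-inductive update functions: for such an $\omega$ and all $t\ge j\ge 0$, $s_t = g^{t-j}_{\pi,f}(s_j,\Delta^{t-j}_j)$ with $\Delta^{m}_j := [\delta_j,\dots,\delta_{j+m-1}]$, which is immediate by induction on $t-j$ from $s_{\tau+1}=f(s_\tau,\pi(s_\tau+\delta_\tau))$ and the recursive definition of $g^k_{\pi,f}$.

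The core is then the claim $B(s_t)\le 0$ for all $t\in\Nset_0$, proved by strong induction on $t$. For the base cases $0\le t\le k-1$: since $s_0\in S_0$ and $(\delta_0,\dots,\delta_{t-1})\in W^t$, instantiating \Cref{eq:thm-almost-indc-initial} at $i=t$ gives $B(s_t)=B(g^t_{\pi,f}(s_0,\Delta^t_0))\le 0$. For the inductive step, fix $t\ge k$ and assume $B(s_j)\le 0$ for all $j<t$. Apply \Cref{eq:thm-almost-indc-decrease} with $s:=s_{t-k}\in S$ and noise vector $\Delta^k_{t-k}$: by the trajectory identity its antecedent is $\bigwedge_{0\le i<k}B(s_{t-k+i})\le 0$, which holds since every index $t-k+i$ is $<t$; hence the consequent yields $B(s_t)=B(g^k_{\pi,f}(s_{t-k},\Delta^k_{t-k}))\le 0$. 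This closes the induction, so $B(s_t)\le 0$ for all $t$. Finally \Cref{eq:thm-almost-indc-unsafe} gives $B>0$ on $S_u$, so $B(s_t)\le 0$ forces $s_t\notin S_u$ for every $t$, which is exactly almost-sure safety.

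I expect the main obstacle to be notational bookkeeping rather than any conceptual difficulty: carefully indexing the noise segments $\Delta^{m}_j$, verifying the trajectory identity, and phrasing the measure-theoretic reduction (noises in the support, countable intersection) cleanly enough that the deterministic $k$-induction can be invoked on a probability-one set of trajectories. One should also check the degenerate ranges---e.g.\ the $i=0$ term when $k=1$, which makes this a genuine relaxation of \Cref{thm:almost-safe}---and note that \Cref{eq:thm-almost-indc-decrease} quantifies $s$ over all of $S$, so applying it at $s_{t-k}$ is always legitimate.
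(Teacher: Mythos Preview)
Your proposal is correct and essentially matches the paper's proof: both establish $B(s_t)\le 0$ for all $t$ via $k$-induction, using \Cref{eq:thm-almost-indc-initial} for the base cases $0\le t<k$ and \Cref{eq:thm-almost-indc-decrease} for the step, then conclude via \Cref{eq:thm-almost-indc-unsafe}. The only cosmetic difference is that the paper wraps this argument in a contradiction and leaves the measure-theoretic and indexing details implicit, whereas you give a direct proof with explicit trajectory--noise bookkeeping.
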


\emph{Intuition.} \Cref{eq:thm-almost-indc-initial} implies that the state sequences starting from the safe set will remain in the safe set for the next $k-1$ consecutive time steps, while  
\Cref{eq:thm-almost-indc-decrease} means that for any $k$ consecutive time steps, if the system is safe, then the system will still be safe at the $(k+1)$-th time step. We prove the theorem by contradiction.

Note that \Cref{eq:thm-almost-indc-decrease} contains an implication, in order to compute the $k$-inductive BC, we replace it with its sufficient condition:
\begin{equation}
    -B(g_{\pi,f}^k(s,\Delta^k))-\textstyle\sum_{0\le i<k} \tau_i\cdot (-B(g_{\pi,f}^i(s,\Delta^i)))\ge 0,\ \forall (s,\Delta^i)\in S\times W^i. \label{eq:thm-almost-indc-decrease-replace}
\end{equation}
If there exist $\tau_0,\dots,\tau_{k-1}\ge 0$ satisfying~\Cref{eq:thm-almost-indc-decrease-replace}, \Cref{eq:thm-almost-indc-decrease} is satisfied.

\subsection{$k$-Inductive Barrier Certificates for Quantitative Safety}\label{subsec:induc-quanti}

\begin{theorem}[$k$-inductive Lower Bounds on Infinite-time Safety]\label{thm:k-ind-lower}
   Given an $M_\mu$, if there exists a k-inductive barrier certificate $B:S\to \Rset$ 
  such that for some constants $k\in\Nset_{\ge 1}$, $\epsilon\in [0,1]$ and $c\ge 0$, the following conditions hold:
     \begin{align}
     & B(s) \ge 0 \ & \text{for all } s\in S \label[condition]{eq:thm-inf-lower-induc-nonnegative} \\
       & B(s)\le \epsilon \ &\text{for all }s\in S_0, \label{eq:thm-inf-lower-induc-initial} \\
       & B(s)\ge 1 \ & \text{for all } s\in S_u, \label[condition]{eq:thm-inf-lower-induc-unsafe} \\ 
       & \expv_{\delta\sim\mu}[B(f(s,\pi(s+\delta)))\mid s]-B(s)\le c \  & \text{for all } s\in  S , \label[condition]{eq:thm-inf-lower-induc-c-martin} \\
       & \expv_{\Delta^k\sim \mu^k}[B(f_{\pi,f}^k(s,\Delta^k))\mid s]-B(s)\le 0 \  & \text{for all } s\in  S, \label[condition]{eq:thm-inf-lower-induc-k-martin}
    \end{align}  
then the safety probability over infinite time horizons is bounded from below by
   \begin{equation}\label{eq:inf-lower-induc-bound}
    \forall s_0\in S_0.\     \probm_{s_0}\left[\{\omega_0\in\Omega_{s_0}\mid \omega_t\not\in S_u \ \text{ for all } t\in \Nset  \} \right]\ge 1-kB(s_0)-\frac{k(k-1)c}{2}.\notag 
    \end{equation}
\end{theorem}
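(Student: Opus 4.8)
The plan is to run the Ville's-inequality argument of \Cref{thm:inf-lower-safe} along $k$ interleaved arithmetic-progression subsequences of the trajectory, using the $c$-martingale book-keeping of \Cref{thm:fin-lower-safe-linear} to pay for the ``burn-in'' at the start of each subsequence. Fix $s_0\in S_0$ and, for $\omega\in\Omega_{s_0}$, let $\tau:=\inf\{t\mid\omega_t\in S_u\}$ (with $\tau=\infty$ if $S_u$ is never reached). Since $S_0\cap S_u=\emptyset$ and the event in the statement is exactly $\{\tau=\infty\}$, it suffices to show $\probm_{s_0}[\tau<\infty]\le kB(s_0)+\tfrac{k(k-1)c}{2}$. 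Every finite $\tau$ is congruent modulo $k$ to a unique $j\in\{0,\dots,k-1\}$, so by a union bound $\probm_{s_0}[\tau<\infty]\le\sum_{j=0}^{k-1}\probm_{s_0}\big[\exists m\ge 0.\ \tau=j+mk\big]$, and I bound each phase $j$ separately.

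Fix $j$ and consider the subsampled process $Z^{(j)}_m:=B(\omega_{j+mk})$ adapted to $\mathcal{G}_m:=\mathcal{F}_{j+mk}$. It is non-negative by \Cref{eq:thm-inf-lower-induc-nonnegative}, and since the noises $\delta_{j+mk},\dots,\delta_{j+mk+k-1}$ are independent of $\mathcal{G}_m$, the recursive definition of the $k$-step update $g^k_{\pi,f}$ gives $\expv_{s_0}[Z^{(j)}_{m+1}\mid\mathcal{G}_m]=\expv_{\Delta^k\sim\mu^k}\big[B(g^k_{\pi,f}(\omega_{j+mk},\Delta^k))\mid\omega_{j+mk}\big]\le Z^{(j)}_m$, where the inequality is the $k$-step supermartingale condition \Cref{eq:thm-inf-lower-induc-k-martin} (valid at \emph{every} state of $S$, so the subsampled process needs no stopping-time truncation at $S_u$). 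Thus $(Z^{(j)}_m)_{m\ge 0}$ is a non-negative supermartingale; iterating the $c$-martingale condition \Cref{eq:thm-inf-lower-induc-c-martin} $j$ times from the fixed state $s_0$ yields $\expv_{s_0}[Z^{(j)}_0]=\expv_{s_0}[B(\omega_j)]\le B(s_0)+jc$. Ville's inequality~\cite{ville1939etude} with threshold $1$ then gives $\probm_{s_0}[\sup_{m\ge 0}Z^{(j)}_m\ge 1]\le B(s_0)+jc$. On $\{\tau=j+mk\}$ we have $\omega_{j+mk}=\omega_\tau\in S_u$, hence $Z^{(j)}_m\ge 1$ by \Cref{eq:thm-inf-lower-induc-unsafe}, so $\{\exists m\ge 0.\ \tau=j+mk\}\subseteq\{\sup_{m\ge 0}Z^{(j)}_m\ge 1\}$.

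Summing over $j=0,\dots,k-1$ yields $\probm_{s_0}[\tau<\infty]\le\sum_{j=0}^{k-1}(B(s_0)+jc)=kB(s_0)+c\sum_{j=0}^{k-1}j=kB(s_0)+\tfrac{k(k-1)c}{2}$, which rearranges to the claimed lower bound (bounding $B(s_0)$ by $\epsilon$ via the initial-state condition~(\ref{eq:thm-inf-lower-induc-initial}) gives the state-independent version $1-k\epsilon-\tfrac{k(k-1)c}{2}$). I expect the only real obstacle to be making the interleaving rigorous: checking that each $k$-subsampled sequence is a genuine non-negative supermartingale — which is exactly why \Cref{eq:thm-inf-lower-induc-k-martin,eq:thm-inf-lower-induc-c-martin} are imposed on all of $S$ rather than only on $S\setminus S_u$ — and correctly accounting for the $j$-step burn-in so that the phase-$j$ term contributes $B(s_0)+jc$, whose sum produces the $\tfrac{k(k-1)c}{2}$ correction absent from the plain \Cref{thm:inf-lower-safe}.
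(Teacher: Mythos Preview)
Your proposal is correct and follows essentially the same route as the paper: split into the $k$ residue classes modulo $k$, apply Ville's inequality to each $k$-subsampled non-negative supermartingale (via \Cref{eq:thm-inf-lower-induc-k-martin}), bound each phase-$j$ initial expectation by $B(s_0)+jc$ using the iterated $c$-martingale condition \Cref{eq:thm-inf-lower-induc-c-martin}, and sum. The only cosmetic difference is that the paper works with the process stopped at the first hitting time of $S_u$, whereas you work with the unstopped process---your choice is slightly cleaner and is exactly justified by your observation that \Cref{eq:thm-inf-lower-induc-k-martin,eq:thm-inf-lower-induc-c-martin} are imposed on all of $S$.
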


\noindent\emph{Intuition.} \Cref{eq:thm-inf-lower-induc-c-martin} requires the barrier certificate to be a c-martingale at every time step and \Cref{eq:thm-inf-lower-induc-k-martin} requires the barrier certificate sampled after every $k$-th step to be a supermartingale. We prove the theorem by Ville's Inequality~\cite{ville1939etude}.

\begin{theorem}[$k$-inductive Upper Bounds on Infinite-time Safety]\label{thm:k-induc-upper}
Given an $M_\mu$, if there exists a barrier certificate $B:S\rightarrow \Rset$ such that for some constant $\gamma\in (0,1)$, $0\le \epsilon'<\epsilon\le 1$, $c\le 0$ the following conditions hold:
     \begin{align}
     & 0\le B(s)\le 1 \ & \text{for all } s\in S
       \label[condition]{eq:thm-inf-upper-induc-bound}\\ 
       & B(s)\ge \epsilon \ & \text{ for all } s\in S_0, \label[condition]{eq:thm-inf-upper-induc-initial}\\
         & B(s)\le \epsilon' \ & \text{ for all } s\in S_u, \label[condition]{eq:thm-inf-upper-induc-unsafe}\\
        &  \expv_{\delta\sim \mu}[B(f(s,\pi(s+\delta)))\mid s]-B(s)\ge c \ 
 & \text{ for all } s\in S, \label[condition]{eq:thm-inf-upper-induc-c-martin} \\
       & B(s)-\gamma^k \cdot\expv_{\Delta^k\sim \mu^k}[B(g_{\pi,f}^k(s,\Delta^k))\mid s]\le 0\  & \text{for all }s\in S \setminus S_u, 
       \label[condition]{eq:thm-inf-upper-induc-decrease}
    \end{align}
then the safety probability over infinite time horizons is bounded from above by
  \begin{equation}\label{eq:inf-upper-induc-bound}
      \forall s_0\in S_0.\   \probm_{s_0}\left[\{\omega\in\Omega_{s_0}\mid \omega_t\not\in X_u \ \text{ for all } t\in \Nset  \} \right]\le 1-k B(s_0)-\frac{k(k-1)c}{2}.\notag 
    \end{equation}
\end{theorem}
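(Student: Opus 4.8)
\textbf{Proof proposal for \Cref{thm:k-induc-upper}.}

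The plan is to mirror the proof strategy of the infinite-time upper bound \Cref{thm:inf-upper-safe}, but applied to the $k$-step sampled process rather than the one-step process, and then to pay a correction term for the finitely many ``initial'' steps that the $k$-sampling skips over. First I would fix an initial state $s_0 \in S_0$ and, for a trajectory $\omega \in \Omega_{s_0}$, let $\tau$ denote the first hitting time of $S_u$ (with $\tau = \infty$ if the trajectory stays safe forever). The quantity to be bounded from above is $\probm_{s_0}[\tau = \infty]$. The key idea is to look at the embedded chain $Y_j := B(\omega_{jk})$ observed every $k$ steps. Condition \Cref{eq:thm-inf-upper-induc-decrease} says exactly that, on the event $\{\omega_{jk} \notin S_u\}$, we have $\expv[B(\omega_{(j+1)k}) \mid \omega_{jk}] \ge \gamma^{-k} B(\omega_{jk}) \ge B(\omega_{jk})$ (since $\gamma \in (0,1)$), so the stopped process $Y_{j \wedge \sigma}$, where $\sigma$ is the index of the first $k$-block in which $S_u$ is entered, is a bounded submartingale (bounded because of \Cref{eq:thm-inf-upper-induc-bound}). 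Applying the Optional Stopping Theorem to this bounded submartingale — exactly as in the proof of \Cref{thm:inf-upper-safe}, just with transition kernel $g_{\pi,f}^k$ and discount $\gamma^k$ in place of $g_{\pi,f}$ and $\gamma$ — yields
\begin{equation}
\epsilon \le B(s_0) \le \expv_{s_0}[Y_\sigma] \le \epsilon' \cdot \probm_{s_0}[\sigma < \infty] + 1 \cdot \probm_{s_0}[\sigma = \infty],\notag
\end{equation}
using \Cref{eq:thm-inf-upper-induc-unsafe} on the event $\{\sigma < \infty\}$ (where $Y_\sigma \le \epsilon'$) and \Cref{eq:thm-inf-upper-induc-bound} on $\{\sigma = \infty\}$ (where $Y_\sigma \le 1$). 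Rearranging gives a bound of the form $\probm_{s_0}[\sigma < \infty] \ge (1 - B(s_0))/(1-\epsilon')$, hence $\probm_{s_0}[\sigma = \infty] \le 1 - (\text{something involving } B(s_0))$; I would then need to relate the event $\{\sigma = \infty\}$ (never entering $S_u$ at a multiple of $k$) to the event $\{\tau = \infty\}$ (never entering $S_u$ at all).

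That last relation is where the c-martingale condition \Cref{eq:thm-inf-upper-induc-c-martin} and the $\tfrac{k(k-1)c}{2}$ term enter. The event $\{\tau = \infty\} \subseteq \{\sigma = \infty\}$, so a crude bound would already give $\probm_{s_0}[\tau=\infty] \le \probm_{s_0}[\sigma = \infty]$; but to get the stated linear-in-$B(s_0)$ bound with the $k$-factor, I would instead track $B$ over the first $k-1$ steps using \Cref{eq:thm-inf-upper-induc-c-martin}, which with $c \le 0$ says $\expv[B(\omega_{t+1}) \mid \omega_t] \ge B(\omega_t) + c$, i.e. over $j$ steps $\expv[B(\omega_j)] \ge B(s_0) + jc$. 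Summing/telescoping these and combining with the $k$-sampled submartingale estimate — essentially writing $1 - \probm_{s_0}[\text{safe}] \ge$ a sum over the residue classes $\{i, i+k, i+2k,\dots\}$ for $i = 0, 1, \dots, k-1$, each contributing a term bounded below using $B(\omega_i) \ge B(s_0) + ic$ — should produce the arithmetic-series correction $\sum_{i=0}^{k-1} ic = \tfrac{k(k-1)c}{2}$ and the factor $k$ on $B(s_0)$. This is structurally identical to the counting argument that \Cref{thm:k-ind-lower} uses for the lower bound (cited there as proved via Ville's inequality), just dualized, so I expect the bookkeeping to go through by the same template.

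The main obstacle I anticipate is the measure-theoretic care needed to make the Optional Stopping application rigorous for the $k$-sampled stopped process: one must check that $Y_{j\wedge\sigma}$ is genuinely a submartingale with respect to the filtration $\mathcal{F}_{jk}$ (not merely in expectation), that the stopping index $\sigma$ is a.s. finite or else that boundedness of $Y$ justifies OST without an integrability side condition, and — most delicately — that conditioning on $\{\omega_{jk} \notin S_u\}$ is compatible with \Cref{eq:thm-inf-upper-induc-decrease}, which is stated pointwise for $s \in S \setminus S_u$ and must be lifted to the conditional-expectation statement along the trajectory. A secondary subtlety is that \Cref{eq:thm-inf-upper-induc-decrease} only controls the $k$-step kernel on states outside $S_u$, so if the trajectory enters and leaves $S_u$ within a $k$-block the submartingale property can fail on that block; handling this requires defining $\sigma$ as the first block index for which \emph{some} intermediate state lands in $S_u$ and arguing that on $\{\sigma = j\}$ the value $Y_\sigma = B(\omega_{jk})$ is still controlled — or, alternatively, redefining the stopped process to freeze at the exact hitting time $\tau$ and only use the $k$-step inequality on the strictly-safe prefix. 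I would resolve this the same way \cite{DBLP:conf/lics/UrabeHH17,TakisakaOUH18} and the proof of \Cref{thm:inf-upper-safe} do, by working with the hitting time of $S_u$ directly and invoking the pointwise conditions only on the pre-$\tau$ portion of each trajectory.
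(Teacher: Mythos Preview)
Your second paragraph lands on exactly the paper's scheme: split the trajectory into the $k$ residue classes $\{i,i+k,i+2k,\dots\}$, apply the Optional Stopping argument of \Cref{thm:inf-upper-safe} (with discount $\gamma^k$) to each residue subchain, and use the one-step condition \Cref{eq:thm-inf-upper-induc-c-martin} to lower-bound each subchain's starting value by $\expv[B(\omega_i)]\ge B(s_0)+ic$; summing over $i$ produces $kB(s_0)+\tfrac{k(k-1)c}{2}$. That is the paper's proof.

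The first paragraph, however, contains a genuine wrong turn that you should discard rather than patch. You set $Y_j=B(\omega_{jk})$ (undiscounted), observe it is a bounded submartingale, and then write $B(s_0)\le\expv[Y_\sigma]\le\epsilon'\,\probm[\sigma<\infty]+1\cdot\probm[\sigma=\infty]$. Rearranging that inequality gives $\probm[\sigma<\infty]\le(1-B(s_0))/(1-\epsilon')$, not $\ge$: you have produced an \emph{upper} bound on the unsafe probability, which is the wrong direction for this theorem. The essential device---which you name (``discount $\gamma^k$'') but do not actually carry through---is to work with the discounted stopped process $Y_t=\gamma^t B(\omega_t)$, stopped at the hitting time $\kappa$, and then define the residue subchains $Z^i_j=Y_{i+jk}$. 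Condition \Cref{eq:thm-inf-upper-induc-decrease} makes each $\{Z^i_j\}_j$ a bounded submartingale, and the whole point of the factor $\gamma^t$ is that on $\{\kappa=\infty\}$ one has $\gamma^\kappa B(\omega_\kappa)=0$, so OST gives $\expv[Z^i_0]\le\expv[Y_\kappa]\le\probm[\kappa<\infty]$ with the inequality in the right direction. Without the discount, the $\{\sigma=\infty\}$ term contributes up to $1$ and kills the argument. So: delete the undiscounted displayed inequality, and instead run the $\gamma^t$-weighted argument of \Cref{thm:inf-upper-safe} verbatim on each $Z^i$, then feed in $\expv[B(\omega_i)]\ge B(s_0)+ic$ as you already propose.
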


\noindent\emph{Intuition.} This BC is non-negative and bounded (Condition \ref{eq:thm-inf-upper-induc-bound}). Condition (\ref{eq:thm-inf-upper-induc-c-martin}) is the inverse of the c-martingale property, while Condition (\ref{eq:thm-inf-upper-induc-decrease}) requires the barrier certificate sampled after every $k$-th step to be a $\gamma^k$-scaled submartingale. We prove the theorem by the Optional Stopping Theorem~\cite{williams1991}.

\begin{remark}
    To make the probabilistic bounds in~\Cref{thm:k-ind-lower} and \Cref{thm:k-induc-upper} 
    non-trivial, the value of $k$ should be bounded by
    \begin{equation*}
        1\le k \le \frac{(c-2B(s_0))+\sqrt{4B(s_0)^2+c^2-4c(B(s_0)-2)}}{2c} .
    \end{equation*}
    
\end{remark}

\section{Synthesis of Neural Barrier Certificates}
\label{sec:neural}

In this section, we show that the BCs defined in the previous sections for DNN-controlled systems can be implemented and synthesized in the form of DNNs, akin to those for linear or nonlinear stochastic systems~\cite{ZHAO_NBC}.

We adopt the CEGIS-based method ~\cite{AbateDKKP18}  to train and validate target NBCs. \Cref{fig:flowch} sketches the workflow. In each loop iteration, we train a candidate BC in the form of a neural network which is then passed to the validation. If the validation result is false, we compute a set of counterexamples for future training. This iteration is repeated until a trained candidate is validated or a given timeout is reached. Moreover, we propose a simulation-guided training method by adding additional terms to the loss functions to improve the tightness of upper and lower bounds calculated by the trained NBCs.

We present the synthesis of NBCs in ~\Cref{thm:inf-lower-safe} for probabilistic safety over infinite time horizons, as an example. We defer to Appendix the synthesis of other NBCs.

\begin{wrapfigure}{r}{0.46\textwidth}
	\usetikzlibrary {shapes.geometric}
	\vspace{-1mm}
	\hspace{-2mm}
	\begin{tikzpicture}[ipe stylesheet]
			\footnotesize
		\node (A) at (0, 120) [draw, rounded rectangle, minimum width=1.5cm] {Start};
		\node (B) at (0, 95) [draw,minimum width=2.2cm, rectangle] {Discretize state space};
		\node (C) at (0, 65) [draw, minimum width=2.2cm, rectangle] {\textbf{Train}};		
		\node (D) at (0, 30) [draw, minimum width=2.2cm, rectangle] {\textbf{Validate}};
		\node (E) at (0, -5) [shape aspect=2,scale=0.8,diamond,draw] {Valid?};
		\node (EE) at (90, -5) [shape aspect=2,scale=0.8,diamond,draw] {Timeout?};
		\node (F) at (90, 30) [draw, rectangle,minimum width=2.2cm] {\textbf{Refine}};
		\node (G) at (0, -40) [draw, rounded rectangle,minimum width=1.5cm] {Finish};
		\node (H) at (90, -40) [draw, rounded rectangle,minimum width=1.5cm] {Fail};		
		\draw[-{Stealth[length=2mm]}] (A) -- (B);
		\draw[-{Stealth[length=2mm]}] (B) -- (C);
		\draw[-{Stealth[length=2mm]}] (C) -- node [right] {NBC}  (D);		
		\draw[-{Stealth[length=2mm]}] (D) -- (E);
		\draw[-{Stealth[length=2mm]}] (E) -- node [near start,below]{No} (EE);
		
		\draw[-{Stealth[length=2mm]}] (EE) -- node [right]{No} (F);
		\draw[-{Stealth[length=2mm]}] (E) -- node [right] {Yes} (G);			
		\draw[-{Stealth[length=2mm]},postaction={decorate,decoration={raise=1ex,text along path,reverse path,text align=center,text={Refined state space}}}] (F) |- node [left] {} (C);			
		\draw[-{Stealth[length=2mm]}] (EE) -- node [right] {Yes} (H);			
	\end{tikzpicture}
	
\vspace{-1mm}
	\caption{CEGIS-based NBC synthesis ~\cite{AbateDKKP18}.}
	\label{fig:flowch}
	\vspace{-10mm}
\end{wrapfigure}
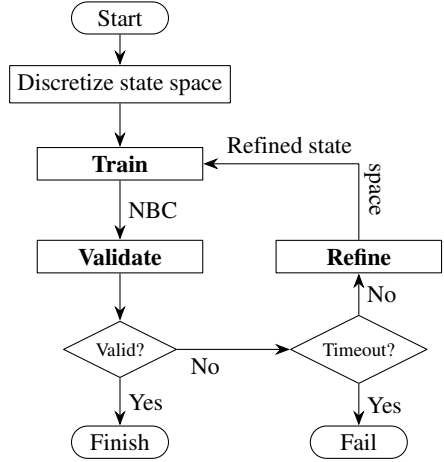

\subsection{Training Candidate NBCs}
\label{subsec:training}

Two pivotal factors in the training phase are the generation of training data and the construction of the loss function.

\inlsec{Training Data Discretization} As the state space $S$ is possibly continuous and infinite, we choose a finite set of states for training candidate NBCs.
This can be achieved by discretizing the state space $S$ and constructing a \emph{discretization} $\tilde{S}\subseteq S$ such that for each $s \in S$, there is a $\tilde{s} \in \tilde{S}$ with $||s-\tilde{s}||_1 < \tau $,
where $ \tau>0$ is called the granularity of $\tilde{S}$. As $S$ is compact and thus bounded, this discretization can be computed by simply picking the vertices of a grid with sufficiently small cells. 
For the re-training after validation failure, $\tilde{S}$ will be reconstructed with counterexamples and a smaller granularity $\tau$.
Once the discretization $\tilde{S}$ is obtained, we construct two finite sets $\tilde{S}_0:=\tilde{S}\cap S_0$ and   $\tilde{S}_u:=\tilde{S}\cap S_u$ 
used for the training process.

\inlsec{Loss Function Construction}
A candidate NBC is initialized as a neural network $h_\theta$ w.r.t. the network parameter $\theta$. $h_\theta$ is trained by minimizing the following loss function:
\begin{equation}
\calL(\theta):=k_1 \cdot \calL_{1}(\theta)+k_2 \cdot \calL_{2}(\theta)+k_3 \cdot \calL_{3}(\theta)+k_4 \cdot \calL_{4}(\theta)+k_5 \cdot \calL_{5}(\theta)\notag 
\end{equation}
where $k_i\in\Rset, \ i=1,\cdots,5$ are the algorithmic parameters balancing the loss terms.

The first loss term is defined via the condition in~\Cref{eq:thm-inf-nonnegative} as:

\begin{equation}
\calL_{1}(\theta)=\frac{1}{|\tilde{S}|} \textstyle\sum\limits_{s\in \tilde{S}} \left(\mathop{\mathrm{max}}\{ 0 - h_\theta(s),0\} \right)\notag 
\end{equation}
Intuitively, a loss will incur if either $h_\theta(s)$ is less than zero for any $s\in \tilde{S}$.

Correspondingly, the second and third loss terms are  defined via \Cref{eq:thm-inf-lower-initial,eq:thm-inf-lower-unsafe} as:
\begin{equation}
\begin{split}
\calL_{2}(\theta)=\frac{1}{|\tilde{S}_0|} \textstyle\sum\limits_{s\in \tilde{S}_0} \left(\mathop{\mathrm{max}}\{  h_\theta(s) - \epsilon ,0\} \right), \mbox{and } \calL_{3}(\theta)=\frac{1}{|\tilde{S}_u|} \textstyle\sum\limits_{s\in \tilde{S}_u} \left(\mathop{\mathrm{max}}\{ 1 - h_\theta(s),0\} \right).\notag 
\end{split}
\end{equation}
The fourth   loss term is defined via the condition in~\Cref{eq:thm-inf-lower-decrease} as:
\begin{equation}
\calL_{4}(\theta)=\frac{1}{|\tilde{S} \setminus \tilde{S}_u|}\textstyle\sum\limits_{s\in \tilde{S} \setminus \tilde{S}_u} \left(  \mathop{\mathrm{max}}\{ \textstyle\sum\limits_{s'\in \mathcal{D}_s}\frac{h_\theta(s')}{N} -h_\theta(s)+\zeta ,0\}  \right)\notag 
\end{equation}
where for each $s\in \tilde{S} \setminus \tilde{S}_u$, $\mathcal{D}_s$ is the set of its successor states such that $\mathcal{D}_s:=\{s'\mid s'=f(s,\pi(s+\delta_i)), \delta_i\sim \mu, i\in [1,N]  \}$,
$N>0$ is the sample number of successor states. We use the mean of $h_\theta(\cdot)$ at the $N$ successor states to approximate the expected value $\expv_{\delta\sim \mu}[B(f(s,\pi(s+\delta)))]$ for each $s\in \tilde{S} \setminus \tilde{S}_u$, 
and $\zeta>0$  to tighten the condition.

\inlsec{Simulation-guided Loss Term}
A trained BC that satisfies the above four conditions can provide lower bounds on probabilistic
safety over infinite time horizons for the system. However, these conditions have nothing to do with the tightness of lower bounds and we may obtain a trivial zero-valued lower bound by the trained BC.

To assure the tightness of lower bounds from trained NBCs, we propose a simulation-guided method based on~\Cref{eq:thm-inf-lowerbound}. 
For each $s_0 \in \tilde{S}_0$, we execute the control system $N'>0$ episodes, and calculate the safety frequency $\varmathbb{f}_s$ of all the $N'$ trajectories over infinite time horizons. Based on the statistical results,   the last loss term is defined as:  
\begin{equation}
\calL_{5}(\theta)=\frac{1}{|\tilde{S}_0|}\textstyle\sum\limits_{s\in \tilde{S}_0} \left( \mathop{\mathrm{max}}\{\varmathbb{f}_s + h_\theta(s) - 1,0\}\right)\notag 
\end{equation}
Intuitively, this term is to enforce the value of the derived lower bound to approach the statistical result as closely as possible, ensuring its tightness. 

We emphasize that our simulation-guided method plus the NBC validation (see next section) is sound, as we will validate the trained BC  to ensure it satisfies all the BC conditions (see also \Cref{thm:soundness}).

\subsection{NBC Validation}

A candidate NBC $h_\theta$ is valid if it meets the~\Cref{eq:thm-inf-nonnegative,eq:thm-inf-lower-initial,eq:thm-inf-lower-unsafe,eq:thm-inf-lower-decrease}. 
The first three conditions condition can be checked by the following constraint 
\begin{equation}
 \mathop{\mathrm{inf}}\limits_{s\in S} h_\theta(s)\ge 0 \   \land \ \mathop{\mathrm{sup}}\limits_{s\in S_0} h_\theta(s)\le \epsilon \  \land \ \mathop{\mathrm{inf}}\limits_{s\in S_u} h_\theta(s)\ge 1 \notag 
\end{equation}
using the interval bound propagation approach  ~\cite{Interval_Bound_Propagation,interval_pr}. 
When any state violates the above equation, it is treated as a counterexample and added to $\tilde{S}$ for future training. 

For \Cref{eq:thm-inf-lower-decrease}, 
~\Cref{theorem:verify_NBC} reduces the validation from infinite states to finite ones, which are easier to check.  

\begin{theorem}
\label{theorem:verify_NBC}
Given an $M_{\mu}$ and a function $B:S\rightarrow\Rset$, 
we have  $ \expv_{\delta\sim \mu}[B(f(s,\pi(s+\delta)))\mid s]-B(s)\le 0$ 
for any state $s\in S \setminus S_u$ if the formula below 
\begin{align}
\expv_{\delta\sim \mu}[B(f(\tilde{s},\pi(\tilde{s}+\delta)))\mid \tilde{s}]\le B(\tilde{s})-\zeta
\label{equ:validation_NBC}
\end{align} holds 
for any state $\tilde{s}\in \tilde{S} \setminus \tilde{S}_u$, where $\zeta=  \tau\cdot L_B\cdot (1+L_f\cdot (1+L_\pi))$ with  $L_f,L_\pi, L_B$ being the Lipschitz constants of $f,\pi$ and $B$,  respectively.
\end{theorem}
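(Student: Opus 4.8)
The plan is to push the finite constraint \Cref{equ:validation_NBC}, which only concerns points of the net $\tilde S\setminus\tilde S_u$, out to every $s\in S\setminus S_u$ by a Lipschitz argument, so that the discretization error is exactly compensated by the slack $\zeta$. First I would fix an arbitrary $s\in S\setminus S_u$ and, using the defining property of the discretization, pick $\tilde s\in\tilde S$ with $\|s-\tilde s\|_1<\tau$; I will assume the granularity is small enough that $\tilde s\in\tilde S\setminus\tilde S_u$ (see the remark on the main obstacle below), so that \Cref{equ:validation_NBC} applies at $\tilde s$.

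Next I would split the target quantity by inserting the value at $\tilde s$:
\begin{align}
\expv_{\delta\sim\mu}[B(f(s,\pi(s+\delta)))\mid s]-B(s)
&=\Big(\expv_{\delta\sim\mu}[B(f(s,\pi(s+\delta)))\mid s]-\expv_{\delta\sim\mu}[B(f(\tilde s,\pi(\tilde s+\delta)))\mid\tilde s]\Big)\notag\\
&\quad+\Big(\expv_{\delta\sim\mu}[B(f(\tilde s,\pi(\tilde s+\delta)))\mid\tilde s]-B(\tilde s)\Big)+\big(B(\tilde s)-B(s)\big).\notag
\end{align}
The middle parenthesis is $\le-\zeta$ by \Cref{equ:validation_NBC}, and the last is $\le L_B\|s-\tilde s\|_1<L_B\tau$ by Lipschitzness of $B$. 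For the first parenthesis I would couple the two expectations through the \emph{same} noise $\delta$ (legitimate because $\delta$ enters additively) and bound, for each fixed $\delta$,
\begin{align}
|B(f(s,\pi(s+\delta)))-B(f(\tilde s,\pi(\tilde s+\delta)))|
&\le L_B\,\|f(s,\pi(s+\delta))-f(\tilde s,\pi(\tilde s+\delta))\|_1\notag\\
&\le L_B L_f\big(\|s-\tilde s\|_1+\|\pi(s+\delta)-\pi(\tilde s+\delta)\|_1\big)\notag\\
&\le L_B L_f(1+L_\pi)\|s-\tilde s\|_1<L_B L_f(1+L_\pi)\tau,\notag
\end{align}
using Lipschitzness of $f$ in its joint argument, then of $\pi$, and $\|(s+\delta)-(\tilde s+\delta)\|_1=\|s-\tilde s\|_1$; taking $\expv_{\delta\sim\mu}$ preserves the inequality. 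Adding the three estimates yields $\expv_{\delta\sim\mu}[B(f(s,\pi(s+\delta)))\mid s]-B(s)<L_B\tau\big(1+L_f(1+L_\pi)\big)-\zeta=0$ by the choice of $\zeta$, which is exactly the claim.

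The step I expect to be the main obstacle is not the Lipschitz bookkeeping but the boundary issue: the net point $\tilde s$ closest to some $s\in S\setminus S_u$ might fall in $\tilde S_u$, where \Cref{equ:validation_NBC} is not assumed. The intended resolution is that the CEGIS loop can always refine $\tilde S$ (smaller $\tau$, added counterexamples) until every $s\in S\setminus S_u$ has a neighbour within $\tau$ lying in $\tilde S\setminus\tilde S_u$; equivalently, one works with a net compatible with the partition of $S$ into $S\setminus S_u$ and $S_u$. A minor point worth stating explicitly is to keep the $\ell_1$ norm throughout (and match the Lipschitz constants $L_f,L_\pi,L_B$ to it) so that the constants compose into $\zeta$ without extra dimension-dependent factors.
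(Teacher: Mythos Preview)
Your argument is correct and follows essentially the same route as the paper: pick a net point $\tilde s$ within $\tau$ of $s$, use Lipschitz continuity of $B$, $f$, and $\pi$ to bound the two perturbation terms by $\tau L_B L_f(1+L_\pi)$ and $\tau L_B$ respectively, and let the slack $\zeta$ absorb them. The boundary concern you flag (that the nearest $\tilde s$ might lie in $\tilde S_u$) is a genuine subtlety that the paper's own proof simply does not address, so your treatment is if anything slightly more careful.
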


To check the satisfiablility of \Cref{equ:validation_NBC} in $h_\theta$ and a state $\tilde{s}$, 
we need to compute the expected value  $\expv_{\delta\sim \mu}[h_\theta(f(\tilde{s},\pi(\tilde{s}+\delta)))\mid \tilde{s}]$. 
However, it is difficult to compute its closed form because $h_\theta$ is provided in the form of neural networks.  
Hence, We bound the  expected value $\expv_{\delta\sim \mu}[h_\theta(f(\tilde{s},\pi(\tilde{s}+\delta)))\mid \tilde{s}]$ 
via interval arithmetic \cite{Interval_Bound_Propagation,interval_pr} instead of computing it, which is inspired by the work~\cite{stab_martingales,reach_avoid_martingale}.
In particular, given the noise distribution $\mu$ and its support $W =\{\delta \in \Rset^n \ | \ \mu(\delta) > 0 \}  $, we first partition $W$ into finitely $m\ge 1$
cells, i.e., $\mathrm{cell(W)} = \{W_1,\cdots,W_m\}$, 
and use  $\mathrm{maxvol} = \mathop{\mathrm{max}}\limits_{W_i \in \mathrm{cell(W)}} \mathrm{vol}(W_i)$  to denote the maximal volume with respect to the Lebesgue measure of any cell in the partition, respectively. For the expected value in \Cref{equ:validation_NBC}, we bound it from above: 
\begin{align}\label{eq:overes-up} 
\expv_{\delta\sim \mu}[h_\theta(f(\tilde{s},\pi(\tilde{s}+\delta)))\mid \tilde{s}]\le \textstyle\sum\limits_{W_i\in \mathrm{cell(W)}}  \mathop{\mathrm{maxvol}} \cdot \mathop{\mathrm{sup}}_{\delta}F(\delta) \notag 
\end{align}
where, $F(\delta)=h_\theta(f(\tilde{s},\pi(\tilde{s}+\delta)))$.   
The supremum can be  calculated via interval arithmetic. We refer interested readers to 
\cite{stab_martingales} and \cite{reach_avoid_martingale}  for more details.

\vspace{-1mm}
\begin{theorem}[Soundness]\label{thm:soundness}
    If a trained NBC is valid, it can certify the almost-sure safety for the qualitative verification, or the derived bound by the NBC is a certified lower/upper bound on the safety probability for the quantitative case.
\end{theorem}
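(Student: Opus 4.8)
The plan is to prove \Cref{thm:soundness} by composing two things that are already established: the validity-checking procedure of \Cref{subsec:training,theorem:verify_NBC} shows that a validated candidate NBC genuinely satisfies the \emph{exact} (infinite-state) BC conditions of the relevant theorem in \Cref{sec:bc} or \Cref{sec:k-induction}, and each of those theorems already states that such a BC yields the corresponding safety certificate (almost-sure safety, or a certified lower/upper bound). So the soundness claim is essentially ``validation is sound for the BC conditions'' $+$ ``the BC conditions imply the stated guarantee'', the latter being exactly the content of \Cref{thm:almost-safe,thm:inf-lower-safe,thm:inf-upper-safe,thm:fin-lower-safe-linear,thm:fin-lower-safe-exp,thm:fin-upper-safe-linear,thm:fin-upper-safe-exp} and the $k$-inductive variants.

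Concretely, I would structure the argument as follows. First, fix which verification task is at hand and hence which theorem $\mathsf{T}$ from \Cref{sec:bc}/\Cref{sec:k-induction} is being instantiated; let $h_\theta$ be the trained network returned as ``valid''. Second, show that validity in the sense of \Cref{subsec:training} (the interval-bound-propagation check on the initial/unsafe/nonnegativity constraints, together with \Cref{equ:validation_NBC} checked on the finite discretization $\tilde S$) implies $h_\theta$ satisfies every hypothesis of $\mathsf{T}$ on \emph{all} of $S$, $S_0$, $S_u$. The pointwise constraints (e.g.\ \Cref{eq:thm-inf-nonnegative,eq:thm-inf-lower-initial,eq:thm-inf-lower-unsafe}) transfer verbatim because interval bound propagation over-approximates $\sup$/$\inf$ over the continuous sets, so passing the check certifies the true inequalities. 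The expectation/martingale-type condition (e.g.\ \Cref{eq:thm-inf-lower-decrease}) transfers from $\tilde S\setminus\tilde S_u$ to $S\setminus S_u$ by \Cref{theorem:verify_NBC}: the slack $\zeta = \tau\cdot L_B\cdot(1+L_f(1+L_\pi))$ is exactly the Lipschitz-continuity budget needed to cover the $\tau$-ball around each discretization point, and the over-approximation of $\expv_{\delta\sim\mu}[h_\theta(f(\tilde s,\pi(\tilde s+\delta)))\mid \tilde s]$ via \Cref{eq:overes-up} (cell partition of $W$ times $\mathrm{maxvol}$ times interval-arithmetic $\sup_\delta F(\delta)$) guarantees the checked inequality is at least as strong as the required one. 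Third, having established that $h_\theta$ is a bona fide BC in the sense of $\mathsf{T}$, invoke $\mathsf{T}$ directly: it yields almost-sure safety in the qualitative case, or the displayed bound $1-B(s_0)$ (resp.\ the linear/exponential/$k$-inductive expressions) as a certified lower/upper bound on the safety probability for every $s_0\in S_0$. I would also remark that the simulation-guided loss term $\calL_5$ only influences which valid $h_\theta$ is found, never whether the final validation succeeds, so it cannot compromise soundness — it only affects tightness.

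The main obstacle — and the step I would spend the most care on — is the second one: making the reduction from the finite check to the infinite BC conditions airtight, because it bundles several approximations (the Lipschitz/granularity slack $\zeta$ of \Cref{theorem:verify_NBC}, and the interval-arithmetic over-estimate of the expectation in \Cref{eq:overes-up}) and one has to be sure each is conservative \emph{in the right direction} for every theorem being instantiated. In particular, for the \emph{upper-bound} theorems the roles of $\sup$ and $\inf$ flip, so the validation constraints and the over/under-approximations must be stated so that passing the check still certifies the hypothesis of, say, \Cref{thm:inf-upper-safe,thm:fin-upper-safe-exp}; I would handle this by treating the qualitative, lower-bound, and upper-bound families in three short parallel cases rather than a single generic statement. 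Beyond that the proof is essentially bookkeeping: no new analytic machinery is needed, since Ville's inequality, the Optional Stopping Theorem, Gronwall's inequality, and Hoeffding's inequality are all already invoked inside the theorems being cited.
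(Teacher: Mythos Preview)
Your proposal is correct and is exactly the argument the paper has in mind: the paper's own proof is a single line (``The proof of soundness is straightforward by the NBC validation''), and what you have written is precisely the unpacking of that line --- validation via interval bound propagation plus \Cref{theorem:verify_NBC} certifies the full BC hypotheses of the relevant theorem in \Cref{sec:bc}/\Cref{sec:k-induction}, after which that theorem delivers the stated guarantee. Your extra care about the direction of the over/under-approximations in the upper-bound cases is a worthwhile elaboration the paper leaves implicit.
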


The proof of soundness is straightforward by the NBC validation.

\section{Evaluation}
\label{sec:experiments}

Our experimental goals encompass evaluating the effectiveness of (i) the qualitative and quantitative verification methods within our framework, (ii) the $k$-inductive BCs, and (iii) the simulation-guided training method, respectively.

\subsection{Benchmarks and Experimental Setup}

We assess the effectiveness of our approach on four classic DNN-controlled tasks from public benchmarks:
Pendulum and Cartpole from the DRL training platform OpenAI's Gym~\cite{GYM}, while B1 and Tora 
commonly used by the state-of-the-art safety verification tools~\cite{ivanov2021verisig}. 
All experiments are executed on a workstation running Ubuntu 18.04, with a 32-core AMD Ryzen Threadripper CPU, 128GB RAM, and a single 24564MiB GPU .

For the safety verification of DNN-controlled systems, we consider state perturbations of uniform noises with zero means and different radii. 
Specifically, for each  state $s = (s_1,\ldots,s_n)$, we add noises $X_1,\ldots,X_n$ to each dimension of $s$ and obtain the perturbed state $(s_1 + X_1,\ldots,s_n + X_n)$, where $X_i \sim \mathbf{U}(-r, r)$ ($1\le i \le n$, $r\ge 0$).

	\begin{wraptable}{r}{0.44\textwidth}
		\vspace{-8mm}
		\centering
		\footnotesize
		\caption{Qualitative verification results.}	
		\setlength{\tabcolsep}{1pt}
		\begin{tabular}{l|r c r r r}
			\hline
			\textbf{Task} & \centering \textbf{Perturbation} & \centering \textbf{Verification} & $k$.   & \textbf{\#Fail.} \\ 
			\hline
			\multirow{4}{*}{CP} & 0  & $\checkmark$ & 1  &  0  \\
			& $r=0.01$  & \textbf{Unknown} & 1  & 0\\
			& $r=0.01$  & $\checkmark$ & 2  & 0\\    
			& $r=0.03$  & \textbf{Unknown} & 1   & 207 \\
			
			\hline
			\multirow{3}{*}{PD} &  $r=0$   & $\checkmark$ & 1   &  0  \\
			&  $r=0.01$  & \textbf{Unknown} & 1   & 675 \\
			&  $r=0.03$ & \textbf{Unknown} & 1   & 720\\
			\hline
			\multirow{4}{*}{Tora} & $r=0$   & $\checkmark$ & 1  &  0  \\
			& $r=0.02$  & \textbf{Unknown} & 1   & 0\\
			& $r=0.02$  & $\checkmark$  & 2   & 0\\
			& $r=0.04$  &  \textbf{Unknown} & 1   & 1113 \\
			\hline
			\multirow{3}{*}{B1} & $r=0$  & $\checkmark$ & 1  &  0  \\
			&  $r=0.1$ &  $\checkmark$ & 1 & 0 \\
			&  $r=0.2$ & \textbf{Unknown} & 1   &43 \\
			\hline
		\end{tabular}
		\vspace{-7mm}
		\label{table:quantitative_verification}
	\end{wraptable}

For qualitative evaluations, the existence of an NBC in~\Cref{thm:almost-safe} can ensure the almost-sure safety of the whole system. 
Due to the data sparsity of an initial state, we randomly choose 10,000 initial states (instead of a single one) from the initial set $S_0$.
For quantitative evaluations, to measure the quantitative safety probabilities from the system level, we calculate the mean values of lower/upper bounds by NBCs on these 10,000 states under different perturbations. The correctness of such system-level safety bounds is witnessed by~\Cref{thm:soundness} as each lower/upper bound on a single state $s_0$ is a certified bound for the exact safety probability from $s_0$, and thus the same holds on the system level. We also simulate 10,000 episodes starting from each of these 10,000 initial states under different perturbations and use the statistical results as the baseline.

\subsection{Effectiveness of Qualitative Safety Verification}
 
	\Cref{table:quantitative_verification} shows the  qualitative verification results under different perturbation radii $r$'s and induction bounds $k$'s. 
	Given a perturbed DNN-controlled system, we verify its qualitative safety by training an NBC under the conditions in \Cref{thm:almost-safe}. Once such an NBC is trained and validated, the system is verified to be almost-surely safe, marked as $\checkmark$. 
	If no valid barrier certificates are trained  within a given timeout, the 
	result is marked as \textbf{Unknown}.

	\begin{figure}[!ht] 
  \centering
		\addtolength{\subfigcapskip}{2pt}
		\setlength{\tabcolsep}{1pt}
		\hspace{-2mm}		
		\begin{tabular}{cc}
			\subfigure[CP]{
				\includegraphics[width=0.42\textwidth]{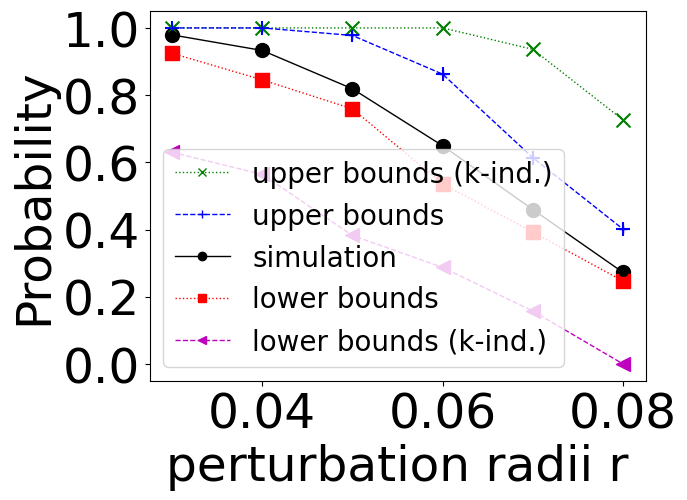}
				
			} \hspace{7mm}
			\subfigure[PD]{
				\includegraphics[width=0.4\textwidth]{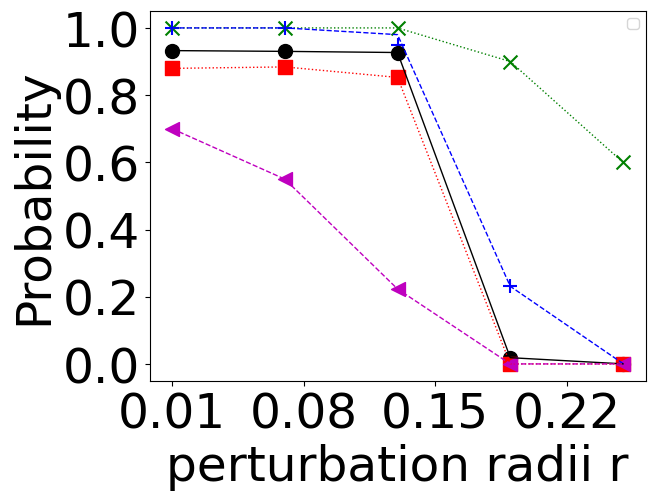}
			}
			\\
			\subfigure[Tora]{
				\includegraphics[width=0.4\textwidth]{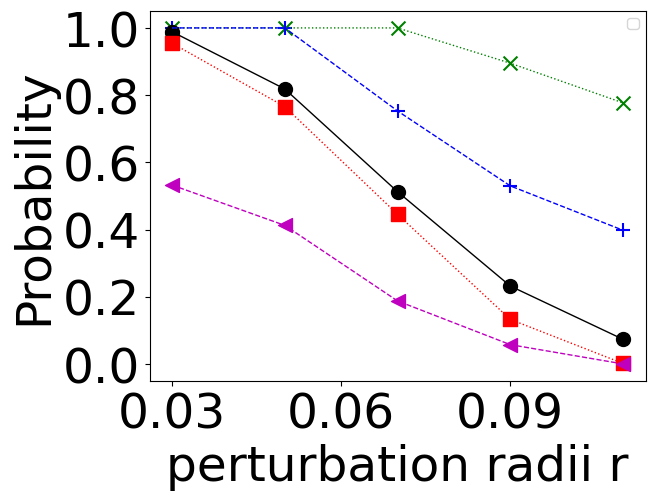}
			}
			\hspace{7mm}
			\subfigure[B1]{
				\includegraphics[width=0.42\textwidth]{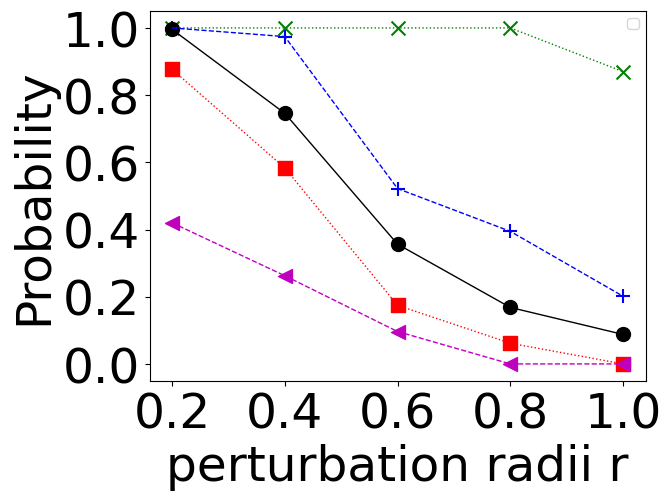}
			}\vspace{-3mm}\\
   
			\subfigure[CP ($r=0.3$)]{
				\includegraphics[width=0.4\textwidth]{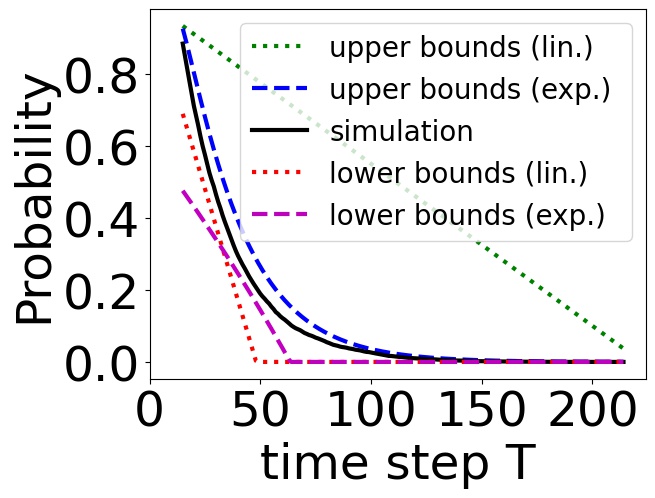}
				
			}\hspace{7mm}
			\subfigure[PD ($r=1.0$)]{
				\includegraphics[width=0.42\textwidth]{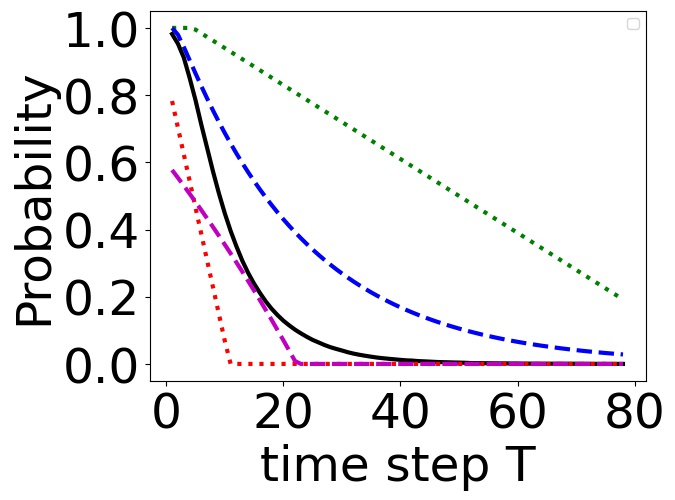}
			}\\
   
			\subfigure[Tora ($r=0.3$)]{
				\includegraphics[width=0.4\textwidth]{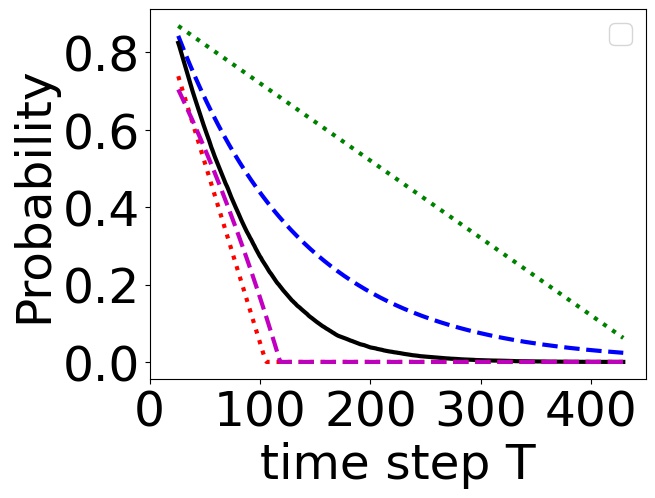}
				
			}\hspace{7mm}
			\subfigure[B1 ($r=1.5$)]{
				\includegraphics[width=0.4\textwidth]{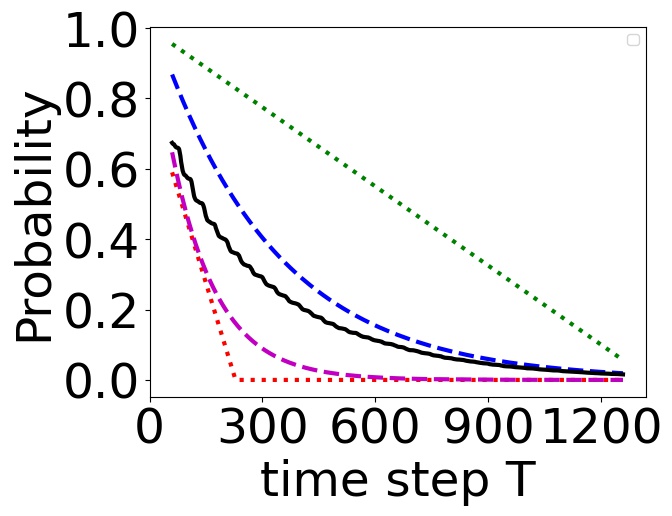}
				
			}
		\end{tabular}	
		\vspace{-5mm}
		\caption{The certified upper and lower bounds over infinite (a-d) and finite (e-h) time horizons, respectively, and their comparison with the simulation results.}
		\label{fig:infi_bound}
		\vspace{-6mm}
	\end{figure}

 As for simulation, 	we record the number of those episodes where the system enters the unsafe region, marked as the column \textbf{\#Fail.} in the table. We can observe that for the systems that are successfully verified by NBCs, no failed episodes are detected by simulation. 
For systems with failed episodes by simulation, no corresponding NBCs can be trained and validated. The consistency experimentally reflects the effectiveness of our approach.

	Furthermore, we note that for CP with $r=0.01$ and Tora with $r =0.02 $, there are no failed episodes, 
	but  no  NBCs in \Cref{thm:almost-safe} can be synthesized for these systems. By applying~\Cref{thm:almost-safe-induc},  we find the $2$-inductive NBCs, which ensures the safety of the systems. 
	It demonstrates that $k$-inductive variants can relax the conditions of NBCs and thus ease the synthesis of valid NBCs for qualitative safety verification.
	
	As the perturbation radius increases, ensuring almost-sure safety becomes challenging, and qualitative verification only results in the conclusion of \textbf{Unknown}. Consequently, we proceed to conduct quantitative verification over infinite time horizons.
	
	\subsection{Effectiveness of Quantitative Safety Verification over Infinite Time Horizon}

	\Cref{fig:infi_bound} (a-d)  show the certified upper and lower bounds and simulation results (i.e., black lines marked with `$\bullet$') over
	infinite time horizons. 
	The red lines marked with `\textcolor{red}{$\blacksquare$}' and blue lines marked with `\textcolor{blue}{+}' represent the mean values of the lower bounds in \Cref{thm:inf-lower-safe} and the upper bounds in \Cref{thm:inf-upper-safe} on the chosen 10,000 initial states calculated by the corresponding NBCs, respectively.
	The purple lines marked with `\textcolor{purple}{$\blacktriangle$}' and green lines marked with `\textcolor{teal}{$\times$}' represent the mean values of the $2$-inductive upper and lower bounds calculated by the corresponding NBCs in \Cref{thm:k-induc-upper,thm:k-ind-lower}, respectively.
	We can find that the certified 
	bounds enclose the simulation outcomes,  demonstrating the effectiveness of our trained NBCs.
	
	\begin{wraptable}{r}{0.45\textwidth}	
		\centering
		\footnotesize
		\setlength{\tabcolsep}{2pt}
		\caption{Synthesis time for different NBCs.}
		\begin{tabular}{l|r r r r}
			\hline
			\textbf{Task} & \textbf{Lower} & \textbf{$2$-Lower} & \textbf{Upper} & \textbf{$2$-Upper}   \\
			\hline
			CP & 2318.5  &  1876.0 & 2891.9  & 2275.3    \\
			PD & 1941.6  &  1524.0 & 2282.7  & 1491.5     \\
			Tora & 280.3  &  218.5 & 895.1  &  650.7   \\
			B1 & 587.4  &  313.6 & 1127.3  &  840.1 \\
			\hline
		\end{tabular}
		\vspace{-7mm}
		\label{table:Synthesis_time}
	\end{wraptable}
	\Cref{table:Synthesis_time} shows
	a comparison of average synthesis time (in seconds) for different NBCs. 
	We observe that the synthesis time of $2$-inductive NBCs is 25\% faster than that of normal NBCs, at the sacrifice of tightness.
	Note that the tightness of certified bounds depends on specific systems and perturbations. Investigating what factors influence the tightness to yield tighter bounds is an interesting future work to explore.

	Approaching zero for infinite time horizons, the lower bounds indicate a declining trend in safety probabilities over time.
	Therefore, we proceed to conduct quantitative verification over finite time horizons, providing both linear and exponential lower and upper bounds.

	\subsection{Effectiveness of Quantitative Safety Verification over Finite Time Horizon}
	\vspace{-1mm}

	\Cref{fig:infi_bound} (e-h) depict the certified upper and lower bounds and simulation results (i.e., black lines) over finite time horizons from the system level. Fix a sufficiently large noise level for each system, the x-axis represents the time horizon, while the y-axis corresponds to the safety probabilities.
	The purple lines  and blue lines  represent the mean values of
	the exponential lower and upper bounds calculated by the corresponding NBCs in ~\Cref{thm:fin-lower-safe-exp} and \Cref{thm:fin-upper-safe-exp}, 
	respectively. 
	The red lines and green lines represent the mean values of
	the linear lower and upper bounds calculated by the corresponding NBCs in ~\Cref{thm:fin-lower-safe-linear} and \Cref{thm:fin-upper-safe-linear}, respectively. 
	The results indicate that our computed certified bounds encapsulate the statistical outcomes.
 Moreover, the exponential upper bounds are always tighter than the linear upper bounds, and the exponential lower bounds become tighter than the linear ones with the increase of time. It is worth exploring the factors to generate tighter results in future work.

	\subsection{Effectiveness of Simulation-guided Loss Term}
	\vspace{-1mm}

\begin{figure}[t] 
		\hspace{-2mm}	
		\addtolength{\subfigcapskip}{2pt}
		\setlength{\tabcolsep}{1pt}
		\begin{tabular}{llll}
			\subfigure[CP]{
				\includegraphics[width=0.48\textwidth]{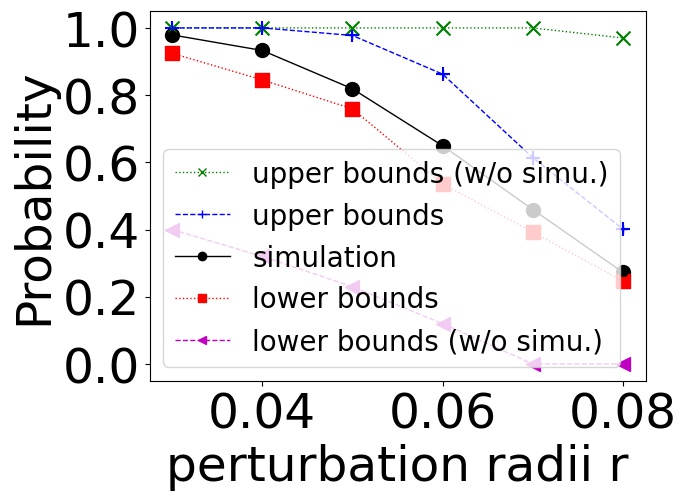}
			}&
			\subfigure[PD]{
				\includegraphics[width=0.46\textwidth]{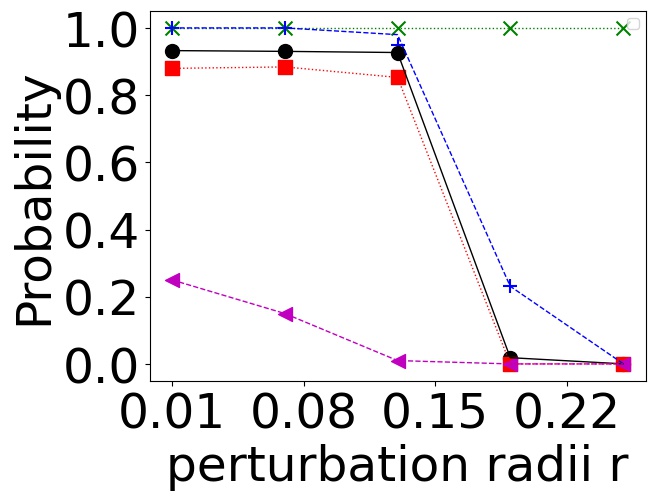}
			}\\
			\subfigure[Tora]{
				\includegraphics[width=0.46\textwidth]{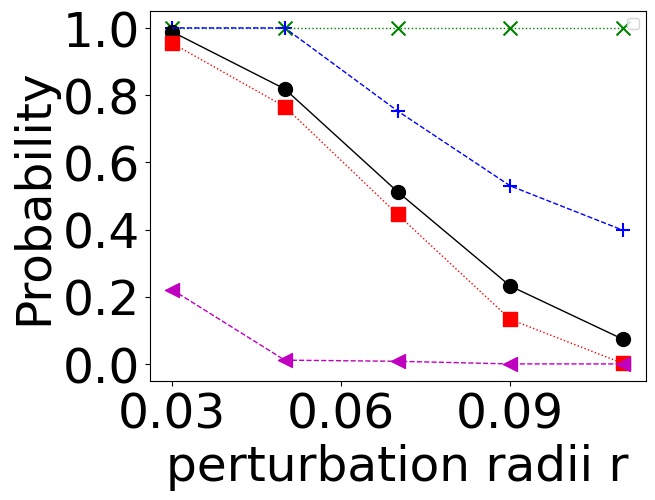}
			}&
			\subfigure[B1]{
				\includegraphics[width=0.48\textwidth]{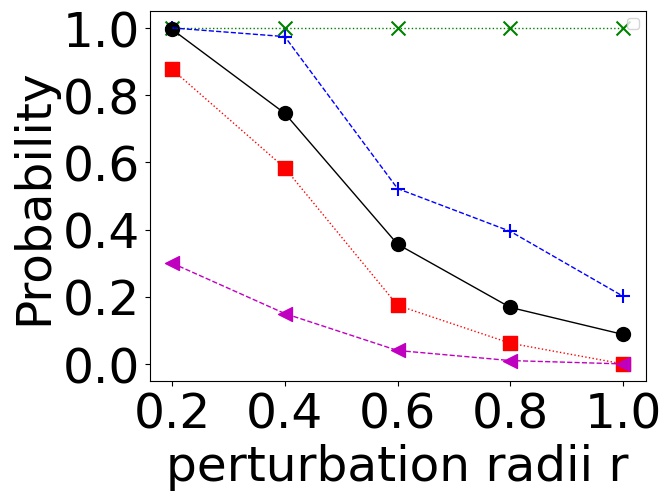}
			}
		\end{tabular}
		\vspace{-5mm}
		\caption{The certified bounds w/ and w/o   simulation-guided loss terms over
			infinite time horizons.}
		\label{fig:infi_bound_compare}
		\vspace{-6mm}
	\end{figure}
 
	The simulation-guided loss term is proposed in \Cref{subsec:training} to tighten the certified bounds calculated by NBCs.
	To evaluate its effectiveness, we choose NBCs in \Cref{thm:inf-lower-safe,thm:inf-upper-safe}, and train them with and without the simulation-guided loss terms. The comparison between them is shown in \Cref{fig:infi_bound_compare}. 
The red lines marked with `\textcolor{red}{$\blacksquare$}' and blue lines marked with `\textcolor{blue}{+}' represent the mean values of the bounds in \Cref{thm:inf-lower-safe,thm:inf-upper-safe} on initial states calculated by the corresponding NBCs trained with the simulation-guided loss terms, respectively. The purple lines with `\textcolor{purple}{$\blacktriangle$}' and green lines with  `\textcolor{teal}{$\times$}' represent the mean values of the bounds calculated by the NBCs trained without the simulation-guided loss terms.
	Apparently, the upper and lower bounds derived by NBCs trained without the simulation-guided loss terms are looser than the bounds trained with these terms. Specifically, 
	the results computed by NBCs with simulation-guided loss terms can achieve an average improvement of 47.5\% for lower bounds and 31.7\% for upper bounds, respectively. 
 Hence, it is fair to conclude that accounting for simulation-guided loss terms is essential when conducting quantitative safety verification.

\section{Related Work}
\vspace{-1mm}
\label{sec:related}
\vspace{-1mm}

\inlsec{Barrier Certificates for Stochastic Systems} Our unified safety verification framework draws inspiration from research on the formal verification of stochastic systems employing barrier certificates.
Prajna \textit{et al.}~\cite{PrajnaJ04,prajna2007framework,prajna2005necessity} propose the use of barrier certificates in the safety verification of stochastic systems.
This idea has been further expanded through data-driven approaches~\cite{SalamatiLSZ21} and $k$-inductive variants~\cite{AnandM0Z22}. 
As the dual problem of computing safety probabilities, computing reachability probabilities in stochastic dynamical systems has been studied for both infinite~\cite{DBLP:conf/cav/FengC00Z20,DBLP:conf/amcc/0001LZF21,xue2023reach} and finite time horizons~\cite{DBLP:journals/tac/0001FZ20,newframwork}.
Probabilistic programs, viewed as stochastic models, have their reachability and termination probabilities investigated using proof rules~\cite{DBLP:journals/pacmpl/FengCSKKZ23} and Martingale-based approaches~\cite{ChakarovS13,DBLP:conf/popl/ChatterjeeFNH16,AsadiC0GM21}, where the latter are subsequently unified through order-theoretic fixed-point approaches \cite{DBLP:conf/lics/UrabeHH17,TakisakaOUH18,TakisakaOUH21}.

\inlsec{Formal Verification of DNN-controlled Systems}
Modeling DNN-controlled systems as Markov Decision Processes (MDPs) and verifying these models using probabilistic model checkers, such as PRISM \cite{kwiatkowska2011prism} and \textsc{Storm} \cite{hensel2021probabilistic}, constitutes a quantitative verification approach.
 Bacci and Parker \cite{bacci2022verified,bacci2020probabilistic} employ abstract interpretation to construct interval MDPs and yield safety probabilities within bounded time. Carr \textit{et al.}  \cite{carr2021task} propose probabilistic verification of DNN-controlled systems by constraining the analysis to partially observable finite-state models. Amir \textit{et al.} propose a scalable approach based on DNN verification techniques to first support complex  properties such as liveness  \cite{amir2021towards}. 

Reachability analysis is a pivotal qualitative approach in the safety verification of DNN-controlled systems.
Bacci \textit{et al.} \cite{bacci2021verifying} introduce a linear over-approximation-based method for calculating reachable set invariants over an infinite time horizon for DNN-controlled systems. Other reachability analysis approaches, such as Verisig \cite{ivanov2021verisig} and Polar \cite{huang2022polar}, focus solely on  bounded time. These approaches do not consider perturbations as they assume actions on states to be deterministic.

\inlsec{Barrier Certificates for Training and Verifying DNN Controllers} 
BC-based methods~\cite{AbateAEGP21,PeruffoAA21} have recently been investigated for training and verifying DNN controllers. 
The key idea is to train a safe DNN controller through interactive computations of corresponding barrier certificates to ensure qualitative safety   \cite{yang2021iterative,deshmukh2019learning}. Existing BC-based approaches for the  verification of DNN-controlled systems focus solely on qualitative aspects but neglect the consideration of perturbations \cite{sha2021synthesizing,zeng2023safety}. Our approach complements them by accommodating the inherent uncertainty in DNN-controlled systems.

\section{Conclusion and Future Work}
\label{sec:conclusion}
We have systematically studied the BC-based qualitative and quantitative safety verification of DNN-controlled systems.
This involves unifying and transforming the verification problems into a general task of training corresponding neural certificate barriers.
We have also defined the conditions that a trained certificate should satisfy, along with the corresponding lower and upper bounds presented in both linear and exponential forms and $k$-inductive variants.
Through the unification of these verification problems, we have established a comprehensive framework for delivering various safety guarantees, whether qualitatively or quantitatively, in a unified manner.

Our framework sheds light on the quest for scalable and multipurpose safety verification of DNN-controlled systems.
It accommodates both qualitative and quantitative aspects in verified results, spans both finite and infinite time horizons, and encompasses certified bounds presented in both linear and exponential forms.
Our work also showcases the potential to circumvent verification challenges posed by DNN controllers. From our experiments, we acknowledge that both qualitative and quantitative verification results are significantly dependent on the quality of the trained  NBCs.
Our next step is to explore more sophisticated deep learning methods to train valid NBCs for achieving more precise verification results.

\section*{Acknowledgments}We thank the anonymous reviewers of CAV 2024 for their valuable comments. The work has been supported by the NSFC Programs (62161146001, 62372176), Huawei Technologies Co., Ltd., the Shanghai International Joint Lab (22510750100), the Shanghai Trusted Industry Internet Software Collaborative Innovation Center, and the National Research Foundation, Singapore, under its RSS Scheme (NRF-RSS2022-009).

 \bibliographystyle{splncs04}
 \bibliography{reference}

\clearpage
\begin{center}
	\Large\textbf{Appendix}
\end{center}

\setcounter{section}{0}

\section{Probability Theory}

We start by reviewing some notions from probability theory.
\vskip 2pt 
\noindent\textbf{Random Variables and Stochastic Processes.}
A probability space is a triple ($\Omega, {\cal F}, {\mathbb P}$), where $\Omega$ is a non-empty sample space, ${\cal F}$ is a
$\sigma$-algebra over $\Omega$, and  $\mathbb P(\cdot)$ is a probability measure over $\cal F$, i.e. a function $\mathbb P$: ${\cal F}  \rightarrow [0,1]$ that satisfies the following properties: (1) ${\mathbb P}(\emptyset)=0$, (2)${\mathbb P}(\Omega - A)=1-{\mathbb P}[A]$, and (3) ${\mathbb P} (\textstyle\cup_{i=0}^{\infty} A_i)= \textstyle\sum_{i=0}^{\infty}{\mathbb P}(A_i)$ for  any
sequence $\{A_i\}_{i=0}^{\infty}$ of pairwise disjoint sets in ${\cal F}$.

Given a probability space ($\Omega, {\cal F}, {\mathbb P}$), a
random variable is a function $X: \Omega \rightarrow {\mathbb R} \cup \{\infty\}$ that is $\cal F$-measurable, i.e., for each $a \in {\mathbb R}$ we have that $\{\omega \in \Omega | X(\omega) \leq a \} \in {\cal F}$.
Moreover, a discrete-time stochastic process is a sequence $\{X_n\}_{n=0}^{\infty}$ of random variables in ($\Omega, {\cal F}, {\mathbb P}$).

\vskip 2pt 
\noindent\textbf{Conditional Expectation.}
Let ($\Omega, {\cal F}, {\mathbb P}$) be a probability space and $X$ be a random variable in ($\Omega, {\cal F}, {\mathbb P}$). The expected value of the random variable $X$,
denoted by ${\mathbb E}[X]$, is the Lebesgue integral of $X$ wrt $\mathbb P$. If the range
of $X$ is a countable set $A$, then ${\mathbb E}[X]= \textstyle\sum_{\omega \in A}\omega \cdot {\mathbb P}(X=\omega)$. 
Given a sub-sigma-algebra ${\cal F}' \subseteq {\cal F}$, a conditional expectation of $X$ for the given ${\cal F}'$ is a ${\cal F}'$-measurable random variable $Y$ such
that, for any $A \in {\cal F}'$, we have:
	\begin{equation}
	\begin{split}
     {\mathbb E}[X \cdot {\mathbb I}_{A}]={\mathbb E}[Y \cdot {\mathbb I}_{A}]
	\end{split}
	\end{equation}

\noindent Here, ${\mathbb I}_{A}: \Omega \rightarrow \{0,1\}$  is an indicator function of $A$, defined as ${\mathbb I}_{A}(\omega)=1$ if $\omega \in A$ and ${\mathbb I}_{A}(\omega)=0$ if $\omega \notin A$. 
Moreover, whenever the conditional expectation exists, it is also almost-surely unique, i.e., for any two ${\cal F}'$-measurable random variables $Y$ and $Y'$ which are conditional expectations of $X$ for given ${\cal F}'$, we have that ${\mathbb P}(Y=Y')=1$.

\vskip 2pt 
\noindent\textbf{Filtrations and Stopping Times.}
A filtration of the probability space ($\Omega, {\cal F}, {\mathbb P}$) is an infinite sequence $\{{{\cal F}_n}\}_{n=0}^{\infty}$ such that
for every $n$, the triple ($\Omega, {\cal F}_n, {\mathbb P}$) is a probability space and ${\cal F}_n \subseteq {\cal F}_{n+1} \subseteq {\cal F}$. 
A stopping time with respect to a filtration $\{{{\cal F}_n}\}_{n=0}^{\infty}$  is a random variable $T: \Omega \rightarrow {\mathbb N}_0 \cup \{\infty\}$ such that, for every $i \in {\mathbb N}_0$, it holds that $\{\omega \in \Omega |T(\omega) \leq i \} \in {\cal F}_i$. 
Intuitively, $T$ returns the time step at which some stochastic process shows a desired behavior and should be “stopped”.

A discrete-time stochastic process  $\{X_n\}_{n=0}^{\infty}$ in ($\Omega, {\cal F}, {\mathbb P}$) is adapted to a filtration $\{{{\cal F}_n}\}_{n=0}^{\infty}$, if
for all $n \geq 0$, $X_n$ is a random variable in ($\Omega, {\cal F}_n, {\mathbb P}$).

\vskip 2pt 
\noindent\textbf{Stopped Process.} Let $\{X_n\}_{n=0}^{\infty}$ be a stochastic process adapted to a filtration $\{\mathcal{F}_n\}_{n=0}^\infty$ and let $T$ be a stopping time w.r.t. $\{\mathcal{F}_n\}_{n=0}^\infty$. The stopped process $\{\tilde{X}_n\}_{n=0}^\infty$ is defined by
$$
\tilde{X}_n=
\begin{cases}
X_n, & \text{ for } n<T, \\
X_T, & \text{ for } n\ge T.
\end{cases}
$$

\vskip 2pt 
\noindent\textbf{Martingales.}
A discrete-time stochastic process $\{X_n\}_{n=0}^\infty$ to a filtration $\{{{\cal F}_n}\}_{n=0}^{\infty}$ is a martingale (resp. supermartingale, submartingale) if for all $n \geq 0$, ${\mathbb E}[|X_n|] < \infty$ and it holds almost surely (i.e., with probability 1) that ${\mathbb E}[X_{n+1}|{\cal F}_n] = X_n$ (resp. ${\mathbb E}[X_{n+1}|{\cal F}_n] \leq X_n$, ${\mathbb E}[X_{n+1}|{\cal F}_n] \geq X_n$).

\vskip 2pt 
\noindent\textbf{Ranking Supermartingales.}
Let $T$ be a stopping time w.r.t. a filtration $\{{{\cal F}_n}\}_{n=0}^{\infty}$. 
A discrete-time stochastic process $\{X_n\}_{n=0}^\infty$ 
w.r.t. a stopping time $T$ 
is a ranking supermartingale (RSM) if for all $n \geq 0$, ${\mathbb E}[|X_n|] < \infty$ and there exists $\epsilon >0$ such that it holds almost surely (i.e., with probability 1) that $X_n \geq 0$ and ${\mathbb E}[X_{n+1}|{\cal F}_n] \leq X_n-\epsilon \cdot {\mathbb I}_{T>n}$.

\begin{theorem}[Hoeffding's Inequality on Supermartingales~\cite{hoeffding1994probability}]\label{thm:hoeffding}
     Let $\{X_n\}_{n\in\Nset_0}$ be a supermartingale w.r.t. a filtration $\{\mathcal{F}_n\}_{n\in\Nset_0}$, and $\{[a_n,b_n]\}_{n\in\Nset_0}$ be a sequence of non-empty intervals in $\Rset$. If $X_0$ is a constant random variable and $X_{n+1}-X_{n}\in [a_n,b_n]$ a.s. for all $n\in\Nset_0$, then 
 \[
 \probm(X_n-X_0\ge \lambda)\le \mathrm{exp}\left(-\frac{2\lambda^2}{\sum_{k=0}^{n-1}(b_k-a_k)^2} \right)
 \]
 for all $n\in\Nset_0$ and $\lambda>0$.
\end{theorem}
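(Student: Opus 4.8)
\noindent\emph{Proof idea.} The plan is to run the classical Chernoff-style exponential-moment argument, adapted to the supermartingale setting. Assume $n\ge 1$ (for $n=0$ the statement is vacuous, since then $X_n-X_0=0<\lambda$) and fix $\lambda>0$. For an arbitrary parameter $t>0$ I would begin from the exponential Markov inequality
\[
\probm(X_n-X_0\ge \lambda)=\probm\bigl(e^{t(X_n-X_0)}\ge e^{t\lambda}\bigr)\le e^{-t\lambda}\,\expv\bigl[e^{t(X_n-X_0)}\bigr],
\]
so that the whole problem reduces to bounding the moment generating function $\expv\bigl[e^{t(X_n-X_0)}\bigr]$ uniformly in the increments.

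The next step is to peel off one increment at a time. Writing $D_k:=X_{k+1}-X_k$ and conditioning on $\mathcal{F}_{n-1}$ (noting that $e^{t(X_{n-1}-X_0)}$ is $\mathcal{F}_{n-1}$-measurable and using the tower property),
\[
\expv\bigl[e^{t(X_n-X_0)}\bigr]=\expv\Bigl[e^{t(X_{n-1}-X_0)}\,\expv\bigl[e^{tD_{n-1}}\mid\mathcal{F}_{n-1}\bigr]\Bigr].
\]
The heart of the proof is a conditional Hoeffding lemma. Since the supermartingale property gives $\expv[D_{n-1}\mid\mathcal{F}_{n-1}]\le 0$ almost surely, and by hypothesis $D_{n-1}\in[a_{n-1},b_{n-1}]$ almost surely with a \emph{deterministic} interval, I would establish that almost surely
\[
\expv\bigl[e^{tD_{n-1}}\mid\mathcal{F}_{n-1}\bigr]\le \exp\!\Bigl(\tfrac{t^2(b_{n-1}-a_{n-1})^2}{8}\Bigr).
\]
To prove this I factor $e^{tD_{n-1}}=e^{t\,\expv[D_{n-1}\mid\mathcal{F}_{n-1}]}\cdot e^{t(D_{n-1}-\expv[D_{n-1}\mid\mathcal{F}_{n-1}])}$: the first factor is $\le 1$ because the conditional mean is nonpositive and $t>0$, while the conditional expectation of the second factor is controlled by the usual Hoeffding lemma applied to a conditionally centered variable confined to an interval of width $b_{n-1}-a_{n-1}$. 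That lemma in turn follows from the convexity estimate $e^{tx}\le \frac{b-x}{b-a}e^{ta}+\frac{x-a}{b-a}e^{tb}$ on $[a,b]$ together with a second-order Taylor expansion of the associated cumulant generating function.

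Substituting this bound gives $\expv\bigl[e^{t(X_n-X_0)}\bigr]\le e^{t^2(b_{n-1}-a_{n-1})^2/8}\,\expv\bigl[e^{t(X_{n-1}-X_0)}\bigr]$; iterating the peeling for $k=n-1,n-2,\dots,0$ and using that $X_0$ is constant (so $\expv[e^{t(X_0-X_0)}]=1$) yields
\[
\expv\bigl[e^{t(X_n-X_0)}\bigr]\le \exp\!\Bigl(\tfrac{t^2}{8}\textstyle\sum_{k=0}^{n-1}(b_k-a_k)^2\Bigr).
\]
Combining with the Markov step, $\probm(X_n-X_0\ge\lambda)\le \exp\bigl(-t\lambda+\tfrac{t^2}{8}\sum_{k=0}^{n-1}(b_k-a_k)^2\bigr)$ for every $t>0$; minimizing this quadratic in $t$ at $t=4\lambda/\sum_{k=0}^{n-1}(b_k-a_k)^2$ produces exactly $\exp\bigl(-2\lambda^2/\sum_{k=0}^{n-1}(b_k-a_k)^2\bigr)$, as claimed. (If $\sum_{k=0}^{n-1}(b_k-a_k)^2=0$ then each $D_k=0$ a.s., so $X_n=X_0$ and the inequality is trivial.) The main obstacle is the conditional Hoeffding lemma: one must use the fact that the bounding interval $[a_{n-1},b_{n-1}]$ is nonrandom so that the convexity/Taylor estimate can be carried out inside the conditional expectation without measurability complications, and one must exploit the supermartingale \emph{inequality} $\expv[D_k\mid\mathcal{F}_k]\le 0$ (rather than an equality) precisely through the nonpositive-mean factorization above; the remaining steps---telescoping the increments, the tower property, and the single-variable optimization over $t$---are routine.
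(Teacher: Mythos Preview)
The paper does not actually prove this theorem: it is listed in the appendix among the probability-theory preliminaries with a citation to Hoeffding and is used as a black box in the proof of \Cref{thm:fin-upper-safe-exp}. So there is no ``paper's own proof'' to compare against.

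Your argument is the standard Chernoff--Hoeffding route and is correct. The exponential Markov step, the tower-property peeling, the conditional Hoeffding lemma exploiting $\expv[D_k\mid\mathcal{F}_k]\le 0$ via the nonpositive-mean factorization, the telescoping product, and the optimization in $t$ are all in order; you also correctly flag the two places where care is needed (deterministic interval widths so the convexity bound can be pulled outside the conditional expectation, and the vacuous $n=0$ / zero-variance edge cases). This is precisely the textbook proof one would expect to find in the cited reference.
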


\begin{theorem}[Optional Stopping Theorem (OST)~\cite{williams1991}]\label{thm:OST}
   Consider a stopping time $T$ w.r.t. a filtration $\{\mathcal{F}_n\}_{n=0}^{\infty}$ and a martingale (resp. supermartingale, submartingale) $\{X_n\}_{n=0}^{\infty}$ adapted to  $\{\mathcal{F}_n\}_{n=0}^{\infty}$. Then $\expv [|X_T|]<\infty$ and $\expv[X_T]=\expv [X_0]$ (resp. $\expv[X_T]\le \expv[X_0]$, $\expv[X_T]\ge \expv[X_0]$) if one of the following conditions holds:
   \begin{itemize}
       \item $T$ is bounded, i.e., $T<c$ for some constant $c$;
       \item $\expv[T]<\infty$, and there exists a constant $c$ such that for all $n\in \Nset$, $|X_{n+1}-X_n|\le c$;
       \item The stopped process $\{\tilde{X}_n\}_{n=0}^{\infty}$ w.r.t. $T$ is bounded, i.e., there exists some constant $c$ such that $|\tilde{X}_n|\le c$ for all $n\in\Nset$.
   \end{itemize}
\end{theorem}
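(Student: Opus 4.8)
The plan is to run the classical two-stage martingale argument: first establish the identity for the \emph{stopped} process at every finite horizon, and then pass to the limit, with the three listed conditions supplying exactly the integrability needed to justify that passage. For the first stage, I would use that $T$ is a stopping time to get $\{T>n\}=\{T\le n\}^{c}\in\mathcal{F}_n$, together with the telescoping identity $\tilde{X}_{n+1}-\tilde{X}_n=\mathbb{I}_{\{T>n\}}(X_{n+1}-X_n)$ for the stopped process $\tilde{X}_n=X_{T\wedge n}$. Conditioning on $\mathcal{F}_n$ and pulling out the $\mathcal{F}_n$-measurable indicator yields $\expv[\tilde{X}_{n+1}\mid\mathcal{F}_n]=\tilde{X}_n+\mathbb{I}_{\{T>n\}}\bigl(\expv[X_{n+1}\mid\mathcal{F}_n]-X_n\bigr)$, so $\{\tilde{X}_n\}$ is again a martingale (resp. supermartingale, submartingale) adapted to $\{\mathcal{F}_n\}$ with $\tilde{X}_0=X_0$. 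Taking expectations gives $\expv[X_{T\wedge n}]=\expv[X_0]$ (resp. $\le$, $\ge$) for every $n\in\Nset$, and this requires no extra hypothesis.

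The second stage is to send $n\to\infty$ in $X_{T\wedge n}\to X_T$, and here each condition is used. Under (i), $T<c$, so $X_{T\wedge n}=X_T$ once $n\ge c$; the conclusion is immediate and $\expv[|X_T|]=\expv[|X_{T\wedge\lceil c\rceil}|]\le\sum_{k=0}^{\lceil c\rceil}\expv[|X_k|]<\infty$ since each $X_k$ is integrable. Under (ii), the stopped process is a bounded (super/sub)martingale, hence converges almost surely by the martingale convergence theorem, the limit agreeing with $X_T$ on $\{T<\infty\}$; since $|X_{T\wedge n}|\le c$, the bounded convergence theorem gives $\expv[X_{T\wedge n}]\to\expv[X_T]$ and $\expv[|X_T|]\le c$. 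Under (iii), $\expv[T]<\infty$ forces $T<\infty$ almost surely, and the bound $|X_{T\wedge n}|\le|X_0|+c\,(T\wedge n)\le|X_0|+cT$ furnishes an integrable dominating variable (using $\expv[|X_0|]<\infty$ from the martingale definition and $\expv[T]<\infty$), so dominated convergence again yields $\expv[X_{T\wedge n}]\to\expv[X_T]$ and $\expv[|X_T|]<\infty$. Combining with the first stage and using that weak inequalities are preserved under limits gives $\expv[X_T]=\expv[X_0]$ (resp. $\le$, $\ge$) in all three cases.

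The main obstacle is exactly this limit interchange: it is the only place where the hypotheses enter, and each case demands its own convergence theorem and, in case (iii), the explicit envelope $|X_0|+cT$. A secondary subtlety is making sure $X_T$ is well defined when $T$ may be infinite; I would handle this by the convention $X_T:=\lim_n\tilde{X}_n$, which exists almost surely in cases (ii) and (iii) and agrees with $X_T$ on $\{T<\infty\}$, or simply by observing $T<\infty$ almost surely in cases (i) and (iii). The supermartingale and submartingale variants introduce no new idea: every equality in the argument above is replaced by the corresponding inequality, and one merely checks that the inequality direction is consistent through both the finite-horizon step and the limit.
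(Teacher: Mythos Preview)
Your argument is the standard textbook proof and is correct. Note, however, that the paper does not supply its own proof of this theorem: it is stated in the appendix as a cited result from Williams~\cite{williams1991} and used as a black box in the proofs of Theorems~\ref{thm:inf-upper-safe} and~\ref{thm:k-induc-upper}. So there is nothing to compare against; you have simply filled in what the paper omits. One small presentational slip: you treat the three bullet points in the order (i), (iii), (ii) of the paper's statement while labeling them (i), (ii), (iii), so your ``Under (ii)'' paragraph actually handles the bounded-stopped-process hypothesis and your ``Under (iii)'' paragraph handles the $\expv[T]<\infty$ with bounded increments hypothesis. The mathematics is unaffected, but you should realign the labels.
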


\begin{theorem}[Ville's Inequality~\cite{ville1939etude}]\label{thm:ville}
For any nonnegative supermartingale $\{X_n\}_{n=0}^\infty$ and any real number $c>0$,
\[
\probm [\mathrm{sup}_{n\ge 0} X_n\ge c]\le \frac{\expv[X_0]}{c}.
\]  
\end{theorem}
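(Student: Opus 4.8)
The plan is to derive Ville's inequality from the Optional Stopping Theorem (\Cref{thm:OST}) by a hitting-time truncation argument, in the spirit of Doob's maximal inequality. I would work in the probability space $(\Omega,\mathcal{F},\probm)$ carrying $\{X_n\}_{n=0}^\infty$, equipped with the natural filtration $\{\mathcal{F}_n\}_{n=0}^\infty$ generated by the process, so that $\{X_n\}$ is a nonnegative supermartingale adapted to it. Fix $c>0$ and, for each $\varepsilon\in(0,c)$, introduce the hitting time
\[
\tau_\varepsilon := \inf\{\,n\ge 0 \mid X_n > c-\varepsilon\,\},\qquad \inf\emptyset := \infty .
\]
Since $\{\tau_\varepsilon\le n\} = \bigcup_{k\le n}\{X_k>c-\varepsilon\}\in\mathcal{F}_n$, each $\tau_\varepsilon$ is a stopping time, and $\tau_\varepsilon\wedge n$ is a \emph{bounded} stopping time for every fixed $n$.

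Next I would apply the first case of \Cref{thm:OST} to the supermartingale $\{X_n\}$ and the bounded stopping time $\tau_\varepsilon\wedge n$, obtaining $\expv[X_{\tau_\varepsilon\wedge n}]\le\expv[X_0]$. Decomposing according to whether $\tau_\varepsilon$ has occurred by time $n$: on $\{\tau_\varepsilon\le n\}$ we have $X_{\tau_\varepsilon\wedge n}=X_{\tau_\varepsilon}>c-\varepsilon$ by definition of $\tau_\varepsilon$, while on the complement $X_{\tau_\varepsilon\wedge n}\ge 0$ by nonnegativity. Hence
\[
(c-\varepsilon)\,\probm[\tau_\varepsilon\le n] \;\le\; \expv[\,X_{\tau_\varepsilon\wedge n}\,\mathbb{I}_{\{\tau_\varepsilon\le n\}}\,] \;\le\; \expv[X_{\tau_\varepsilon\wedge n}] \;\le\; \expv[X_0].
\]
Letting $n\to\infty$, the events $\{\tau_\varepsilon\le n\}$ increase to $\{\tau_\varepsilon<\infty\}$, so continuity of $\probm$ from below yields $(c-\varepsilon)\,\probm[\tau_\varepsilon<\infty]\le\expv[X_0]$.

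Finally I would relate $\{\tau_\varepsilon<\infty\}$ to the target event: $\{\sup_{n\ge 0}X_n\ge c\}\subseteq\{\sup_{n\ge 0}X_n>c-\varepsilon\}=\{\exists\, n:\,X_n>c-\varepsilon\}=\{\tau_\varepsilon<\infty\}$, so $\probm[\sup_{n\ge 0}X_n\ge c]\le\expv[X_0]/(c-\varepsilon)$, and letting $\varepsilon\downarrow 0$ gives the claim. The only delicate point is this last boundary step: one cannot simply take $\tau:=\inf\{n\mid X_n\ge c\}$ and identify $\{\tau<\infty\}$ with $\{\sup_n X_n\ge c\}$, since the supremum need not be attained, which is exactly why the $\varepsilon$-approximation is needed; the remaining ingredients are just the bounded-time optional stopping theorem and Markov's inequality applied to the stopped process.
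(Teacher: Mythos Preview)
Your argument is correct and is the standard derivation of Ville's inequality from the bounded-time Optional Stopping Theorem. There is nothing in the paper to compare it against: \Cref{thm:ville} is listed in the appendix as a classical result with a citation and is \emph{not} proved in the paper; it is only invoked as a tool (e.g., in the proofs of \Cref{thm:inf-lower-safe}, \Cref{thm:fin-lower-safe-linear}, and \Cref{thm:k-ind-lower}). Your $\varepsilon$-approximation to handle the closed threshold $\{\sup_n X_n\ge c\}$ is exactly the right fix for the boundary issue, and every step is justified by \Cref{thm:OST} (first case, bounded stopping time), nonnegativity, and continuity of measure from below.
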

This theorem is also called Doob's nonnegative supermartingale inequality.

\begin{theorem}[Discrete Version of Gronwall's Inequality~\cite{gronwall1919note}]\label{thm:gronwall}
Consider a real number sequence $\{u_n\}_{n=0}^\infty$ such that 
\[
u_{n+1}\le a\cdot u_n +b\ \forall n\ge 0
\]
where $a>0$ and $b\in\Rset$. Then
\[
u_{n+1}\le a^n \cdot u_0+\sum_{k=1}^n a^{n-k}\cdot b\ \forall n\ge 0
\]
    
\end{theorem}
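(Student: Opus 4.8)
The claimed bound is a purely arithmetic consequence of iterating the one-step recurrence $u_{n+1}\le a u_n + b$, so the plan is a direct induction on $n$ (equivalently, a finite telescoping of the recurrence). For $n\ge 0$ let $\textstyle S_n := a^{n} u_0 + \sum_{k=1}^{n} a^{\,n-k} b$ denote the candidate closed-form bound; I will show $u_n \le S_n$ for every $n\ge 0$, from which the displayed inequality is read off at the next index. The base case $n=0$ is immediate: the sum is empty, so $S_0 = u_0$ and $u_0\le S_0$ holds trivially.

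\textbf{Inductive step.} Assume $u_n \le S_n$. Since $a>0$, multiplying this inequality by $a$ preserves its direction (this is the sole use of the hypothesis $a>0$; note that no sign restriction on $b$ or on the $u_j$ is needed), giving $a u_n \le a S_n$. Adding $b$ and invoking the recurrence yields
\[
  u_{n+1} \;\le\; a u_n + b \;\le\; a S_n + b \;=\; a^{n+1} u_0 + \sum_{k=1}^{n} a^{\,n+1-k} b + b .
\]
Absorbing the trailing $b = a^{0}\,b$ into the sum as the $k=n+1$ term rewrites the right-hand side as $a^{n+1} u_0 + \sum_{k=1}^{n+1} a^{\,n+1-k} b = S_{n+1}$, which closes the induction.

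\textbf{Main obstacle.} There is essentially no genuine obstacle here: the argument is elementary and self-contained, the only points of care being (i) using $a>0$ so that scaling an inequality by $a$ keeps its orientation, and (ii) the routine reindexing of the finite geometric-type sum $\sum_{k} a^{\,n-k} b$ when passing from $n$ to $n+1$. An equivalent one-pass route avoiding the inductive hypothesis altogether is to unroll directly, $u_{n}\le a u_{n-1}+b \le a(a u_{n-2}+b)+b \le \cdots \le a^{n} u_0 + b\sum_{j=0}^{n-1} a^{j}$, which produces the same closed form; when $a\neq 1$ one may additionally rewrite $b\sum_{j=0}^{n-1} a^{j} = b\,(a^{n}-1)/(a-1)$ if a more explicit expression is desired downstream.
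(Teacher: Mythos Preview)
The paper does not supply its own proof of this statement: it is listed in the appendix's ``Probability Theory'' preliminaries as a cited classical result, alongside Ville's inequality, the optional stopping theorem, and Hoeffding's inequality, and is invoked without argument in the proof of \Cref{thm:fin-lower-safe-exp}. Your induction is the standard one and is correct.

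One minor point worth flagging: the conclusion as printed in the paper has an index slip. You prove $u_n \le a^{n} u_0 + \sum_{k=1}^{n} a^{\,n-k} b$, which is the correct bound (and is exactly the form the paper actually uses when it applies the lemma, writing $Y_t \le \alpha^{-t} Y_0 + \sum_{k=1}^{t} \alpha^{-(t-k)} \beta$). The displayed statement instead writes $u_{n+1}$ on the left with $a^{n}$ on the right, which fails already at $n=0$ (it would force $u_1 \le u_0$). Your remark that the displayed inequality is ``read off at the next index'' does not quite patch this, since $u_{n+1} \le S_{n+1}$ carries $a^{n+1}$, not $a^{n}$; but this is a typo in the statement rather than a defect in your argument.
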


\section{Supplementary Materials for \Cref{sec:bc}}

Below we fix a DNN-controlled system $M_\mu=(S,S_0,A,\pi,f,R,\mu)$ with an unsafe set $S_u\in S$ such that $S_0\cap S_u=\emptyset$. We also define $\mathrm{1}_A(x)=1$ if $x\in A$ and $\mathrm{1}_A(x)=0$ if $x\not\in A$, where $x\in\Rset$ is a random variable and $A\subseteq \Rset$ is a set.

\subsection{Proof for \Cref{subsec:quali}}

\textbf{\Cref{thm:almost-safe} (Almost-Surely Safety).}
Given an $M_\mu$  with an initial set $S_0$ and an unsafe set $S_u$, if there exists a barrier certificate $B:S\rightarrow \Rset$ such that for some constant $\lambda\in (0,1]$, the following conditions hold:
     \begin{align*}
       & B(s)\le 0 \ &\text{for all }s\in S_0, & \tag{\ref{eq:thm-almost-initial}} \\
       & B(s)>0 \ & \text{for all } s\in S_u, & \tag{\ref{eq:thm-almost-unsafe}} \\
       & B(f(s,\pi(s+\delta)))-B(s)+\lambda\cdot B(s)\le 0  & \text{for all } (s,\delta)\in S\times W,  & \tag{\ref{eq:thm-almost-decrease}}
    \end{align*}
    then the system $M_\mu$ is almost-surely safe, i.e., $\forall s_0\in S_0. \omega\in\Omega_{s_0} \Longrightarrow \omega_t \not\in S_u \ \forall t\in\Nset. $

\begin{proof}
    We prove it by contradiction. Assume that there exists a barrier certificate $B$ satisfying conditions (\ref{eq:thm-almost-initial})-(\ref{eq:thm-almost-decrease}), but the system is unsafe, i.e., there are a time step $T>0$ and an initial state $s_0\in S_0$ such that a trajectory $\omega\in\Omega_{s_0}$ starting from $s_0$ satisfies $\omega_t \in S$ for all $t\in [0,T]$ and $\omega_T\in S_u$. Condition (\ref{eq:thm-almost-decrease}) implies that for any state $s\in S$ such that $B(s)\le 0$ and a noise $\delta \in W$, the value of $B$ at the next step is no more than zero, i.e., $B(f(s,\pi(s+\delta)))\le 0$. As a result, $B(\omega_T)$ must be no more than zero, which is contradictory to Condition (\ref{eq:thm-almost-unsafe}). Therefore, the system with a BC is almost-surely safe. \qed
\end{proof}

\subsection{Proofs for \Cref{subsec:quanti}}

\textbf{\Cref{thm:inf-lower-safe} (Lower Bounds on Infinite-time Safety).}
   Given an $M_\mu$  with an initial set $S_0$ and an unsafe set $S_u$, if there exists a barrier certificate $B:S\rightarrow \Rset$ such that for some constant $\epsilon\in [0,1]$,
    the following conditions hold:
     \begin{align*}
     & B(s)\ge 0 \ & \text{for all } s\in S,
       \tag{\ref{eq:thm-inf-nonnegative}}\\ 
    & B(s) \le \epsilon \ & \text{for all } s\in S_0,  \tag{\ref{eq:thm-inf-lower-initial}}\\
       & B(s)\ge 1 \ & \text{for all } s\in S_u,
       \tag{\ref{eq:thm-inf-lower-unsafe}} \\ 
       & \expv_{\delta\sim \mu}[B(f(s,\pi(s+\delta)))\mid s]-B(s)\le 0\  & \text{for all }s\in S \setminus S_u , 
       \tag{\ref{eq:thm-inf-lower-decrease}}
    \end{align*}
   then the safety probability over infinite time horizons is bounded from below by
    \begin{equation}
     \forall s-0\in S_0.   \probm_{s_0}\left[\{\omega\in\Omega_{s_0}\mid \omega_t\not\in S_u \ \text{for all } t\in \in \Nset  \} \right]\ge 1-B(s_0).
        \tag{\ref{eq:thm-inf-lowerbound}}
    \end{equation}

\begin{proof}
First, we construct a stochastic process $\{X_t\}_{t\ge 0}$ over the set $\Omega$ of all trajectories such that $X_t(\omega):=B(\omega_t)$ for any $\omega\in\Omega$. Let $\kappa$ be the first time that a trajectory enters the unsafe set $S_u$, i.e., $\kappa(\omega):=\mathrm{inf} \{t\in\Nset \mid \omega_t \in S_u\}$. Then we define the stopped process $\{\tilde{X}_t\}_{t\ge 0}$ by
$$
\tilde{X}_t=
\begin{cases}
X_t, & \text{ for } t<\kappa, \\
X_\kappa, & \text{ for } t\ge \kappa.
\end{cases}
$$
Next, we want to prove that $\{\tilde{X}_t\}_{t\ge 0}$ is a non-negative supermartingale. For $t<\kappa$, we have that
\begin{eqnarray*}
   \expv[\tilde{X}_{t+1}\mid \tilde{\mathcal{F}}_t] &= \expv[X_{t+1}\mid \mathcal{F}_t] \le X_t = \tilde{X}_t,
\end{eqnarray*}
where the inequality is obtained by Condition~(\ref{eq:thm-inf-lower-decrease}). For $t\ge \kappa$, we have that
\begin{align*}
    \expv[\tilde{X}_{t+1}\mid \tilde{\mathcal{F}}_t] &= X_\kappa=\tilde{X}_t. 
\end{align*}
Combined with Condition~(\ref{eq:thm-inf-nonnegative}), we can conclude that $\{\tilde{X}_t\}_{t\ge 0}$ is a non-negative supermartingale.
According to Condition (\ref{eq:thm-inf-lower-unsafe}), we have that $S_u\subseteq \{s\in S\mid B(s)\ge 1 \}$.  Therefore, for any initial state $s_0\in S_0$, it follows that
\begin{align*}
   & \probm_{s_0}\left[ \{ \omega\in\Omega_{s_0}\mid \omega_t\in S_u \text{ for some } t\in\Nset  \}  \right] \\
   \le \  & \probm_{s_0}\left[ \{  \omega\in\Omega_{s_0}\mid \mathrm{sup}_{t\in\Nset} X_t(\omega)\ge 1 \}  \right] \\
   = \ & \probm_{s_0}\left[ \{  \omega\in\Omega_{s_0}\mid \mathrm{sup}_{t\in\Nset} \tilde{X}_t(\omega)\ge 1 \}  \right] \\ \le \ & \tilde{X}_0 (\omega)= X_0(\omega) 
\end{align*}
where the last inequality is derived by Ville's Inequality (see \Cref{thm:ville}). 
Finally, by means of complementation, we have the lower bound of \Cref{eq:thm-inf-lowerbound}. \qed

\end{proof}

\textbf{\Cref{thm:inf-upper-safe} (Upper Bounds on Infinite-time Safety).}
Given an $M_\mu$  with an initial set $S_0$ and an unsafe set $S_u$, if there exists a barrier certificate $B:S\rightarrow \Rset$ such that for some constants $\gamma\in (0,1)$, $0\le \epsilon'<\epsilon\le 1$, the following conditions hold:
     \begin{align*}
     & 0\le B(s)\le 1 \ & \text{for all } s\in S,
       \tag{\ref{eq:thm-inf-upper-bound}}\\ 
       & B(s)\ge \epsilon \ &\text{ for all } s\in S_0, \tag{\ref{eq:thm-inf-upper-initial}} \\
         & B(s)\le \epsilon' \ &\text{ for all } s\in S_u,
         \tag{\ref{eq:thm-inf-upper-unsafe}}\\
       & B(s)-\gamma\cdot\expv_{\delta\sim \mu}[B(f(s,\pi(s+\delta)))\mid s]\le 0\  & \text{for all }s\in S \setminus S_u, 
       \tag{\ref{eq:thm-inf-upper-decrease}}
    \end{align*}
then the safety probability over infinite time horizons is bounded from above by
    \begin{equation}
     \forall s_0\in S_0.   \probm_{s_0}\left[\{ \omega\in\Omega_{s_0}\mid \omega_t\not\in S_u \ \text{for all } t\in \Nset  \} \right]\le 1-B(s_0).
        \tag{\ref{eq:thm-inf-upperbound}}
    \end{equation}

\begin{proof}
    Construct a stochastic process $\{X_t\}_{t\ge 0}$ over the set $\Omega$ of all trajectories such that $X_t(\omega):=B(s_t)$ for any $\omega\in\Omega$. Let $\kappa$ be a stopping time at which the state first enters $S_u$, i.e., $\kappa(\omega):=\mathrm{inf} \{t\in\Nset \mid \omega_t \in S_u\}$. Then construct a new stochastic process $\{Y_t\}_{t\ge 0}$ such that $Y_t:=\gamma^t X_t$.  The stopped process $\{\tilde{Y}_t\}_{t\ge 0}$ is defined by
$$
\tilde{Y}_t=
\begin{cases}
Y_t, & \text{ for } t<\kappa, \\
Y_\kappa, & \text{ for } t\ge \kappa.
\end{cases}
$$
Next, we want to prove that $\{\tilde{Y}_t\}_{t\ge 0}$ is a submartingale. For $t<\kappa$, we have that
\begin{align*}
   \expv[\tilde{Y}_{t+1}\mid \tilde{\mathcal{F}}_t] &= \expv[Y_{t+1}\mid \mathcal{F}_t] \\
   &= \expv[\gamma^{t+1}X_{t+1}\mid \mathcal{F}_t]=\gamma^{t+1} \expv [X_{t+1}\mid \mathcal{F}_t] \\
   &= \gamma^t\cdot \gamma\expv[X_{t+1}\mid \mathcal{F}_t] \ge \gamma^t X_t=\tilde{Y}_t
\end{align*}
where the inequality is obtained by Condition~(\ref{eq:thm-inf-upper-decrease}). For $t\ge \kappa$, we have that
\begin{align*}
    \expv[\tilde{Y}_{t+1}\mid \tilde{\mathcal{F}}_t]=Y_\kappa= \tilde{Y}_t
\end{align*}
By Condition~(\ref{eq:thm-inf-upper-bound}), $\tilde{Y}_t\in [0,1]$. Thus, $\{\tilde{Y}_t\}_{t\ge 0}$ is a submartingale satisfying the third condition of the Optional Stopping Theorem (OST) (see \Cref{thm:OST}). By applying OST, we obtain that 
\begin{eqnarray*}
X_0&=&\expv[\tilde{Y}_0]\le \expv[\tilde{Y}_\kappa]=\expv[\gamma^{\kappa} X_{\kappa}] \\
&=& \expv[\gamma^{\kappa}X_{\kappa}\mid \kappa<\infty] \probm_{s_0}(\kappa<\infty)+\expv[\gamma^{\kappa}X_{\kappa}\mid \kappa=\infty] \probm_{s_0}(\kappa=\infty)\\
&=&\expv[\gamma^{\kappa} X_\kappa\mid \kappa<\infty]\probm_{s_0}(\kappa<\infty)+0\le \probm_{s_0}(\kappa<\infty) \\
&=& \probm_{s_0}\left[\{\omega\in \Omega_{s_0}\mid \omega_t\in S_u \text{ for some } t\in \Nset \}\right]
\end{eqnarray*}
where the fourth equality is derived from the fact that $\gamma^\kappa X_\kappa=0$ for $\kappa=\infty$ as $\gamma\in (0,1)$ and the second inequality stems from the fact that $\gamma^\kappa X_\kappa\le 1$.
Finally, by means of complementation, we obtain the upper bound of \Cref{eq:thm-inf-upperbound}. \qed
\end{proof}

\subsection{Proofs for \Cref{subsec:quanti2}}

\textbf{\Cref{thm:fin-lower-safe-linear} (Linear Lower Bounds on Finite-time Safety).}
 Given an $M_\mu$  with an initial set $S_0$ and an unsafe set $S_u$, if there exists a barrier certificate $B:S\to \Rset$ such that for some constants $\lambda > \epsilon  \ge 0$ and $c \ge 0$, the following conditions hold:
        \begin{align*} 
        & B(s)\ge 0 \ & \text{ for all }s\in S, \tag{\ref{eq:thm-fin-lower-linear-nonnegative} }\\
       & B(s)\le \epsilon \ &\text{for all }s\in S_0, \tag{\ref{eq:thm-fin-lower-linear-initial}}\\
       & B(s)\ge \lambda \ & \text{for all } s\in S_u, \tag{\ref{eq:thm-fin-lower-linear-unsafe}}\\ 
       & \expv_{\delta\sim \mu}[B(f(s,\pi(s+\delta)))\mid s]-B(s)\le c\  & \text{for all }s\in S, \tag{\ref{eq:thm-fin-lower-linear-decrease}}
        \end{align*}
 then the safety probability over a finite time horizon $T$ is bounded from below by
 \begin{equation*}
     \forall s_0\in S_0.   \probm_{s_0}\left[\{\omega\mid \omega\in\Omega_{s_0},\omega_t\not\in S_u \ \text{for all } t\le T \} \right]\ge 1-(B(s_0)+cT)/\lambda . 
    \end{equation*}

\begin{proof}
     Construct a stochastic process $\{X_t\}_{t\ge 0}$ over the set $\Omega$ of all trajectories such that $X_t(\omega):=B(s_t)$ for any $\omega\in\Omega$. Let $\kappa$ be a stopping time at which the state first enters $S_u$, i.e., $\kappa(\omega):=\mathrm{inf} \{t\in\Nset \mid \omega_t \in S_u\}$. Then construct a new stochastic process $\{Y_t\}_{t\ge 0}$ such that $Y_t:=X_t+ (\kappa-t)c$. The stopped process $\{\tilde{Y}_t\}_{t\ge 0}$ is defined by
$$
\tilde{Y}_t=
\begin{cases}
Y_t, & \text{ for } t<\kappa, \\
Y_\kappa, & \text{ for } t\ge \kappa.
\end{cases}
$$
Next, we want to prove that $\{\tilde{Y}_t\}_{t\ge 0}$ is a non-negative supermartingale.
According to Condition (\ref{eq:thm-fin-lower-linear-nonnegative}), $\{\tilde{Y}_t\}_{t\ge 0}$ is non-neagtive. For $t<\kappa$, we have that
\begin{align*}
   \expv[\tilde{Y}_{t+1}\mid \tilde{\mathcal{F}}_t]&= \expv[Y_{t+1}\mid \mathcal{F}_t]=\expv[X_{t+1}+(\kappa-t-1)c\mid \mathcal{F}_t] \\
   &= \expv[X_{t+1}\mid \mathcal{F}_t]+ (\kappa-t)c-c\\
   &\le X_t+c+(\kappa-t)c-c\\
   &= X_t+(\kappa-t)c=Y_t=\tilde{Y}_t
\end{align*}
where the inequality is obtained by Condition (\ref{eq:thm-fin-lower-linear-decrease}). For $t\ge \kappa$, we have that
\begin{align*}
    \expv[\tilde{Y}_{t+1}\mid \tilde{\mathcal{F}}_t]=Y_\kappa=\tilde{Y}_t
\end{align*}
Therefore, we can conclude that $\{\tilde{Y}_t\}_{t\ge 0}$ is a non-negative supermartingale. By Ville's Inequality (\Cref{thm:ville}), we have that for a constant $\lambda'>0$,
\begin{align*}
    \probm[\mathrm{sup}_{t\in [0,\kappa]} Y_t\ge \lambda'] =\probm [\mathrm{sup}_{t\in\Nset} \tilde{Y}_t\ge \lambda']\le \frac{X_0+c \kappa}{\lambda'}
\end{align*}
By the definition, $Y_t\ge \lambda'\Leftrightarrow X_t\ge \lambda'-(\kappa-t)c:=r_t$, so we have that $\probm[\mathrm{sup}_{t\in [0,\kappa]} X_t \ge \mathrm{sup}_{t\in [0,\kappa]}r_t]\le \frac{X_0+c\kappa}{\lambda'}$. As the maximum of $r_t$ in $[0,\kappa]$ is $\lambda'$, let $\lambda=\lambda'$, we can derive that
\begin{align*}
\probm[\mathrm{sup}_{t\in [0,\kappa]} X_t\ge \lambda]\le \frac{X_0+c\kappa}{\lambda},
\end{align*}
which by Condition (\ref{eq:thm-fin-lower-linear-unsafe}) implies that
\begin{align*}
  \forall s_0\in S_0.  \probm_{s_0}[\{ \omega\in\Omega_{s_0}\mid \omega_t\in S_u \text{ for some } t\le \kappa \}]\le \frac{X_0(\omega)+c\kappa}{\lambda}
\end{align*}
Finally, by means of complementation, we obtain the lower bound of \Cref{eq:fin-lowerbound-linear}.  \qed
\end{proof}

\textbf{\Cref{thm:fin-lower-safe-exp} (Exponential Lower Bounds on Finite-time Safety).}
 Given an $M_\mu$  with an initial set $S_0$ and an unsafe set $S_u$, if there exists a function $B:S\rightarrow \Rset$ such that for some constants $\alpha> 0, \beta\in\Rset$,  
 and $\gamma\in [0,1)$, the following conditions hold:
     \begin{align*}
      & B(s)\ge 0 \ & \text{ for all }s\in S, \tag{\ref{eq:thm-fin-lower-exp-nonnegative} }\\
       & B(s)\le \gamma \ &\text{for all }s\in S_0, \tag{\ref{eq:thm-fin-lower-exp-initial} }\\
       & B(s) \ge 1 \ & \text{for all } s\in S_u, \tag{\ref{eq:thm-fin-lower-exp-unsafe} }\\ 
       & \alpha \expv_{\delta\sim \mu}[B(f(s,\pi(s+\delta)))\mid s]-B(s) \le \alpha \beta \  & \text{for all }s\in S\setminus S_u. \tag{\ref{eq:thm-fin-lower-exp-moni}}   
    \end{align*}
then the safety probability over a finite time horizon $T$ is bounded from below by
   \begin{equation*}
     \forall s_0\in S_0.   \probm_{s_0}\left[\{\omega\mid \omega\in\Omega_{s_0},\omega_t\not\in S_u \ \text{for all } t\le T \} \right]\ge 1-\frac{\alpha\beta}{\alpha-1} + (\frac{\alpha\beta}{\alpha-1}-B(s_0))\cdot \alpha^{-T}. \tag{\ref{eq:fin-lower-expbound}}
    \end{equation*}

\begin{proof}
Construct a stochastic process $\{X_t\}_{t\ge 0}$ over the set $\Omega$ of all trajectories such that $X_t(\omega):=\omega_t$ for any $\omega\in\Omega$. Let $\kappa$ be the first time that a trajectory enters the unsafe set $S_u$, i.e., $\kappa(\omega):=\mathrm{inf} \{t\in\Nset \mid \omega_t \in S_u\}$. Define the stopped process $\{\tilde{X}_t\}_{t\ge 0}$ by
$$
\tilde{X}_t=
\begin{cases}
X_t, & \text{ for } t<\kappa, \\
X_\kappa, & \text{ for } t\ge \kappa.
\end{cases}
$$
According to Condition (\ref{eq:thm-fin-lower-exp-moni}) and the law of total expectation, we have that 
\begin{align}
& \alpha \expv[B(\tilde{X}_{t+1})\mid \tilde{\mathcal{F}}_t]-B(\tilde{X}_t)\le \alpha\beta \nonumber \\
\Rightarrow \ & \expv[B(\tilde{X}_{t+1})\mid \tilde{\mathcal{F}}_t] \le \frac{B(\tilde{X}_t)}{\alpha}+\beta \nonumber \\
\Rightarrow \ &  \expv[B(\tilde{X}_{t+1})] \le \frac{\expv[B(\tilde{X}_t)]}{\alpha}+\beta \label{eq:gronwalltype}
\end{align}
Next, define a new stochastic process $\{Y_{t}\}_{t\ge 0}$ such that $Y_t:=\expv [B(\tilde{X}_t)]$. By~\Cref{eq:gronwalltype}, we obtain that $Y_{t+1}\le \frac{Y_t}{\alpha}+\beta$.
By the discrete version of Gronwall's Inequality (see \Cref{thm:gronwall}), we have that 
\begin{align}
    Y_{t} \le\ & \alpha^{-t}\cdot Y_0+\sum_{k=1}^t\left(\prod_{i=k+1}^t \alpha^{-1} \right)\cdot \beta \nonumber =\alpha^{-t}\cdot Y_0+\sum_{k=1}^t \left[\alpha^{-(t-k)}\cdot \beta\right] \nonumber\\
   =\ & \alpha^{-t}\cdot Y_0+ \alpha^{-t}\cdot \beta \cdot\sum_{k=1}^{t} \alpha^k \nonumber= \alpha^{-t}\cdot Y_0+ \alpha^{-t}\cdot \beta \cdot \frac{\alpha(1-\alpha^t)}{1-\alpha} \nonumber\\
   =\ & \alpha^{-t}\cdot Y_0+ \frac{\alpha\beta}{\alpha-1}-\frac{\alpha\beta}{\alpha-1}\cdot \alpha^{-t} \label{eq:gronwallbound}
\end{align}
Pick a time step $T>0$. It is observed that $\{\omega\in\Omega \mid X_t(\omega)\in S_u \text{ for some } t\le T \}=\{\omega\in \Omega\mid \tilde{X}_T\in S_u\}$. Thus, we have that
\begin{align*}
& \probm [X_t\in S_u \text{ for some }t\le T] \\
=&\probm [\tilde{X}_T\in S_u]=\expv [\mathrm{1}_{S_u} (\tilde{X}_T)] \\
\le &  \expv[ B(\tilde{X}_T)] \\
\le & \alpha^{-T}\cdot B(X_0)+ \frac{\alpha\beta}{\alpha-1}-\frac{\alpha\beta}{\alpha-1}\cdot \alpha^{-T}
\end{align*}
where the first inequality is obtained by Conditions (\ref{eq:thm-fin-lower-exp-nonnegative}) and (\ref{eq:thm-fin-lower-exp-unsafe}), the second inequality is derived by \Cref{eq:gronwallbound}.
Finally, by means of complementation, we obtain the lower bound. 
\end{proof}

\textbf{\Cref{thm:fin-upper-safe-linear} (Linear Upper Bounds on Finite-time Safety) }
 Given an $M_\mu$  with an initial set $S_0$ and an unsafe set $S_u$, if there exists a barrier function $B:S\rightarrow \Rset$ such that for some constants $\beta\in (0,1), \beta<\alpha<1+\beta, c\ge 0$, the following conditions hold:
     \begin{align*}
      & B(s)\ge 0 \ & \text{ for all }s\in S, \tag{\ref{eq:thm-fin-upper-linear-nonnegative}} \\
       & B(s)\le \beta \ &\text{for all }s\in S\setminus S_u, \tag{\ref{eq:thm-fin-upper-exp-initial} }\\
       & \alpha \le B(s) \le 1+\beta \ & \text{for all } s\in S_u, \tag{\ref{eq:thm-fin-upper-linear-unsafe} }\\ 
       & \expv_{\delta\sim \mu}[B(f(s,\pi(s+\delta)))\mid s]-B(s) \ge c \  & \text{for all }s\in S\setminus S_u. \tag{\ref{eq:thm-fin-upper-linear-moni}}   
    \end{align*}
then the safety probability over a finite time horizon is bounded from above by
     \begin{equation*}
      \forall s_0\in S_0.\   \probm_{s_0}\left[\{\omega\mid \omega\in\Omega_{s_0},\omega_t\not\in S_u \ \text{for all } t\le T \} \right]\le  1-B(s_0)-\frac{1}{2}c\cdot T+\beta.   
    \end{equation*}

\begin{proof}
Construct a stochastic process $\{X_t\}_{t\ge 0}$ over the set $\Omega$ of all trajectories such that $X_t(\omega):=\omega_t$ for any $\omega\in\Omega$. Let $\kappa$ be the first time that a trajectory enters the unsafe set $S_u$, i.e., $\kappa(\omega):=\mathrm{inf} \{t\in\Nset \mid \omega_t \in S_u\}$. Define the stopped process $\{\tilde{X}_t\}_{t\ge 0}$ by
$$
\tilde{X}_t=
\begin{cases}
X_t, & \text{ for } t<\kappa, \\
X_\kappa, & \text{ for } t\ge \kappa.
\end{cases}
$$
Pick a time step $T>0$. It is observed that $\{\omega\in\Omega \mid X_t(\omega)\in S_u \text{ for some } t\le T \}=\{\omega \in\Omega \mid \tilde{X}_T\in S_u\}$. Thus, we have that
\begin{align}
& \probm [X_t\in S_u \text{ for some }t\le T] \nonumber \\
=\ &\probm [\tilde{X}_T\in S_u]=\expv [\mathrm{1}_{S_u} (\tilde{X}_T)] \label{eq:stopped-observe1} 
\end{align}
Moreover, let $0\le T_1\le T_2\le T$, it is found that $\{\omega\in\Omega\mid \tilde{X}_{T_1}\in S_u\}\subseteq \{\omega \in\Omega \mid \tilde{X}_{T_2}\in S_u\}$. Therefore, we have that
\begin{align}\label{eq:stopped-observe2}
    \probm[\tilde{X}_{T_1}\in S_u]\le \probm[\tilde{X}_{T_2}\in S_u]
\end{align}
By Conditions (\ref{eq:thm-fin-upper-exp-initial}) and (\ref{eq:thm-fin-upper-linear-unsafe}), we have that 
\begin{align}\label{eq:newobserve}
 B(\tilde{X}_t)\le \mathrm{1}_{S_u}(\tilde{X}_t)+\beta \text{ for any } t\ge 0   
\end{align}
By Condition (\ref{eq:thm-fin-upper-linear-moni}) and the law of total expectation, we obtain that 
\begin{align*}
   & \expv[B(\tilde{X}_{t+1})\mid \tilde{\mathcal{F}}_t]-B(\tilde{X}_t)\ge c \\
   \Rightarrow \  & \expv[B(\tilde{X}_{t+1})\mid \tilde{\mathcal{F}}_t] \ge B(\tilde{X}_t) +c \\
   \Rightarrow \  & \expv[B(\tilde{X}_{t+1})] \ge \expv[B(\tilde{X}_t)] +c
\end{align*}
By the iterative calculation, we have that $\expv[B(\tilde{X}_{t})] \ge B(X_0)+c\cdot t$.
Then we can derive that
\begin{align*}
   \probm[\tilde{X}_T\in S_u]=&\expv [\mathrm{1}_{S_u}(\tilde{X}_T)] \\
   \ge & \frac{\sum_{t=0}^T \expv[\mathrm{1}_{S_u}(\tilde{X}_t)] }{T+1} \\
   \ge & \frac{\sum_{t=0}^T (\expv[B(\tilde{X}_t)]-\beta)}{T+1} \\
   \ge & \frac{\sum_{t=1}^T B(X_0)+c\cdot t}{T+1}-\beta \\
   = & B(X_0)+\frac{1}{2}c\cdot T-\beta
\end{align*}
where the first equality is obtained by \Cref{eq:stopped-observe1}, the first inequality stems from \Cref{eq:stopped-observe2} and the second inequality is obtained by~\Cref{eq:newobserve}. Finally, by means of complementation, we obtain the upper bound.
\qed

\end{proof}

\textbf{\Cref{thm:fin-upper-safe-exp} (Exponential Upper Bounds on Finite-Time Safety).}
  Given an $M_\mu$  with an initial set $S_0$ and an unsafe set $S_u$, if there exists a barrier function $B:S\to \Rset$ such that for some constants $K'\le K<0$, $\epsilon>0$ and a non-empty interval $[a,b]$, the following conditions hold:
         \begin{align*}
       & B(s)\ge 0 \ &\text{for all }s\in S\setminus S_u, \tag{\ref{eq:thm-fin-upper-exp-nonnegative}}\\
       & K'\le B(s) \le K \ & \text{for all } s\in S_u, \tag{\ref{eq:thm-fin-upper-exp-unsafe}}\\ 
       & \expv_{\delta\sim \mu}[B(f(s,\pi(s+\delta)))\mid s]-B(s)\le -\epsilon \  & \text{for all }s\in S\setminus S_u,  \tag{\ref{eq:thm-fin-upper-exp-moni}}\\
       &  a\le B(f(s,\pi(s+\delta)))-B(s) \le b \ & \text{for all } s\in S\setminus S_u \text{ and } \delta \sim \mu \tag{\ref{eq:thm-fin-upper-exp-diffbound}}, 
    \end{align*}
 then the safety probability over a finite time horizon $T$ is bounded from above by
    \begin{equation*}
      \forall s_0\in S_0.  \probm_{s_0}\left[\{\omega\mid \omega\in\Omega_{s_0},\omega_t\not\in S_u \ \text{for all } t\le T \} \right]\le exp(-\frac{2(\epsilon \cdot T- B(s_0))^2}{T\cdot (b-a)^2}). 
    \end{equation*}

\begin{proof}
Let $\kappa$ be a stopping time over the set $\Omega$ of all trajectories at which the state first enters $S_u$, i.e., $\kappa(\omega):=\mathrm{inf} \{t\in\Nset \mid \omega_t \in S_u\}$.  Define the stopped process $\{\tilde{X}_t\}_{t\ge 0}$ by
$$
X_t=
\begin{cases}
B(s_t), & \text{ for } t<\kappa, \\
B(s_\kappa), & \text{ for } t\ge \kappa.
\end{cases}
$$
Next, we prove that $\{X_t\}_{t\ge 0}$ is ranking supermartingale w.r.t. $\kappa$, i.e., ${\mathbb E}[X_{t+1}|{\cal F}_t] \leq X_t-\epsilon \cdot \mathrm{1}_{\kappa>t}$ for all $t\ge 0$. To prove this inequality, we consider two cases.
\begin{itemize}
      \item  When $\kappa>t$, we have that $X_t=B(s_t)$. By Condition (\ref{eq:thm-fin-upper-exp-moni}), we obtain that $\expv [X_{t+1}\mid \mathcal{F}_t]=\expv_{\delta\sim\mu} [B(f(s_t,\pi(s_t+\delta)))\mid s_t]\le B(s_t)-\epsilon=X_t-\epsilon$. 
      \item When $\kappa\le t$, we have that $X_t=B(s_\kappa)$. Then we can obtain that $\expv [X_{t+1}\mid \mathcal{F}_t]=X_\kappa=X_t$. 
  \end{itemize}  
  According to the two cases, we can conclude that ${\mathbb E}[X_{t+1}|{\cal F}_t] \leq X_t-\epsilon \cdot \mathrm{1}_{\kappa>t}$ for all $t \ge 0$.
  Hence, we prove that $\{X_t\}_{t\ge 0}$ is a ranking supermartingale w.r.t. $\kappa$.
  
Construct a new stochastic process $\{Y_t\}_{t\ge 0}$ such that $Y_t:= X_t+\epsilon\cdot \mathrm{min}\{\kappa,t\}$. Then we want to prove that $\{Y_t\}_{t\ge 0}$  is a supermatirngale and $Y_{t+1}-Y_t\in [a+\epsilon, b+\epsilon]$. First, we have that
  \begin{align*}
\expv [Y_{t+1}\mid \mathcal{F}_t] &= \expv [X_{t+1}+\epsilon \cdot \mathrm{min}\{t+1,\kappa\}\mid \mathcal{F}_t]  \\
&= \expv [X_{t+1} \mid \mathcal{F}_t ]+\epsilon \cdot \expv [\mathrm{min}\{t+1,\kappa\} 
 \mid\mathcal{F}_t]  \\
 &\le X_t-\epsilon\cdot \mathrm{1}_{\kappa> t} +  \epsilon \cdot \expv [\mathrm{min}\{t+1,\kappa\} \mid\mathcal{F}_t]  
\end{align*}
Second, we consider two cases:
\begin{itemize}
    \item If $\kappa>t$, then $\mathrm{min}\{t+1,\kappa\}=t+1$ and $\expv [\mathrm{min}\{t+1,\kappa\} \mid\mathcal{F}_t]=t+1$. We can thus derive that
    \begin{align*}
\expv [Y_{t+1}\mid \mathcal{F}_t] 
 &\le X_t-\epsilon\cdot \mathrm{1}_{\kappa>t} +  \epsilon \cdot \expv [\mathrm{min}\{t+1,\kappa\} \mid\mathcal{F}_t]  \\
 &= X_t-\epsilon+\epsilon\cdot (t+1)  \\
 &= X_t+\epsilon \cdot t \\
 &=  Y_t
\end{align*}
    \item If $\kappa\le t$, then $\mathrm{min}\{t+1,\kappa\}=\kappa=\kappa\cdot \mathrm{1}_{\kappa\le t} $. Since the event $\kappa\cdot \mathrm{1}_{\kappa\le t}$ is measurable in $\mathcal{F}_t$, $\expv [\kappa\cdot \mathrm{1}_{\kappa\le t}\mid \mathcal{F}_t]=\kappa\cdot \mathrm{1}_{\kappa\le t}$. We can thus derive that
    \begin{align*}
\expv [Y_{t+1}\mid \mathcal{F}_t] 
 &\le X_t-\epsilon\cdot \mathrm{1}_{\kappa> t} +  \epsilon \cdot \expv [\mathrm{min}\{t+1,\kappa\} \mid\mathcal{F}_t]  \\
 &= X_t-0+\epsilon\cdot \kappa  \\
 &=  Y_t
\end{align*}    
\end{itemize}
Hence, $\{Y_t\}_{t\ge 0}$ is a supermartingale. Moreover, since $\kappa\le t$ implies $X_{t+1}=X_t$, we have that $X_{t+1}-X_t=\mathrm{1}_{\kappa> t}\cdot (X_{t+1}-X_t)$. Thus, we can obtain that
\begin{align*}
    Y_{t+1}-Y_t &= X_{t+1}-X_t+ \epsilon \cdot (\mathrm{min}\{t+1,\kappa\}-\mathrm{min}\{t,\kappa \}) \\
    &= X_{t+1}-X_t+ \epsilon \cdot \mathrm{1}_{\kappa> t} \\
    &= \mathrm{1}_{\kappa> t}\cdot (X_{t+1}-X_t+ \epsilon)
\end{align*}
where the second equality is derived by the fact that $\mathrm{min}\{t+1,\kappa\}-\mathrm{min}\{t,\kappa \}=\mathrm{1}_{\kappa> t}$. It follows that $Y_{t+1}-Y_t \in [a+\epsilon, b+\epsilon]$.

We use Hoeffding's Inequality on Supermartingales (see \Cref{thm:hoeffding}) to prove our result. Pick a time step $T>0$. Let $\nu=\epsilon\cdot T-X_0$ and $\hat{\nu}=\epsilon\cdot\mathrm{min}\{T,\kappa\}-X_0$. Note that $\nu=\hat{\nu}$ whenever $\kappa>T$. By Conditions (\ref{eq:thm-fin-upper-exp-nonnegative}) and (\ref{eq:thm-fin-upper-exp-unsafe}), we have that $\kappa>T$ iff $X_T\ge 0$. For any $s_0\in S_0$, we can derive that
\begin{align*}
 \probm_{s_0}(\kappa>T)&=\probm_{s_0}(X_T\ge 0\wedge \kappa>T)  \\
 &=\probm_{s_0}((X_T+\hat{\nu}\ge \nu)\wedge \kappa>T) \\
 &\le \probm_{s_0}(X_T+\hat{\nu}\ge \nu) \\
 &= \probm_{s_0}(Y_T-Y_0\ge \epsilon\cdot T-X_0) \\
 &\le \mathrm{exp}\left(\frac{-2\cdot (\epsilon\cdot T-B(s_0))^2}{T\cdot (b-a)^2} \right) 
\end{align*}
for all $T>\frac{B(s_0)}{\epsilon}$.
Finally, as $\{\omega\mid \kappa(\omega)>T \}=\{\omega\mid \omega_t\not\in S_u \text{ for all } t\le T \}$, we have that
\begin{eqnarray*}
   \probm_{s_0}(\kappa>T)=\probm_{s_0}\left[\{ \omega\in\Omega_{s_0}\mid \omega_t\not\in S_u \ \text{for all } t\le T \} \right], 
\end{eqnarray*}
which implies the upper bound. 
\qed
\end{proof}

\section{Supplementary Materials for \Cref{sec:k-induction}}

Fix a DNN-controlled system $M_\mu=(S,S_0,A,\pi,f,R,\mu)$ with an unsafe set $S_u\in S$ such that $S_0\cap S_u=\emptyset$.

\subsection{Proof for \Cref{subsec:k-induc-almost}}

\textbf{\Cref{thm:almost-safe-induc} ($k$-inductive Variant of Almost-Sure Safety).}
     Given an $M_\mu$  with an initial set $S_0$ and an unsafe set $S_u$.
    If there exists a $k$-inductive barrier certificate $B:S\rightarrow \Rset$ such that the following conditions hold:
     \begin{align*}
       & \bigwedge_{0\le i<k} B(g_{\pi,f}^{i}(s,\Delta^i))\le 0 \ & \forall (s,\Delta^i)\in S_0\times W^i, \tag{\ref{eq:thm-almost-indc-initial}} \\
       & B(s)>0 \ & \forall s\in S_u,  \tag{\ref{eq:thm-almost-indc-unsafe}} \\
       &\bigwedge_{0\le i<k} (B(g_{\pi,f}^{i}(s,\Delta^i))\le 0)\Longrightarrow B(g_{\pi,f}^k(s,\Delta^k))\le 0  & \forall (s,\Delta^i)\in S\times W^i,  \tag{\ref{eq:thm-almost-indc-decrease} }
    \end{align*}
    then the system $M_\mu$ is almost-surely safe, i.e., $\forall s_0\in S_0. \omega\in\Omega_{s_0} \Longrightarrow \omega_t \not\in S_u \ \forall t\in\Nset.$

\begin{proof}
    We prove it by contradiction. Assume that there exists a $k$-inductive barrier certificate $B$ satisfying Conditions (\ref{eq:thm-almost-indc-initial}) to (\ref{eq:thm-almost-indc-decrease}), but the system is unsafe, i.e., there exist a time step $T>0$ and an initial state $s_0\in S_0$ such that a trajectory starting from $s_0$ enters the unsafe set $S_u$ at $T$, which means $s_T\in S_u$. By Condition (\ref{eq:thm-almost-indc-unsafe}), we have that $B(s_T)>0$. Condition (\ref{eq:thm-almost-indc-initial}) implies that $B(s_0)\le 0, B(s_1)\le 0, \dots, B(s_{k-1})\le 0$, which means the state sequences starting from the safe set will remain in the safe set for the next $k-1$ consecutive time steps. By Condition (\ref{eq:thm-almost-indc-decrease}), we have that for any $k$ consecutive time steps, if the system is safe, then the system will still be safe in the $(k+1)$-th time step. Then by applying the $k$ induction principle with Condition (\ref{eq:thm-almost-indc-initial}) as the base case and Condition (\ref{eq:thm-almost-indc-decrease}) as the inductive hypothesis, we can derive that $B(s_t)\le 0$ for all $t\in\Nset$. This is contradictory to Condition (\ref{eq:thm-almost-indc-unsafe}). Therefore, the system with a $k$-inductive BC is almost-surely safe. \qed
\end{proof}

\subsection{Proofs for \Cref{subsec:induc-quanti}}

\textbf{\Cref{thm:k-ind-lower} ($k$-inductive Lower Bounds on Unbounded-time Safety).}
   Given an $M_\mu$  with an initial set $S_0$ and an unsafe set $S_u$.
  If there exists a k-inductive barrier certificate $B:S\to \Rset$ 
  such that for some constants $k\in\Nset_{\ge 1}$, $\epsilon\in [0,1]$ and $c\ge 0$, the following conditions hold:
     \begin{align*}
     & B(s) \ge 0 \ & \text{for all } s\in S \tag{\ref{eq:thm-inf-lower-induc-nonnegative}} \\
       & B(s)\le \epsilon \ &\text{for all }s\in S_0, \tag{\ref{eq:thm-inf-lower-induc-initial}} \\
       & B(s)\ge 1 \ & \text{for all } s\in S_u, \tag{\ref{eq:thm-inf-lower-induc-unsafe}} \\ 
       & \expv_{\delta\sim\mu}[B(f(s,\pi(s+\delta)))\mid s]-B(s)\le c \  & \text{for all } s\in  S , \tag{\ref{eq:thm-inf-lower-induc-c-martin}} \\
       & \expv_{\Delta^k\sim \mu^k}[B(g_{\pi,f}^k(s,\Delta^k))\mid s]-B(s)\le 0 \  & \text{for all } s\in  S, \tag{\ref{eq:thm-inf-lower-induc-k-martin}}
    \end{align*}  
then the safety probability over infinite time horizons is bounded from below by
   \begin{equation*}
    \forall s_0\in S_0.    \probm_{s_0}\left[\{\omega\mid \omega_0\in\Omega_{s_0},\omega_t\not\in S_u \ \text{ for all } t\in \Nset  \} \right]\ge 1-kB(s_0)-\frac{k(k-1)c}{2}. 
    \end{equation*}

\begin{proof}
Let $\kappa$ be the first time that a trajectory enters the unsafe set $S_u$, i.e., $\kappa(\omega):=\mathrm{inf} \{t\in\Nset \mid \omega_t \in S_u\}$. Construct a stochastic process $\{X_t\}_{t\ge 0}$ over the set $\Omega$ of all trajectories such that 
for any $\omega\in\Omega$.  
$$
X_t(\omega)=
\begin{cases}
B(s_t), & \text{ for } t<\kappa, \\
B(s_\kappa), & \text{ for } t\ge \kappa.
\end{cases}
$$
According to Condition (\ref{eq:thm-inf-lower-induc-unsafe}), we have that $S_u\subseteq \{s\in S\mid B(s)\ge 1 \}$. Therefore, for any initial state $s_0\in S_0$, it follows that
\begin{align*}
  & \probm_{s_0}[\{\omega\mid \omega\in \Omega_{s_0}, \omega_t\in S_u \text{ for some } t\in \Nset\}] \\
  \le\  & \probm_{s_0}[\{\omega\mid \omega\in\Omega_{s_0}, \mathrm{sup}_{t\in\Nset} X_t(\omega)\ge 1\}] 
\end{align*}
Then construct $k$ new stochastic processes $\{Y^i_t\}_{t\ge 0}$'s for all $i\in [0,k-1]$ such that $Y^i_t:=X_{i+tk}$. Intuitively, given a process $\{X_t\}_{t\ge 0}$, each new process $\{Y^i_t\}_{t\ge 0}$ consists of the states in $\{X_t\}_{t\ge 0}$ after every $k$ steps, starting from $X_i$. Next, we want to prove that each $\{Y^i_t\}_{t\ge 0}$ is a non-negative supermartingale. For $i+tk<\kappa$, we have that
\begin{align*}
    \expv[Y^i_{t+1}\mid \mathcal{F}_t]=\expv[X_{i+(t+1)k}\mid \mathcal{F}_t] \le X_{i+tk}=Y^i_t
\end{align*}
where the inequality is obtained by Condition (\ref{eq:thm-inf-lower-induc-k-martin}). For $i+tk\ge \kappa$, we have that
\begin{align*}
    \expv[Y^i_{t+1}\mid \mathcal{F}_t]= X_{i+tk}=Y^i_t.
\end{align*}
Combined with Condition (\ref{eq:thm-inf-lower-induc-nonnegative}), we can conclude that $\{Y^i_t\}_{t\ge 0}$ is a non-negative supermartingale for all $i\in [0,k-1]$. Then by Boole's inequality, we have that
\begin{align*}
   & \probm_{s_0}[\{ \omega\in\Omega_{s_0}\mid \mathrm{sup}_{t\in\Nset} X_t(\omega)\ge 1\}]  \\
   \le\ & \sum_{i=0}^{k-1}  \probm_{s_0}[\{ \omega\in\Omega_{s_0}\mid \mathrm{sup}_{t\in\Nset} X_{i+tk}(\omega)\ge 1\}]   \\
   = \ & \sum_{i=0}^{k-1}  \probm_{s_0}[\{ \omega\in\Omega_{s_0}\mid \mathrm{sup}_{t\in\Nset} Y^i_t(\omega)\ge 1\}] \\
   \le \ & \sum_{i=0}^{k-1} \expv [X_i]
\end{align*}
where the last inequality is obtained by Doob's nonnegative supermartingale inequality. 
By applying Condition (\ref{eq:thm-inf-lower-induc-c-martin}) and the law of total expectation recursively, we obtain that 
\begin{align*}
  &  \expv[X_{t+1}\mid \mathcal{F}_t]\le X_t+c \\
  \Rightarrow \ &  \expv[X_{t+1}]\le \expv[X_t]+c \\
  \Rightarrow \ & \expv[X_{t+1}] \le X_0+c\cdot (t+1)
\end{align*}
Therefore, we can derive that 
\begin{align*}
& \probm_{s_0}[\{ \omega\in \Omega_{s_0}\mid \omega_t\in S_u \text{ for some } t\in \Nset\}] \\
\le\ & X_0+\sum_{i=1}^{k-1} (X_0+ic) = kX_0+\frac{k(k-1)c}{2} 
\end{align*}
Finally, by means of complementation, we obtain the upper bound.
\qed
\end{proof}

\begin{remark}
    To make the probabilistic bound above
    non-trivial, the value of $k$ should be bounded by
    \begin{equation*}
        1\le k \le \frac{(c-2B(s_0))+\sqrt{4B(s_0)^2+c^2-4c(2+B(s_0))^2}}{2c}
    \end{equation*}
\end{remark}
\textbf{\Cref{thm:k-induc-upper} ($k$-inductive Upper Bounds on Unbounded-time Safety).}
Given an $M_\mu$  with an initial set $S_0$ and an unsafe set $S_u$, if there exists a barrier certificate $B:S\rightarrow \Rset$ such that for some constant $\gamma\in (0,1)$, $0\le \epsilon'<\epsilon\le 1$, $c\le 0$ the following conditions hold:
     \begin{align*}
     & 0\le B(s)\le 1 \ & \text{for all } s\in S
       \tag{\ref{eq:thm-inf-upper-induc-bound}}\\ 
       & B(s)\ge \epsilon \ & \text{ for all } s\in S_0, \tag{\ref{eq:thm-inf-upper-induc-initial}}\\
         & B(s)\le \epsilon' \ & \text{ for all } s\in S_u, \tag{\ref{eq:thm-inf-upper-induc-unsafe}}\\
        &  \expv_{\delta\sim \mu}[B(f(s,\pi(s+\delta)))\mid s]-B(s)\ge c \ 
 & \text{ for all } s\in S \tag{\ref{eq:thm-inf-upper-induc-c-martin} }\\
       & B(s)-\gamma^k \cdot\expv_{\Delta^k\sim \mu^k}[B(g_{\pi,f}^k(s,\Delta^k))\mid s]\le 0\  & \text{for all }s\in S \setminus S_u 
       \tag{\ref{eq:thm-inf-upper-induc-decrease}}
    \end{align*}
then the safety probability over infinite time horizons is bounded from above by
  \begin{equation*}
       \forall s_0\in S_0. \probm_{s_0}\left[\{ \omega\in\Omega_{s_0}\mid\omega_t\not\in X_u \ \text{ for all } t\in \Nset  \} \right]\le 1-k B(s_0)-\frac{k(k-1)c}{2}.  
    \end{equation*}

\begin{proof}
Construct a stochastic process $\{X_t\}_{t\ge 0}$ over the set $\Omega$ of all trajectories such that $X_t(\omega):=B(s_t)$ for any $\omega\in\Omega$. Let $\kappa$ be a stopping time at which the state first enters $S_u$, i.e., $\kappa(\omega):=\mathrm{inf} \{t\in\Nset \mid \omega_t \in S_u\}$. Then construct a new stochastic process $\{Y_t\}_{t\ge 0}$ stopped at $\kappa$ such that 
$$
Y_t=
\begin{cases}
\gamma^t X_t, & \text{ for } t<\kappa, \\
\gamma^\kappa X_\kappa, & \text{ for } t\ge \kappa.
\end{cases}
$$
It is observed that $\{\omega\in\Omega \mid X_t(\omega)\in S_u \text{ for some } t\in\Nset \}=\{\omega \in\Omega \mid \kappa(\omega)<\infty\}$. Thus, we have that
\begin{align}
 \probm [X_t\in S_u \text{ for some }t\in\Nset] =\probm [\kappa<\infty] \label{eq:stopped-observe-new}
\end{align}
Moreover, we can derive that
\begin{align*}
   \probm [\kappa<\infty] \ge\  &\expv[\gamma^\kappa X_{\kappa}\mid \kappa<\infty]\probm[\kappa<\infty] +0 \\
   = \ & \expv[\gamma^\kappa X_{\kappa}\mid \kappa<\infty] \probm[\kappa<\infty]+\expv[\gamma^\kappa X_\kappa\mid \kappa=\infty] \probm[\kappa=\infty] \\
   = \ & \expv[\gamma^\kappa X_\kappa] =\expv[Y_\kappa]
\end{align*}
where the first inequality is obtained by the fact $\gamma\in (0,1)$ and Condition (\ref{eq:thm-inf-upper-induc-bound}), and the second equality holds as $\gamma^\kappa=0$ when $\kappa=\infty$.

Next, we construct $k$ new stochastic processes $\{Z^i_t\}_{t\ge 0}$'s for $i \in [0,1]$ such that
$Z^i_t:=Y_{i+tk}$. Intuitively, given a process $\{Y_t\}_{t\ge 0}$, each new process $\{Z^i_t\}_{t\ge 0}$ consists of the states in $\{Y_t\}_{t\ge 0}$ after every $k$ steps, starting from $Y_i$. We want to prove that each $\{Z_t\}_{t\ge 0}$ is a submartingale. For $i+tk<\kappa$, we have that
\begin{align*}
    \expv[Z^i_{t+1}\mid \mathcal{F}_t]=\ & \expv[\gamma^{i+(t+1)k} X_{i+(t+1)k} \mid \mathcal{F}_t] \\
    =\ &\gamma^{i+tk}\cdot 
    \gamma^k \expv[X_{i+(t+1)k}\mid \mathcal{F}_t] \ge \gamma^{i+tk} X_{i+tk} =Z^i_t
\end{align*}
where the inequality is obtained by Condition (\ref{eq:thm-inf-upper-induc-decrease}). 
For $i+tk\ge \kappa$, we have that
\begin{align*}
    \expv[Z^i_{t+1}\mid \mathcal{F}_t]=Y_{i+tk}=Z^i_t
\end{align*}
Hence, $\{Z^i_t\}_{t\ge 0}$ is a submartingale for all $i\in [0,k-1]$. By Condition (\ref{eq:thm-inf-upper-induc-bound}), $\{Z^i_t\}_{t\ge 0}$ satisfies the conditions of the Optional Stopping Theorem (OST). Then by applying OST, we obtain that
\begin{align*}
    \probm[\kappa<\infty]\ge \expv[Y_\kappa]=\sum_{i=0}^{k-1} \expv[Z^i_\kappa]\ge \sum_{i=0}^{k-1} \expv [Z_0^i]=\sum_{i=0}^{k-1} \expv [X_i]
\end{align*}
By applying Condition (\ref{eq:thm-inf-upper-induc-c-martin}) and the law of total expectation recursively, we have that
\begin{align*}
 &\expv[X_{t+1}\mid\mathcal{F}_t]\ge X_t+c \\
 \Rightarrow \ & \expv[X_{t+1}]\ge \expv[X_t]+c \\
 \Rightarrow \ & \expv[X_{t+1}]\ge X_0 + (t+1)c
\end{align*}
Therefore, we have that
\begin{align*}
\probm[\kappa<\infty]\ge X_0+\sum_{i=1}^{k-1} (X_0+ic)= kX_0+\frac{k(k-1)c}{2}
\end{align*}
Finally, by means of complementation, we obtain the upper bound.
\qed
\end{proof}

\section{Supplementary Materials for \Cref{sec:neural}}

Similar to those in \Cref{subsec:training}, 
the training phase of the other NBCs also involves two steps, i.e., training data construction and loss
function definition. 
Training data construction can be achieved by discretizing the state space $S$ and constructing a \emph{discretization} $\tilde{S}\subseteq S$ such that for each $s \in S$, there is a $\tilde{s} \in \tilde{S}$ with $||s-\tilde{s}||_1 < \tau $,
where $ \tau>0$ is called the granularity of $\tilde{S}$. 
Once the discretization $\tilde{S}$ is obtained, we construct two finite sets $\tilde{S}_0:=\tilde{S}\cap S_0$ and   $\tilde{S}_u:=\tilde{S}\cap S_u$ 
used for the training process.

We provide  the training loss functions and validation theorems for several barrier certificates, 
The fundamental idea for the others is consistent.
The training loss functions are composed of the conditions of NBCs and the simulation-guided term for tightness. 
In the validation phase, 
we utilize Lipschitz continuity to reduce the validation of the infinite state space to a finite set of states
, by applying stricter conditions.

\subsection{Loss Functions of NBCs}

\paragraph{The loss function of NBCs in \Cref{thm:inf-upper-safe}.}
Similarly, we define the loss function of NBCs in \Cref{thm:inf-upper-safe} as:

\begin{equation}
\begin{split}
&\calL(\theta):=k_1 \cdot \calL_{1}(\theta)+k_2 \cdot \calL_{2}(\theta)+k_3 \cdot \calL_{3}(\theta)+k_4 \cdot \calL_{4}(\theta)+k_5 \cdot \calL_{5}(\theta)\\
&\calL_{1}(\theta)=\frac{1}{|\tilde{S}|} \sum\limits_{s\in \tilde{S}} \left(\mathop{\mathrm{max}}\{ h_\theta(s) - 1,0\} + \mathop{\mathrm{max}}\{ 0 - h_\theta(s),0\}\right)\\
&\calL_{2}(\theta)=\frac{1}{|\tilde{S}_0|} \sum\limits_{s\in \tilde{S}_0} \left(\mathop{\mathrm{max}}\{\epsilon - h_\theta(s),0\} \right)\\
&\calL_{3}(\theta)=\frac{1}{|\tilde{S}_u|} \sum\limits_{s\in \tilde{S}_u} \left(\mathop{\mathrm{max}}\{h_\theta(s)-\epsilon',0\} \right)\\
&\calL_{4}(\theta)=\frac{1}{|\tilde{S} \setminus \tilde{S}_u|}\sum\limits_{s\in \tilde{S} \setminus \tilde{S}_u} \left(  \mathop{\mathrm{max}}\{h_\theta(s) - \gamma\cdot\sum\limits_{s'\in \mathcal{D}_s}\frac{h_\theta(s')}{N}+\zeta ,0\}  \right)\\
&\calL_{5}(\theta)=\frac{1}{|\tilde{S}_0|}\sum\limits_{s\in \tilde{S}_0} \left( \mathop{\mathrm{max}}\{1 - \varmathbb{f}_s - h_\theta(s) ,0\}\right)
\end{split}
\end{equation}
where $\calL_{1}(\theta),\cdots, \calL_{4}(\theta)$ are defined via the four conditions of the barrier certificate, correspondingly. $\calL_{5}(\theta)$ is the regularization term to assure the tightness of upper bounds via the same simulation-based method and \Cref{eq:thm-inf-upperbound}.

\paragraph{The loss function of NBCs in \Cref{thm:fin-lower-safe-linear}.}
The loss function of NBCs in \Cref{thm:fin-lower-safe-linear} is: 

\begin{equation}
\begin{split}
&\calL(\theta):=k_1 \cdot \calL_{1}(\theta)+k_2 \cdot \calL_{2}(\theta)+k_3 \cdot \calL_{3}(\theta)+k_4 \cdot \calL_{4}(\theta)+k_5 \cdot \calL_{5}(\theta)\\
&\calL_{1}(\theta)=\frac{1}{|\tilde{S}|} \sum\limits_{s\in \tilde{S}} \left(\mathop{\mathrm{max}}\{0 - h_\theta(s),0\}\right)\\
&\calL_{2}(\theta)=\frac{1}{|\tilde{S}_0|} \sum\limits_{s\in \tilde{S}_0} \left( \mathop{\mathrm{max}}\{h_\theta(s)-\epsilon,0\}\right)\\
&\calL_{3}(\theta)=\frac{1}{|\tilde{S}_u|} \sum\limits_{s\in \tilde{S}_u} \left(\mathop{\mathrm{max}}\{\lambda - h_\theta(s),0\} \right)\\
&\calL_{4}(\theta)=\frac{1}{|\tilde{S}|}\sum\limits_{s\in \tilde{S}} \left(  \mathop{\mathrm{max}}\{\sum\limits_{s'\in \mathcal{D}_s}\frac{h_\theta(s')}{N} - h_\theta(s) - c + \zeta ,0\}  \right)\\
&\calL_{5}(\theta)=\frac{1}{|\tilde{S}_0|} \sum\limits_{s\in \tilde{S}_0}\sum\limits_{t=1}^{T} \left( \mathop{\mathrm{max}}\{ \varmathbb{f}_t^{s} -1 +(h_\theta(s)+c \cdot t)/\lambda, 0 \}\right)
\end{split}
\end{equation}
where the last term $\calL_{5}(\theta)$ is to assure the tightness of bounds using the simulation-based method. Specifically, we execute the control system from each $s\in \tilde{S}_0$, and calculate the safe frequency at each time step $\varmathbb{f}_t^{s}, t=1,\cdots,T$ of all these trajectories starting from each initial state $s$.

\paragraph{The loss function of NBCs in \Cref{thm:fin-lower-safe-exp}.}

The loss function of NBCs in \Cref{thm:fin-lower-safe-exp} can be defined as: 
\vspace{-2mm}

\begin{equation}
\begin{split}
&\calL(\theta):=k_1 \cdot \calL_{1}(\theta)+k_2 \cdot \calL_{2}(\theta)+k_3 \cdot \calL_{3}(\theta)+k_4 \cdot \calL_{4}(\theta)+k_5 \cdot \calL_{5}(\theta)\\
&\calL_{1}(\theta)=\frac{1}{|\tilde{S}|} \sum\limits_{s\in \tilde{S}} \left(\mathop{\mathrm{max}}\{0 - h_\theta(s),0\} \right)\\
&\calL_{2}(\theta)=\frac{1}{|\tilde{S}_0|} \sum\limits_{s\in \tilde{S}_0} \left(\mathop{\mathrm{max}}\{h_\theta(s)-\gamma,0\}\right)\\
&\calL_{3}(\theta)=\frac{1}{|\tilde{S}_u|} \sum\limits_{s\in \tilde{S}_u} \left(\mathop{\mathrm{max}}\{1 - h_\theta(s),0\} \right)\\
&\calL_{4}(\theta)=\frac{1}{|\tilde{S}\setminus \tilde{S}_u|} \sum\limits_{s\in \tilde{S}\setminus \tilde{S}_u} \left(  \mathop{\mathrm{max}}\{\alpha\sum\limits_{s'\in \mathcal{D}_s}\frac{h_\theta(s')}{N} - h_\theta(s) - \alpha \beta \  + \zeta ,0\}  \right)\\
& \calL_{5}(\theta)=\frac{1}{|\tilde{S}_0|}\sum\limits_{s\in \tilde{S}_0} \sum\limits_{t=1}^{T} \left( \mathop{\mathrm{max}}\{ \varmathbb{f}_t^{s} - 1+\frac{\alpha\beta}{\alpha-1} - (\frac{\alpha\beta}{\alpha-1}-h_\theta(s))\cdot \alpha^{-t},0
\}\right)
\end{split}
\end{equation}
where the last term $\calL_{5}(\theta)$ is to assure the tightness of bounds using the simulation-based method. Specifically, we execute the control system from each $s\in \tilde{S}_0$, and calculate the safe frequency at each time step $\varmathbb{f}_t^s, t=1,\cdots,T$ of all these trajectories.

\vspace{-2mm}

\paragraph{The loss function of NBCs in \Cref{thm:fin-upper-safe-linear}.}
We define the loss function as: 
\vspace{-2mm}
\begin{equation}
\begin{split}
&\calL(\theta):=k_1 \cdot \calL_{1}(\theta)+k_2 \cdot \calL_{2}(\theta)+k_3 \cdot \calL_{3}(\theta)+k_4 \cdot \calL_{4}(\theta)+k_5 \cdot \calL_{5}(\theta)\\
&\calL_{1}(\theta)=\frac{1}{|\tilde{S}|} \sum\limits_{s\in \tilde{S}} \left(\mathop{\mathrm{max}}\{0 - h_\theta(s),0\} \right)\\
&\calL_{2}(\theta)=\frac{1}{|\tilde{S}\setminus \tilde{S}_u|} \sum\limits_{s\in \tilde{S}\setminus \tilde{S}_u} \left(\mathop{\mathrm{max}}\{ h_\theta(s) - \beta,0\}  \right)\\
&\calL_{3}(\theta)=\frac{1}{| \tilde{S}_u|} \sum\limits_{s\in \tilde{S}_u} \left(  \mathop{\mathrm{max}}\{\alpha - h_\theta(s) ,0\} + \mathop{\mathrm{max}}\{ h_\theta(s)  -1 - \beta ,0\} \right)\\
&\calL_{4}(\theta)=\frac{1}{|\tilde{S}\setminus \tilde{S}_u|} \sum\limits_{s\in  \tilde{S}\setminus \tilde{S}_u} \left(   \mathop{\mathrm{max}}\{ c - \sum\limits_{s'\in \mathcal{D}_s}\frac{h_\theta(s')}{N} + h_\theta(s) + \zeta ,0\}  \right)\\
& \calL_{5}(\theta)=\frac{1}{|\tilde{S}_0|} \sum\limits_{s\in \tilde{S}_0}\sum\limits_{t=1}^{T} \left( 1-h_\theta(s)-\frac{1}{2}c\cdot t+\beta - \varmathbb{f}_t^s, 0 \}\right)
\end{split}
\end{equation}

\paragraph{The loss function of NBCs in \Cref{thm:fin-upper-safe-exp}.}
We define the loss function as: 
\vspace{-2mm}
\begin{equation}
\begin{split}
&\calL(\theta):=k_1 \cdot \calL_{1}(\theta)+k_2 \cdot \calL_{2}(\theta)+k_3 \cdot \calL_{3}(\theta)+k_4 \cdot \calL_{4}(\theta)+k_5 \cdot \calL_{5}(\theta)\\
&\calL_{1}(\theta)=\frac{1}{|\tilde{S}\setminus \tilde{S}_u|} \sum\limits_{s\in \tilde{S}\setminus \tilde{S}_u} \left(\mathop{\mathrm{max}}\{0 - h_\theta(s),0\} \right)\\
&\calL_{2}(\theta)=\frac{1}{|\tilde{S}_u|} \sum\limits_{s\in \tilde{S}_u} \left(\mathop{\mathrm{max}}\{K' - h_\theta(s),0\} + \mathop{\mathrm{max}}\{h_\theta(s) -K,0\} \right)\\
&\calL_{3}(\theta)=\frac{1}{|\tilde{S}\setminus \tilde{S}_u|} \sum\limits_{s\in \tilde{S}\setminus \tilde{S}_u} \left(  \mathop{\mathrm{max}}\{\sum\limits_{s'\in \mathcal{D}_s}\frac{h_\theta(s')}{N} - h_\theta(s) + \epsilon \  + \zeta ,0\}  \right)\\
&\calL_{4}(\theta)=\frac{1}{|\tilde{S}\setminus \tilde{S}_u|} \sum\limits_{s\in \tilde{S}\setminus \tilde{S}_u} \left(  \mathop{\mathrm{max}}\{h_\theta(s') - h_\theta(s) - b  + \zeta ,0\} + \mathop{\mathrm{max}}\{a - h_\theta(s') + h_\theta(s)  + \zeta ,0\} \right)\\
& \calL_{5}(\theta)=\frac{1}{|\tilde{S}_0|} \sum\limits_{s\in \tilde{S}_0}\sum\limits_{t=1}^{T} \left( \mathop{\mathrm{max}}\{exp(-\frac{2(\epsilon t- h_\theta(s))^2}{t\cdot (b-a)^2}) - \varmathbb{f}_t^s, 0 \}\right)
\end{split}
\end{equation}

\subsection{NBCs Validation}
\label{subsec:NBC_VALIDATION}

\noindent\textbf{\Cref{theorem:verify_NBC}} Given an $M_{\mu}$ and a function $B:S\rightarrow\Rset$, 
we have  $ \expv_{\delta\sim \mu}[B(f(s,\pi(s+\delta)))\mid s]-B(s)\le 0$ 
for any state $s\in S \setminus S_u$ if the formula below 
\begin{align}
\expv_{\delta\sim \mu}[B(f(\tilde{s},\pi(\tilde{s}+\delta)))\mid \tilde{s}]\le B(\tilde{s})-\zeta
\nonumber
\end{align} holds 
for any state $\tilde{s}\in \tilde{S} \setminus \tilde{S}_u$, where $\zeta=  \tau\cdot L_B\cdot (1+L_f\cdot (1+L_\pi))$ with  $L_f,L_\pi, L_B$ being the Lipschitz constants of $f,\pi$ and $B$,  respectively.

\begin{proof}
 Let $L_f,L_\pi,L_B$ be the Lipschitz constants for the system dynamics $f$, the trained policy $\pi$ and the neural network function $B$, respectively.     Given a state $s\in S \setminus S_u$, let $\tilde{s}$ be such that $||\tilde{s}-s||_1\le \tau$. 
By the Lipschitz continuities, we have that
\begin{align*}
&\expv_{\delta\sim \mu}[B(f(s,\pi(s+\delta)))] \\
\le & \expv_{\delta\sim \mu}[B(f(\tilde{s},\pi(\tilde{s}+\delta)))]
+||f(\tilde{s},\pi(\tilde{s}+\delta))-f(s,\pi(s+\delta))||_1\cdot L_B \\
\le& \expv_{\delta\sim \mu}[B(f(\tilde{s},\pi(\tilde{s}+\delta)))]
+||(\tilde{s},\pi(\tilde{s}+\delta))-(s,\pi(s+\delta))||_1\cdot L_B\cdot L_f   \\
\le& \expv_{\delta\sim \mu}[B(f(\tilde{s},\pi(\tilde{s}+\delta)))]
 + ||\tilde{s}-s||_1\cdot L_B\cdot L_f \cdot (1+L_\pi) \\
\le& \expv_{\delta\sim \mu}[B(f(\tilde{s},\pi(\tilde{s}+\delta)))]
+ \tau\cdot L_B\cdot L_f \cdot (1+L_\pi)
\end{align*}
and 
\begin{align*}
-B(s)\le -B(\tilde{s})+||\tilde{s}-s||_1\cdot L_B\le -B(\tilde{s})+\tau\cdot L_B,
\end{align*}
Thus, we can derive that
\begin{align*}
& \expv_{\delta\sim \mu}[B(f(s,\pi(s+\delta)))]-B(s)\\
\le &\expv_{\delta\sim \mu}[B(f(\tilde{s},\pi(\tilde{s}+\delta)))]
+ \tau\cdot L_B\cdot L_f \cdot (1+L_\pi)  -B(\tilde{s})+\tau\cdot L_B\\
\le &\expv_{\delta\sim \mu}[B(f(\tilde{s},\pi(\tilde{s}+\delta)))]
+ \tau\cdot L_B\cdot L_f \cdot (1+L_\pi)  -B(\tilde{s})+\tau\cdot L_B\\
\le & \expv_{\delta\sim \mu}[B(f(\tilde{s},\pi(\tilde{s}+\delta)))] - B(\tilde{s}) +\zeta \\
\le & 0
\end{align*}
\end{proof}

\paragraph{Validating NBCs in \Cref{thm:inf-upper-safe}.} A candidate NBC in \Cref{thm:inf-upper-safe}  can be validated if it meets the conditions in ~\Cref{eq:thm-inf-upper-bound,eq:thm-inf-upper-initial,eq:thm-inf-upper-unsafe,eq:thm-inf-upper-decrease}. 
For the first three conditions, we can check 

\begin{equation}
 \mathop{\mathrm{inf}}\limits_{s\in S} B(s)\ge 0 \   \land \ 
 \mathop{\mathrm{sup}}\limits_{s\in S} B(s)\le 1 \   \land \ \mathop{\mathrm{inf}}\limits_{s\in S_0} B(s)\ge \epsilon \  \land \ \mathop{\mathrm{sup}}\limits_{s\in S_u} B(s) \le \epsilon'
\end{equation}
using the interval bound propagation approach  ~\cite{Interval_Bound_Propagation,interval_pr}.  
Similarly, for the condition in \Cref{eq:thm-inf-upper-decrease}, 
~\Cref{theorem:verify_infi_upper} reduces validation from infinite states to finite ones, which is easier to check.  

\begin{theorem}
\label{theorem:verify_infi_upper}
Given an $M_{\mu}$ and a function $B:S\rightarrow\Rset$, 
we have  B(s)-$\gamma\cdot\expv_{\delta\sim \mu}[B(f(s,\pi(s+\delta)))\mid s]\le 0\   \text{for all }s\in S \setminus S_u$ if the formula below 
\begin{align}
B(\tilde{s}) - \gamma \cdot \expv_{\delta\sim \mu }[B(f(\tilde{s},\pi(\tilde{s}+\delta)))] 
+ \zeta \le \  0
\end{align} holds 
for any state $\tilde{s}\in \tilde{S} \setminus \tilde{S}_u$, where  $\zeta=\tau\cdot L_B \cdot (  \gamma  \cdot L_f \cdot (1+L_\pi) + 1)$ with  $L_f,L_\pi, L_B$ being the Lipschitz constants of $f,\pi$ and $B$,  respectively.
\end{theorem}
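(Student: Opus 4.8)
The plan is to mimic the proof of \Cref{theorem:verify_NBC}: reduce the infinite-state condition \Cref{eq:thm-inf-upper-decrease} to the finite-state inequality stated in the theorem by a Lipschitz-continuity covering argument, while being careful about the sign of the $-\gamma$ term. First I would fix an arbitrary $s\in S\setminus S_u$ and, using the definition of the discretization $\tilde S$, pick a grid point $\tilde s$ with $\|\tilde s - s\|_1\le\tau$; for a sufficiently fine granularity $\tau$ this point lies in $\tilde S\setminus\tilde S_u$, which is the same implicit assumption already used in \Cref{theorem:verify_NBC} and can be enforced by refining $\tau$ near the boundary of $S_u$. The goal is then to bound $B(s)-\gamma\cdot\expv_{\delta\sim\mu}[B(f(s,\pi(s+\delta)))\mid s]$ in terms of its counterpart at $\tilde s$ plus $\zeta$.

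I would estimate the two contributions separately. For the first term, Lipschitz continuity of $B$ gives $B(s)\le B(\tilde s)+\tau L_B$. For the expectation term the key observation is that, because it is multiplied by $-\gamma$ with $\gamma>0$, what is needed is a \emph{lower} bound. For every fixed noise $\delta\in W=\supp{\mu}$, chaining the Lipschitz constants of $B$, $f$, and $\pi$ (exactly as in \Cref{theorem:verify_NBC}) yields the pointwise estimate $|B(f(s,\pi(s+\delta)))-B(f(\tilde s,\pi(\tilde s+\delta)))|\le \tau\cdot L_B\cdot L_f\cdot(1+L_\pi)$. Since this bound is uniform in $\delta$, it transfers to the expectations, so $\expv_{\delta\sim\mu}[B(f(s,\pi(s+\delta)))\mid s]\ge \expv_{\delta\sim\mu}[B(f(\tilde s,\pi(\tilde s+\delta)))\mid \tilde s]-\tau L_B L_f(1+L_\pi)$. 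Combining the two estimates and collecting error terms produces exactly $\tau\cdot L_B\cdot(1+\gamma L_f(1+L_\pi))=\zeta$, hence $B(s)-\gamma\cdot\expv_{\delta\sim\mu}[B(f(s,\pi(s+\delta)))\mid s]\le B(\tilde s)-\gamma\cdot\expv_{\delta\sim\mu}[B(f(\tilde s,\pi(\tilde s+\delta)))\mid \tilde s]+\zeta$, which is $\le 0$ by the hypothesis of the theorem.

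The argument is essentially routine bookkeeping; the only points that require attention are (i) orienting the inequalities correctly, i.e.\ taking a lower bound on the expectation because of the $-\gamma$ factor, which is precisely why the $\zeta$ here has the form $\tau L_B(\gamma L_f(1+L_\pi)+1)$ rather than the $\tau L_B(1+L_f(1+L_\pi))$ of \Cref{theorem:verify_NBC}; (ii) justifying that the pointwise-in-$\delta$ Lipschitz bound may be pulled through $\expv_{\delta\sim\mu}$, which is legitimate exactly because it holds uniformly over the support $W$; and (iii) the discretization-membership subtlety mentioned above. I do not anticipate any genuine obstacle beyond these.
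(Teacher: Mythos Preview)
Your proposal is correct and follows essentially the same Lipschitz-covering argument as the paper's own proof: bound $B(s)$ above by $B(\tilde s)+\tau L_B$, bound the expectation below by its value at $\tilde s$ minus $\tau L_B L_f(1+L_\pi)$, multiply the latter by $-\gamma$, and add. The paper merely presents the same computation with the sign flipped (bounding $\gamma\cdot\expv[\cdots]-B(s)$ from below rather than $B(s)-\gamma\cdot\expv[\cdots]$ from above); your version is arguably cleaner because you explicitly flag why the lower bound on the expectation is needed and note the discretization-membership caveat, which the paper leaves implicit.
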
 

\begin{proof}
By the Lipschitz continuities and $\gamma\in (0,1)$, we have that
\begin{align*}
& \gamma \cdot \expv_{\delta\sim \mu}[B(f(s,\pi(s+\delta)))] \\
\ge & \gamma \cdot \{\expv_{\delta\sim \mu}[B(f(\tilde{s},\pi(\tilde{s}+\delta)))]
 -||f(\tilde{s},\pi(\tilde{s}+\delta))-f(s,\pi(s+\delta))||_1\cdot L_B\} \\
\ge&  \gamma \cdot \{\expv_{\delta\sim \mu}[B(f(\tilde{s},\pi(\tilde{s}+\delta)))]
 -||(\tilde{s},\pi(\tilde{s}+\delta))-(s,\pi(s+\delta))||_1\cdot L_B\cdot L_f \}  \\
\ge& \gamma \cdot \{\expv_{\delta\sim \mu}[B(f(\tilde{s},\pi(\tilde{s}+\delta)))]
  - ||\tilde{s}-s||_1\cdot L_B\cdot L_f \cdot (1+L_\pi) \}\\
\ge&  \gamma \cdot \{\expv_{\delta\sim \mu}[B(f(\tilde{s},\pi(\tilde{s}+\delta)))]
- \tau\cdot L_B\cdot L_f \cdot (1+L_\pi)\}
\end{align*}
and 
\begin{align*}
-B(s)\ge -B(\tilde{s})-||s-\tilde{s}||_1\cdot L_B\ge -B(\tilde{s})-\tau\cdot L_B 
\end{align*}
Thus, we can derive that
\begin{align*}
    & \gamma \cdot \expv_{\delta\sim \mu}[B(f(s,\pi(s+\delta)))] -B(s) \\ \ge \ 
    &  \gamma  \cdot \{\expv_{\delta\sim \mu}[B(f(\tilde{s},\pi(\tilde{s}+\delta)))]
- \tau\cdot L_B\cdot L_f \cdot (1+L_\pi)\} -B(\tilde{s})-\tau\cdot L_B \\
= \ & \gamma \cdot \expv_{\delta\sim \mu }[B(f(\tilde{s},\pi(\tilde{s}+\delta)))]  - B(\tilde{s})
- \gamma  \cdot \tau \cdot L_B\cdot L_f \cdot (1+L_\pi)-\tau\cdot L_B
\end{align*}
Finally, we have:
\begin{align*}
    & B(s) - \gamma \cdot \expv_{\delta\sim \mu}[B(f(s,\pi(s+\delta)))] \\ \le \ 
    &  B(\tilde{s}) - \gamma \cdot \expv_{\delta\sim \mu }[B(f(\tilde{s},\pi(\tilde{s}+\delta)))] 
+ \gamma  \cdot \tau \cdot L_B\cdot L_f \cdot (1+L_\pi) + \tau\cdot L_B \\ 
=  \ & B(\tilde{s}) - \gamma \cdot \expv_{\delta\sim \mu }[B(f(\tilde{s},\pi(\tilde{s}+\delta)))] 
+ \zeta \\
\le  \  & 0
\end{align*}
\end{proof}

\paragraph{Validating NBCs in \Cref{thm:fin-lower-safe-linear}.} A candidate NBC can be validated if it meets the conditions in ~\Cref{eq:thm-fin-lower-linear-nonnegative,eq:thm-fin-lower-linear-initial,eq:thm-fin-lower-linear-unsafe,eq:thm-fin-lower-linear-decrease}. 
For the first three conditions, we can check 

\begin{equation}
 \mathop{\mathrm{inf}}\limits_{s\in S} B(s)\ge 0 \    \land \ \mathop{\mathrm{sup}}\limits_{s\in S_0} B(s)\le \epsilon \  \land \ \mathop{\mathrm{inf}}\limits_{s\in S_u} B(s) \ge \lambda
\end{equation}
using the interval bound propagation approach  ~\cite{Interval_Bound_Propagation,interval_pr}.  
For the condition in \Cref{eq:thm-fin-lower-linear-decrease}, 
~\Cref{theorem:verify_fini_lower_linear} reduces validation from infinite states to finite ones, which is easier to check. 

\begin{theorem}
\label{theorem:verify_fini_lower_linear}
Given an $M_{\mu}$ and a function $B:S\rightarrow\Rset$, 
we have  $\expv_{\delta\sim \mu}[B(f(s,\pi(s+\delta)))\mid s]-B(s)\le c\   \text{for all }s\in S$ if the formula below 
\begin{align}
\expv_{\delta\sim \mu}[B(f(\tilde{s},\pi(\tilde{s}+\delta)))\mid \tilde{s}]-B(\tilde{s}) + \zeta \le c
\end{align} holds 
for any state $\tilde{s}\in \tilde{S}$, where  $\zeta=  \tau\cdot L_B\cdot (1+L_f\cdot (1+L_\pi))$ with  $L_f,L_\pi, L_B$ being the Lipschitz constants of $f,\pi$ and $B$,  respectively.
\end{theorem}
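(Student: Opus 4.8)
The plan is to follow exactly the template of the proofs of \Cref{theorem:verify_NBC} and \Cref{theorem:verify_infi_upper}: use the Lipschitz continuity of $f$, $\pi$ and $B$ to show that the left-hand side of \Cref{eq:thm-fin-lower-linear-decrease} at an arbitrary state $s\in S$ differs from its value at the nearest grid point $\tilde{s}\in\tilde{S}$ by at most $\zeta$, so that validating the strengthened inequality on the finite set $\tilde{S}$ certifies the original inequality on all of $S$.

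Concretely, first I would fix $s\in S$ and invoke the defining property of the discretization to obtain $\tilde{s}\in\tilde{S}$ with $\|\tilde{s}-s\|_1\le\tau$. Next I would bound the two summands of $\expv_{\delta\sim\mu}[B(f(s,\pi(s+\delta)))\mid s]-B(s)$ separately. For the expectation term I would chain Lipschitz estimates pointwise in $\delta$: $|B(x)-B(y)|\le L_B\|x-y\|_1$, then $\|f(\tilde{s},\pi(\tilde{s}+\delta))-f(s,\pi(s+\delta))\|_1\le L_f\|(\tilde{s},\pi(\tilde{s}+\delta))-(s,\pi(s+\delta))\|_1$, and finally $\|(\tilde{s},\pi(\tilde{s}+\delta))-(s,\pi(s+\delta))\|_1=\|\tilde{s}-s\|_1+\|\pi(\tilde{s}+\delta)-\pi(s+\delta)\|_1\le(1+L_\pi)\|\tilde{s}-s\|_1\le(1+L_\pi)\tau$, using $\|(\tilde{s}+\delta)-(s+\delta)\|_1=\|\tilde{s}-s\|_1$. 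Since this pointwise bound is uniform in $\delta$ and $\mu$ is a probability measure, monotonicity of the integral yields $\expv_{\delta\sim\mu}[B(f(s,\pi(s+\delta)))\mid s]\le\expv_{\delta\sim\mu}[B(f(\tilde{s},\pi(\tilde{s}+\delta)))\mid\tilde{s}]+\tau L_B L_f(1+L_\pi)$. For the remaining term, $-B(s)\le-B(\tilde{s})+L_B\|\tilde{s}-s\|_1\le-B(\tilde{s})+\tau L_B$.

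Adding the two estimates gives $\expv_{\delta\sim\mu}[B(f(s,\pi(s+\delta)))\mid s]-B(s)\le\expv_{\delta\sim\mu}[B(f(\tilde{s},\pi(\tilde{s}+\delta)))\mid\tilde{s}]-B(\tilde{s})+\tau L_B\bigl(1+L_f(1+L_\pi)\bigr)=\expv_{\delta\sim\mu}[B(f(\tilde{s},\pi(\tilde{s}+\delta)))\mid\tilde{s}]-B(\tilde{s})+\zeta$, which is $\le c$ by the hypothesis applied at $\tilde{s}\in\tilde{S}$. As $s\in S$ was arbitrary, this proves \Cref{eq:thm-fin-lower-linear-decrease}.

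I do not expect a genuine obstacle: the argument is a routine adaptation of the two analogous validation theorems already established. The only points needing mild care are (i) pushing the uniform-in-$\delta$ Lipschitz bound through the expectation, justified by monotonicity of the Lebesgue integral against a probability measure, and (ii) the $\ell_1$-norm decomposition for the pair $(\tilde{s},\pi(\tilde{s}+\delta))$ versus $(s,\pi(s+\delta))$. Note in addition that, in contrast with \Cref{theorem:verify_NBC}, condition \Cref{eq:thm-fin-lower-linear-decrease} is required on all of $S$ rather than on $S\setminus S_u$, so no case distinction on membership in $S_u$ arises.
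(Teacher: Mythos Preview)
Your proposal is correct and follows essentially the same approach as the paper's own proof: fix $s\in S$, pick the nearest grid point $\tilde{s}\in\tilde{S}$, chain the Lipschitz bounds for $B$, $f$, and $\pi$ to control the expectation term by $\tau L_B L_f(1+L_\pi)$ and the $-B(s)$ term by $\tau L_B$, then add and invoke the hypothesis at $\tilde{s}$. If anything, your write-up is slightly more careful than the paper's in justifying why the pointwise-in-$\delta$ Lipschitz bound passes through the expectation and in spelling out the $\ell_1$ decomposition of the paired argument.
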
 

\begin{proof}
Given a state $s\in S$, let $\tilde{s}$ be such that $||\tilde{s}-s||_1\le \tau$. 
By the Lipschitz continuities, we have that
\begin{align*}
&\expv_{\delta\sim \mu}[B(f(s,\pi(s+\delta)))] \\
\le & \expv_{\delta\sim \mu}[B(f(\tilde{s},\pi(\tilde{s}+\delta)))]
+||f(\tilde{s},\pi(\tilde{s}+\delta))-f(s,\pi(s+\delta))||_1\cdot L_B \\
\le& \expv_{\delta\sim \mu}[B(f(\tilde{s},\pi(\tilde{s}+\delta)))]
+||(\tilde{s},\pi(\tilde{s}+\delta))-(s,\pi(s+\delta))||_1\cdot L_B\cdot L_f   \\
\le& \expv_{\delta\sim \mu}[B(f(\tilde{s},\pi(\tilde{s}+\delta)))]
 + ||\tilde{s}-s||_1\cdot L_B\cdot L_f \cdot (1+L_\pi) \\
\le& \expv_{\delta\sim \mu}[B(f(\tilde{s},\pi(\tilde{s}+\delta)))]
+ \tau\cdot L_B\cdot L_f \cdot (1+L_\pi)
\end{align*}
and 
\begin{align*}
-B(s)\le -B(\tilde{s})+||\tilde{s}-s||_1\cdot L_B\le -B(\tilde{s})+\tau\cdot L_B,
\end{align*}
Thus, we can derive that
\begin{align*}
& \expv_{\delta\sim \mu}[B(f(s,\pi(s+\delta)))]-B(s) -c\\
\le &\expv_{\delta\sim \mu}[B(f(\tilde{s},\pi(\tilde{s}+\delta)))] -B(\tilde{s})
+ \tau\cdot L_B\cdot L_f \cdot (1+L_\pi)  +\tau\cdot L_B -c\\
\le & \expv_{\delta\sim \mu}[B(f(\tilde{s},\pi(\tilde{s}+\delta)))] - B(\tilde{s}) +\zeta -c \\
\le & 0
\end{align*}
\end{proof}

\paragraph{Validating NBCs in \Cref{thm:fin-lower-safe-exp}.} A candidate NBC can be validated if it meets the conditions in ~\Cref{eq:thm-fin-lower-exp-nonnegative,eq:thm-fin-lower-exp-initial,eq:thm-fin-lower-exp-unsafe,eq:thm-fin-lower-exp-moni}. 
For the first three conditions, we can check 

\begin{equation}
 \mathop{\mathrm{inf}}\limits_{s\in S } B(s)\ge 0 \    \land \ \mathop{\mathrm{sup}}\limits_{s\in S_0} B(s)\le \gamma \  \land \ \mathop{\mathrm{inf}}\limits_{s\in S_u} B(s) \ge 1
\end{equation}
using the interval bound propagation approach  ~\cite{Interval_Bound_Propagation,interval_pr}.  
For the condition in \Cref{eq:thm-fin-lower-exp-moni}, 
~\Cref{theorem:verify_fini_lower_exp_t} reduces validation from infinite states to finite ones, which is easier to check. 

\begin{theorem}
\label{theorem:verify_fini_lower_exp_t}
Given an $M_{\mu}$ and a function $B:S\rightarrow\Rset$, 
we have  $\alpha \cdot \expv_{\delta\sim \mu}[B(f(s,\pi(s+\delta)))\mid s]-B(s) \le \alpha \beta $ if the formula below 
\begin{align}
\alpha \cdot \expv_{\delta\sim \mu}[B(f(\tilde{s},\pi(\tilde{s}+\delta)))\mid \tilde{s}]-B(\tilde{s})\le \alpha \beta 
\end{align} holds 
for any state  $\tilde{s}\in \tilde{S}\setminus \tilde{S}_u$, where  $\zeta=  \alpha \cdot  \tau\cdot L_B\cdot L_f \cdot (1+L_\pi)  + \tau\cdot L_B$ with  $L_f,L_\pi, L_B$ being the Lipschitz constants of $f,\pi$ and $B$,  respectively.
\end{theorem}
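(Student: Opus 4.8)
The plan is to mirror the Lipschitz-continuity argument already used for \Cref{theorem:verify_NBC}, \Cref{theorem:verify_infi_upper} and \Cref{theorem:verify_fini_lower_linear}: we transfer the finite-state condition to an arbitrary state of $S\setminus S_u$ by controlling the discretization error through the Lipschitz constants $L_f,L_\pi,L_B$ of the dynamics, the controller, and the certificate, and concluding that \Cref{eq:thm-fin-lower-exp-moni} holds for all $s\in S\setminus S_u$. I note in passing that the validation hypothesis should read $\alpha\cdot\expv_{\delta\sim\mu}[B(f(\tilde{s},\pi(\tilde{s}+\delta)))\mid\tilde{s}]-B(\tilde{s})+\zeta\le\alpha\beta$, which is what makes the stated value of $\zeta$ meaningful and is consistent with the earlier validation theorems; I will prove this version.

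First I would fix an arbitrary $s\in S\setminus S_u$ and choose $\tilde{s}\in\tilde{S}\setminus\tilde{S}_u$ with $\|s-\tilde{s}\|_1\le\tau$, which exists by construction of the discretization. Then, for each fixed noise value $\delta\in W$, I would chain the three Lipschitz bounds: $|B(x)-B(y)|\le L_B\|x-y\|_1$; $\|f(\tilde{s},\pi(\tilde{s}+\delta))-f(s,\pi(s+\delta))\|_1\le L_f\,\|(\tilde{s},\pi(\tilde{s}+\delta))-(s,\pi(s+\delta))\|_1$; and $\|\pi(\tilde{s}+\delta)-\pi(s+\delta)\|_1\le L_\pi\|\tilde{s}-s\|_1$, so that the pair $(\tilde{s},\pi(\tilde{s}+\delta))$ moves by at most $(1+L_\pi)\tau$ in the $1$-norm. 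This yields, pointwise in $\delta$,
\[
B(f(s,\pi(s+\delta)))\le B(f(\tilde{s},\pi(\tilde{s}+\delta)))+\tau\cdot L_B\cdot L_f\cdot(1+L_\pi),
\]
and taking expectations over $\delta\sim\mu$ (monotonicity of the integral; finiteness is guaranteed since $B$ is continuous and $S$ compact) gives the same bound for $\expv_{\delta\sim\mu}[B(f(s,\pi(s+\delta)))\mid s]$. Symmetrically, $-B(s)\le-B(\tilde{s})+\tau L_B$.

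Next I would multiply the first bound by $\alpha>0$ — the positivity of $\alpha$ is what preserves the inequality direction — and add the bound on $-B(s)$, obtaining
\[
\alpha\,\expv_{\delta\sim\mu}[B(f(s,\pi(s+\delta)))\mid s]-B(s)\le\alpha\,\expv_{\delta\sim\mu}[B(f(\tilde{s},\pi(\tilde{s}+\delta)))\mid\tilde{s}]-B(\tilde{s})+\alpha\tau L_BL_f(1+L_\pi)+\tau L_B .
\]
The last two terms are precisely $\zeta$, so invoking the finite-state hypothesis at $\tilde{s}$ closes the argument: the right-hand side is at most $\alpha\beta$.

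I do not expect a genuine obstacle, since the derivation is essentially the same as in the preceding validation theorems. The only point requiring a little care is the implicit assumption that every $s\in S\setminus S_u$ has a nearby discretization point that still lies in $\tilde{S}\setminus\tilde{S}_u$; near the boundary of $S_u$ this may force a slightly dilated cover, or absorbing such boundary effects into the granularity $\tau$, exactly as in the earlier proofs. Beyond that the manipulation is routine, so the write-up can be kept to the displayed inequalities above.
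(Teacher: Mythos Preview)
Your argument is correct and follows essentially the same route as the paper: bound $\expv_{\delta\sim\mu}[B(f(s,\pi(s+\delta)))]$ and $-B(s)$ via the Lipschitz constants of $B$, $f$, $\pi$, multiply the first bound by $\alpha>0$, add, and identify the discretization error with $\zeta$. Your observation that the hypothesis must include the $+\zeta$ term is also correct and matches what the paper's proof actually uses.
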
 

\begin{proof}
Given a state $s\in S$, let $\tilde{s}$ be such that $||\tilde{s}-s||_1\le \tau$. 
By the Lipschitz continuities, we have that
\begin{align*}
&\expv_{\delta\sim \mu}[B(f(s,\pi(s+\delta)))] \\
\le & \expv_{\delta\sim \mu}[B(f(\tilde{s},\pi(\tilde{s}+\delta)))]
+||f(\tilde{s},\pi(\tilde{s}+\delta))-f(s,\pi(s+\delta))||_1\cdot L_B \\
\le& \expv_{\delta\sim \mu}[B(f(\tilde{s},\pi(\tilde{s}+\delta)))]
+||(\tilde{s},\pi(\tilde{s}+\delta))-(s,\pi(s+\delta))||_1\cdot L_B\cdot L_f   \\
\le& \expv_{\delta\sim \mu}[B(f(\tilde{s},\pi(\tilde{s}+\delta)))]
 + ||\tilde{s}-s||_1\cdot L_B\cdot L_f \cdot (1+L_\pi) \\
\le& \expv_{\delta\sim \mu}[B(f(\tilde{s},\pi(\tilde{s}+\delta)))]
+ \tau\cdot L_B\cdot L_f \cdot (1+L_\pi)
\end{align*}
and 
\begin{align*}
-B(s)\le -B(\tilde{s})+||\tilde{s}-s||_1\cdot L_B\le -B(\tilde{s})+\tau\cdot L_B,
\end{align*}
Thus, we can derive that
\begin{align*}
&\alpha \cdot \expv_{\delta\sim \mu}[B(f(s,\pi(s+\delta)))]-B(s)  - \alpha \beta \\
\le & \alpha \cdot \expv_{\delta\sim \mu}[B(f(\tilde{s},\pi(\tilde{s}+\delta)))] -B(\tilde{s}) - \alpha \beta
+ \alpha \cdot  \tau\cdot L_B\cdot L_f \cdot (1+L_\pi)  + \tau\cdot L_B  \\
\le & \expv_{\delta\sim \mu}[B(f(\tilde{s},\pi(\tilde{s}+\delta)))] - B(\tilde{s}) - \alpha \beta +\zeta   \\
\le & 0
\end{align*}
\end{proof}

\paragraph{Validating NBCs in \Cref{thm:fin-upper-safe-linear}.} A candidate NBC can be validated if it meets the conditions in ~\Cref{eq:thm-fin-upper-linear-nonnegative,eq:thm-fin-upper-exp-initial,eq:thm-fin-upper-linear-unsafe,eq:thm-fin-upper-linear-moni}. 
For the first two conditions, we can check 

\begin{equation}
 \mathop{\mathrm{inf}}\limits_{s\in S} B(s)\ge 0 \      \land \ \mathop{\mathrm{sup}}\limits_{s\in S \setminus S_u } B(s) \le \beta
\end{equation}
using the interval bound propagation approach  ~\cite{Interval_Bound_Propagation,interval_pr}.  
The condition in \Cref{eq:thm-fin-upper-linear-unsafe}  is satisfied straightforwardly, Because candidate NBCs are neural networks that are Lipschitz continuous~\cite{RuanHK18}. 
For the condition in \Cref{eq:thm-fin-upper-linear-moni}, 
~\Cref{theorem:verify_fini_upper_linear} reduces validation from infinite states to finite ones, which is easier to check. 

\begin{theorem}
\label{theorem:verify_fini_upper_linear}
Given an $M_{\mu}$ and a function $B:S\rightarrow\Rset$, 
we have  $\expv_{\delta\sim \mu}[B(f(s,\pi(s+\delta)))\mid s]-B(s) \ge c  $ if the formula below 
\begin{align}
\expv_{\delta\sim \mu}[B(f(\tilde{s},\pi(\tilde{s}+\delta)))\mid \tilde{s}]-B(\tilde{s}) \ge c + \zeta 
\end{align} holds 
for any state  $\tilde{s}\in \tilde{S}\setminus \tilde{S}_u$, where  $\zeta=  \tau \cdot L_B\cdot L_f \cdot (1+L_\pi) + \tau\cdot L_B$ with  $L_f,L_\pi, L_B$ being the Lipschitz constants of $f,\pi$ and $B$,  respectively.
\end{theorem}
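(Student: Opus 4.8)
The plan is to mirror the Lipschitz-chaining argument already used for \Cref{theorem:verify_NBC,theorem:verify_infi_upper,theorem:verify_fini_lower_linear}, but with every inequality direction reversed, since here the quantity $\expv_{\delta\sim\mu}[B(f(s,\pi(s+\delta)))\mid s]-B(s)$ must be bounded \emph{from below} by $c$ rather than from above. Fix an arbitrary $s\in S\setminus S_u$. By construction of the discretization there is a $\tilde s\in\tilde S\setminus\tilde S_u$ with $\|\tilde s-s\|_1\le\tau$ (exactly as in the preceding validation theorems, a sufficiently small granularity together with CEGIS-style refinement of $\tilde S$ guarantees such a witness outside $\tilde S_u$). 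The goal is to transfer the hypothesis, which holds at $\tilde s$, to the desired conclusion at $s$.

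First I would lower-bound the expectation term. Invoking in turn the Lipschitz constant $L_B$ of the network $B$, the Lipschitz constant $L_f$ of the dynamics $f$, and the Lipschitz constant $L_\pi$ of the policy $\pi$, one obtains for each fixed $\delta\in W$
\[
B(f(s,\pi(s+\delta)))\ \ge\ B(f(\tilde s,\pi(\tilde s+\delta)))-\|\tilde s-s\|_1\cdot L_B L_f(1+L_\pi)\ \ge\ B(f(\tilde s,\pi(\tilde s+\delta)))-\tau L_B L_f(1+L_\pi),
\]
where the factor $(1+L_\pi)$ accounts for the two occurrences of the perturbed state inside $f(\cdot,\pi(\cdot))$; taking $\expv_{\delta\sim\mu}[\cdot]$ preserves the bound. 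Symmetrically, $-B(s)\ge -B(\tilde s)-\tau L_B$. Adding the two estimates yields
\[
\expv_{\delta\sim\mu}[B(f(s,\pi(s+\delta)))\mid s]-B(s)\ \ge\ \expv_{\delta\sim\mu}[B(f(\tilde s,\pi(\tilde s+\delta)))\mid\tilde s]-B(\tilde s)-\zeta
\]
with $\zeta=\tau L_B L_f(1+L_\pi)+\tau L_B$, precisely the constant in the statement. The hypothesis bounds the right-hand side below by $(c+\zeta)-\zeta=c$, which is the claim; since $s$ was arbitrary in $S\setminus S_u$, this finishes the argument.

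I do not anticipate any genuine obstacle, as the proof is entirely analogous to the earlier validation lemmas. The only points requiring care are the Lipschitz bookkeeping — ensuring that the $f$-term accrues the coefficient $L_B L_f(1+L_\pi)$ and the $B$-term an additional $L_B$, so that the two combine to exactly the stated $\zeta$ — and justifying that the witness $\tilde s$ can indeed be taken in $\tilde S\setminus\tilde S_u$ so that the hypothesis is applicable, which is handled as in \Cref{theorem:verify_fini_lower_linear}. The resulting finitely many checks at $\tilde s\in\tilde S\setminus\tilde S_u$ are then made effective by the interval-arithmetic over-approximation of the conditional expectation described after \Cref{equ:validation_NBC}.
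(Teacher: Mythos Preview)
Your proposal is correct and follows essentially the same Lipschitz-chaining argument as the paper: lower-bounding $\expv_{\delta\sim\mu}[B(f(s,\pi(s+\delta)))]$ and $-B(s)$ separately via the Lipschitz constants of $B$, $f$, and $\pi$, then combining to obtain the desired inequality with the stated $\zeta$. The paper additionally rewrites the conclusion as $B(s)-\expv_{\delta\sim\mu}[B(f(s,\pi(s+\delta)))]+c\le 0$ by negation, but this is cosmetically equivalent to your final step.
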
 

\begin{proof}
By the Lipschitz continuities and $\gamma\in (0,1)$, we have that
\begin{align*}
& \expv_{\delta\sim \mu}[B(f(s,\pi(s+\delta)))] \\
\ge & \expv_{\delta\sim \mu}[B(f(\tilde{s},\pi(\tilde{s}+\delta)))]
 -||f(\tilde{s},\pi(\tilde{s}+\delta))-f(s,\pi(s+\delta))||_1\cdot L_B \\
\ge& \expv_{\delta\sim \mu}[B(f(\tilde{s},\pi(\tilde{s}+\delta)))]
 -||(\tilde{s},\pi(\tilde{s}+\delta))-(s,\pi(s+\delta))||_1\cdot L_B\cdot L_f   \\
\ge& \expv_{\delta\sim \mu}[B(f(\tilde{s},\pi(\tilde{s}+\delta)))]
  - ||\tilde{s}-s||_1\cdot L_B\cdot L_f \cdot (1+L_\pi)\\
\ge&  \expv_{\delta\sim \mu}[B(f(\tilde{s},\pi(\tilde{s}+\delta)))]
- \tau\cdot L_B\cdot L_f \cdot (1+L_\pi)
\end{align*}
and 
\begin{align*}
-B(s)\ge -B(\tilde{s})-||s-\tilde{s}||_1\cdot L_B\ge -B(\tilde{s})-\tau\cdot L_B 
\end{align*}
Thus, we can derive that
\begin{align*}
    &  \expv_{\delta\sim \mu}[B(f(s,\pi(s+\delta)))] -B(s) -c \\ \ge \ 
    &  \expv_{\delta\sim \mu}[B(f(\tilde{s},\pi(\tilde{s}+\delta)))]
- \tau\cdot L_B\cdot L_f \cdot (1+L_\pi) -B(\tilde{s})  -\tau\cdot L_B -c\\
= \ &  \expv_{\delta\sim \mu }[B(f(\tilde{s},\pi(\tilde{s}+\delta)))]  - B(\tilde{s})  
- \tau \cdot L_B\cdot L_f \cdot (1+L_\pi)-\tau\cdot L_B -c
\end{align*}
Finally, we have:
\begin{align*}
    & B(s) -  \expv_{\delta\sim \mu}[B(f(s,\pi(s+\delta)))] + c \\ 
    \le \     &  B(\tilde{s}) - \expv_{\delta\sim \mu }[B(f(\tilde{s},\pi(\tilde{s}+\delta)))] 
+  \tau \cdot L_B\cdot L_f \cdot (1+L_\pi) + \tau\cdot L_B +c \\ 
=  \ & B(\tilde{s}) -  \expv_{\delta\sim \mu }[B(f(\tilde{s},\pi(\tilde{s}+\delta)))]  + c
+ \zeta  \\
\le  \  & 0
\end{align*}
\end{proof}

\paragraph{Validating NBCs in \Cref{thm:fin-upper-safe-exp}.} A candidate NBC can be validated if it meets the conditions in ~\Cref{eq:thm-fin-upper-exp-nonnegative,eq:thm-fin-upper-exp-unsafe,eq:thm-fin-upper-exp-moni,eq:thm-fin-upper-exp-diffbound}. 
For the first two conditions, we can check 

\begin{equation}
 \mathop{\mathrm{inf}}\limits_{s\in S \setminus S_u} B(s)\ge 0 \    \land \ \mathop{\mathrm{inf}}\limits_{s\in S_u} B(s)\ge K' \  \land \ \mathop{\mathrm{sup}}\limits_{s\in S_u} B(s) \le K
\end{equation}
using the interval bound propagation approach  ~\cite{Interval_Bound_Propagation,interval_pr}.  
\Cref{eq:thm-fin-upper-exp-unsafe}  is satisfied straightforwardly, Because candidate NBCs are neural networks that are Lipschitz continuous~\cite{RuanHK18}. 
For the condition in \Cref{eq:thm-fin-upper-exp-moni}, 
~\Cref{theorem:verify_fini_upper_exp} reduces validation from infinite states to finite ones, which is easier to check. 

\begin{theorem}
\label{theorem:verify_fini_upper_exp}
Given an $M_{\mu}$ and a function $B:S\rightarrow\Rset$, 
we have  $\expv_{\delta\sim \mu}[B(f(s,\pi(s+\delta)))\mid s]-B(s)\le -\epsilon $ if the formula below 
\begin{align}
\expv_{\delta\sim \mu}[B(f(\tilde{s},\pi(\tilde{s}+\delta)))\mid \tilde{s}]-B(\tilde{s})\le -\epsilon 
\end{align} holds 
for any state  $\tilde{s}\in \tilde{S}\setminus \tilde{S}_u$, where  $\zeta=  \tau\cdot L_B\cdot (1+L_f\cdot (1+L_\pi))$ with  $L_f,L_\pi, L_B$ being the Lipschitz constants of $f,\pi$ and $B$,  respectively.
\end{theorem}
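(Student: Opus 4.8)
The plan is to follow verbatim the Lipschitz-propagation template already used in the proofs of \Cref{theorem:verify_NBC}, \Cref{theorem:verify_fini_lower_linear} and \Cref{theorem:verify_fini_lower_exp_t}. Fix an arbitrary $s\in S\setminus S_u$. By construction of the discretization $\tilde S$ with granularity $\tau$, there is a grid point $\tilde s$ with $\|s-\tilde s\|_1<\tau$, and we take $\tilde s\in\tilde S\setminus\tilde S_u$; the existence of a \emph{non-unsafe} grid neighbour follows from compactness of $S$ together with a sufficiently fine $\tau$, exactly as is implicitly used in the sibling lemmas.

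The next step is to transport the two pieces $\expv_{\delta\sim\mu}[B(f(s,\pi(s+\delta)))]$ and $-B(s)$ from $s$ to $\tilde s$. Applying Lipschitz continuity of $B$, then of $f$, then of the map $s\mapsto(s,\pi(s+\delta))$ (which is $(1+L_\pi)$-Lipschitz in $s$), and using that these constants are global so the pointwise estimates hold uniformly in $\delta\in W$ and therefore survive $\expv_{\delta\sim\mu}[\cdot]$ by monotonicity, one obtains
\[
\expv_{\delta\sim\mu}[B(f(s,\pi(s+\delta)))]\ \le\ \expv_{\delta\sim\mu}[B(f(\tilde s,\pi(\tilde s+\delta)))]+\tau\cdot L_B\cdot L_f\cdot(1+L_\pi),
\]
and symmetrically $-B(s)\le -B(\tilde s)+\tau\cdot L_B$. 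Adding these yields
\[
\expv_{\delta\sim\mu}[B(f(s,\pi(s+\delta)))]-B(s)\ \le\ \expv_{\delta\sim\mu}[B(f(\tilde s,\pi(\tilde s+\delta)))]-B(\tilde s)+\zeta,
\]
with $\zeta=\tau\cdot L_B\cdot(1+L_f\cdot(1+L_\pi))$. Invoking the hypothesis — which, in line with the other certificates, should be stated with the slack $\zeta$ subtracted on the right, i.e.\ $\expv_{\delta\sim\mu}[B(f(\tilde s,\pi(\tilde s+\delta)))]-B(\tilde s)\le-\epsilon-\zeta$ — then gives $\expv_{\delta\sim\mu}[B(f(s,\pi(s+\delta)))]-B(s)\le-\epsilon$ for every $s\in S\setminus S_u$, which is the claim.

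There is no genuine difficulty here: the whole argument is bookkeeping over the composition of the Lipschitz maps $B$, $f$, and $s\mapsto(s,\pi(s+\delta))$. The only step that warrants a word of justification is pushing the deterministic pointwise bounds through the conditional expectation, which is legitimate precisely because the error term $\tau\cdot L_B\cdot L_f\cdot(1+L_\pi)$ does not depend on $\delta$ (and $\|s-\tilde s\|_1$ is independent of $\delta$). The secondary subtlety, shared with the companion lemmas, is the implicit assumption that a non-unsafe state always has a non-unsafe grid neighbour; I would discharge this once, by the same compactness/fine-granularity reasoning that underlies the discretization. Beyond that I would simply reproduce the displayed inequality chain from the proof of \Cref{theorem:verify_fini_lower_linear} with the constant $c$ replaced by $-\epsilon$.
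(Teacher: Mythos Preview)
Your proposal is correct and follows essentially the same approach as the paper: both bound $\expv_{\delta\sim\mu}[B(f(s,\pi(s+\delta)))]$ and $-B(s)$ separately via the composed Lipschitz constants of $B$, $f$, and $\pi$, then add. You are also right that the hypothesis needs the slack $-\zeta$ on the right (i.e.\ $\le -\epsilon-\zeta$); the paper's displayed inequality chain in fact uses exactly that strengthened condition in its final step, even though the theorem statement omits it.
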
 

\begin{proof}
Given a state $s\in S$, let $\tilde{s}$ be such that $||\tilde{s}-s||_1\le \tau$. 
By the Lipschitz continuities, we have that
\begin{align*}
&\expv_{\delta\sim \mu}[B(f(s,\pi(s+\delta)))] \\
\le & \expv_{\delta\sim \mu}[B(f(\tilde{s},\pi(\tilde{s}+\delta)))]
+||f(\tilde{s},\pi(\tilde{s}+\delta))-f(s,\pi(s+\delta))||_1\cdot L_B \\
\le& \expv_{\delta\sim \mu}[B(f(\tilde{s},\pi(\tilde{s}+\delta)))]
+||(\tilde{s},\pi(\tilde{s}+\delta))-(s,\pi(s+\delta))||_1\cdot L_B\cdot L_f   \\
\le& \expv_{\delta\sim \mu}[B(f(\tilde{s},\pi(\tilde{s}+\delta)))]
 + ||\tilde{s}-s||_1\cdot L_B\cdot L_f \cdot (1+L_\pi) \\
\le& \expv_{\delta\sim \mu}[B(f(\tilde{s},\pi(\tilde{s}+\delta)))]
+ \tau\cdot L_B\cdot L_f \cdot (1+L_\pi)
\end{align*}
and 
\begin{align*}
-B(s)\le -B(\tilde{s})+||\tilde{s}-s||_1\cdot L_B\le -B(\tilde{s})+\tau\cdot L_B,
\end{align*}
Thus, we can derive that
\begin{align*}
& \expv_{\delta\sim \mu}[B(f(s,\pi(s+\delta)))]-B(s) +\epsilon \\
\le &\expv_{\delta\sim \mu}[B(f(\tilde{s},\pi(\tilde{s}+\delta)))] -B(\tilde{s})
+ \tau\cdot L_B\cdot L_f \cdot (1+L_\pi)  +\tau\cdot L_B + \epsilon \\
\le & \expv_{\delta\sim \mu}[B(f(\tilde{s},\pi(\tilde{s}+\delta)))] - B(\tilde{s}) +\zeta + \epsilon  \\
\le & 0
\end{align*}
\end{proof}

\paragraph{Validating NBCs in \Cref{thm:almost-safe-induc}.} A candidate NBC can be validated if it meets the conditions in ~\Cref{eq:thm-almost-indc-initial,eq:thm-almost-indc-unsafe,eq:thm-almost-indc-decrease}. 
For the first two conditions, we can check 

\begin{equation}
 \mathop{\mathrm{sup}}\limits_{s\in S_0} B(s)\le 0 \    \land \ \mathop{\mathrm{inf}}\limits_{s\in S_u} B(s) > 0
\end{equation}
using the interval bound propagation approach  ~\cite{Interval_Bound_Propagation,interval_pr}.  
For the condition in \Cref{eq:thm-almost-indc-decrease}, 
~\Cref{theorem:K_inductive} reduces validation from infinite states to finite ones, which is easier to check. 

\begin{theorem}
\label{theorem:K_inductive}
Given an $M_{\mu}$ and a function $B:S\rightarrow\Rset$, 
we have  $\textstyle\bigwedge_{0\le i<k} (B(g_{\pi,f}^{i}(s,\Delta^i))\le 0)\Longrightarrow B(g_{\pi,f}^k(s,\Delta^k))\le 0  \  \forall (s,\Delta^i)\in S\times W^i$ if the formula below 
\begin{align}
B(f(\tilde{s},\pi(\tilde{s}+\delta)))) -B(\tilde{s}) + \zeta \le 0
\end{align} holds 
for any state $\tilde{s}\in \tilde{S}$, where  $\zeta=  \tau\cdot L_B\cdot (1+L_f\cdot (1+L_\pi))$ with  $L_f,L_\pi, L_B$ being the Lipschitz constants of $f,\pi$ and $B$,  respectively.
\end{theorem}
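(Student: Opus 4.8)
The plan is to split the proof into a Lipschitz-robustness step — which promotes the finite inequality on $\tilde S$ to a genuine one-step non-increase of $B$ along the controlled dynamics on all of $S$ — and a short composition step that reads off the $k$-inductive implication \Cref{eq:thm-almost-indc-decrease} from that one-step property. The first step is the deterministic analogue of the proof of \Cref{theorem:verify_NBC} (no expectation is involved here), and the second step is purely combinatorial, using the recursive definition of $g_{\pi,f}^k$.

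First I would fix an arbitrary $s\in S$ and $\delta\in W$ and choose $\tilde s\in\tilde S$ with $\|\tilde s-s\|_1\le\tau$, which exists by the definition of the discretization. Chaining the Lipschitz constants exactly as in \Cref{theorem:verify_NBC}, one obtains $\|\pi(\tilde s+\delta)-\pi(s+\delta)\|_1\le L_\pi\tau$, hence $\|f(\tilde s,\pi(\tilde s+\delta))-f(s,\pi(s+\delta))\|_1\le L_f(1+L_\pi)\tau$, hence $|B(f(s,\pi(s+\delta)))-B(f(\tilde s,\pi(\tilde s+\delta)))|\le L_BL_f(1+L_\pi)\tau$, and separately $|B(\tilde s)-B(s)|\le L_B\tau$. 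Summing these with the hypothesis $B(f(\tilde s,\pi(\tilde s+\delta)))-B(\tilde s)+\zeta\le 0$ and recalling $\zeta=\tau L_B(1+L_f(1+L_\pi))$ gives $B(g_{\pi,f}(s,\delta))=B(f(s,\pi(s+\delta)))\le B(s)$ for every $s\in S$ and $\delta\in W$.

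Then I would derive the implication. Take any $(s,\Delta^k)\in S\times W^k$ with $\Delta^k=[\delta_t,\dots,\delta_{t+k-1}]$ and assume the antecedent $\bigwedge_{0\le i<k}\big(B(g_{\pi,f}^i(s,\Delta^i))\le 0\big)$; in particular $B\big(g_{\pi,f}^{k-1}(s,\Delta^{k-1})\big)\le 0$. Since $f$ maps into $S$, the point $s':=g_{\pi,f}^{k-1}(s,\Delta^{k-1})$ lies in $S$, and by the recursive definition of the $k$-inductive update function $g_{\pi,f}^k(s,\Delta^k)=g_{\pi,f}(s',\delta_{t+k-1})$. Applying the one-step bound from the first step at $s'$ with noise $\delta_{t+k-1}$ yields $B\big(g_{\pi,f}^k(s,\Delta^k)\big)\le B(s')\le 0$, which is precisely \Cref{eq:thm-almost-indc-decrease}. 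Note that only the last conjunct of the antecedent is actually used, so the finite check is a sound, if conservative, certificate for the $k$-inductive decrease condition.

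I do not expect a genuinely hard step: the argument is a transcription of the discretization technique already used for \Cref{theorem:verify_NBC} followed by a one-line telescoping. The only point requiring care is the treatment of the noise: the finite check must be understood to range over a suitable interval enclosure of the support $W$ in addition to $\tilde S$, so that the quantity bounded in the hypothesis really dominates $\sup_{\delta\in W}B(f(\tilde s,\pi(\tilde s+\delta)))$; this is handled exactly by the interval-arithmetic over-approximation described in \Cref{subsec:NBC_VALIDATION}.
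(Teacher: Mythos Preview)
Your proposal is correct and follows the same Lipschitz-robustness argument as the paper's proof for the first step. The paper's own proof actually stops after establishing the one-step property $B(f(s,\pi(s+\delta)))-B(s)\le 0$ for all $s\in S$ and leaves the passage to the $k$-inductive implication implicit, whereas you spell out that composition step explicitly; so your write-up is, if anything, more complete.
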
 

\begin{proof}
Given a state $s\in S$, let $\tilde{s}$ be such that $||\tilde{s}-s||_1\le \tau$. 
By the Lipschitz continuities, we have that
\begin{align*}
&B(f(s,\pi(s+\delta))) \\
\le & B(f(\tilde{s},\pi(\tilde{s}+\delta)))
+||f(\tilde{s},\pi(\tilde{s}+\delta))-f(s,\pi(s+\delta))||_1\cdot L_B \\
\le& B(f(\tilde{s},\pi(\tilde{s}+\delta)))
+||(\tilde{s},\pi(\tilde{s}+\delta))-(s,\pi(s+\delta))||_1\cdot L_B\cdot L_f   \\
\le& B(f(\tilde{s},\pi(\tilde{s}+\delta)))
 + ||\tilde{s}-s||_1\cdot L_B\cdot L_f \cdot (1+L_\pi) \\
\le& B(f(\tilde{s},\pi(\tilde{s}+\delta)))
+ \tau\cdot L_B\cdot L_f \cdot (1+L_\pi)
\end{align*}
and 
\begin{align*}
-B(s)\le -B(\tilde{s})+||\tilde{s}-s||_1\cdot L_B\le -B(\tilde{s})+\tau\cdot L_B,
\end{align*}
Thus, we can derive that
\begin{align*}
& B(f(s,\pi(s+\delta)))-B(s)\\
\le &B(f(\tilde{s},\pi(\tilde{s}+\delta))) -B(\tilde{s})
+ \tau\cdot L_B\cdot L_f \cdot (1+L_\pi)  +\tau\cdot L_B \\
\le & B(f(\tilde{s},\pi(\tilde{s}+\delta))) - B(\tilde{s}) +\zeta  \\
\le & 0
\end{align*}
 \end{proof}

\section{Implementation Details and Additional Experimental Results}

\subsection{Benchmarks and Experimental Settings}

To demonstrate the generality of our approach, we train systems with different activation functions and network structures of the planted NNs, using different DRL algorithms such as DQN~\cite{DQN} and DDPG~\cite{DDPG}. 
\Cref{table:benchmarks_setting} gives the details of training settings.

\begin{table}[t]
	\centering
	\footnotesize
	\setlength{\tabcolsep}{13pt}
	\caption{Experimental settings.}
	\begin{tabular}{l|r r r r r r r}
		\hline
		\textbf{Task}&\centering \textbf{Dim.}&\centering \textbf{Alg.} &\centering \textbf{A.F.}&\centering \textbf{Size}&\textbf{A.T.} &\textbf{S.P.}  \\
		\hline
		CP & 4 & DQN & ReLU & $3\times200$  & Dis.  & Gym \\
		PD & 3 & DDPG & Sigmoid & $2\times200$  & Cont.  & Gym \\
		Tora & 4 & DDPG & Sigmoid & $2\times200$  & Cont.  & Verisig \\
		B1 & 2 & DDPG & ReLU & $2\times100$  & Cont.  & Verisig \\
		\hline
	\end{tabular}
	\begin{tablenotes}
		\footnotesize \item \textbf{Remarks.}
		\textbf{Dim.}: dimension; 
		\textbf{Alg.}: DRL algorithm; 
		\textbf{A.F.}: activation function;   
		\textbf{A.T.}: action type; 
		\textbf{S.P.}: sources of problems;
		\textbf{Dis.}: discrete; 
		\textbf{Cont.}: continuous.
	\end{tablenotes}
	\vspace{-2mm}
	\label{table:benchmarks_setting}
\end{table}

\begin{enumerate}

\item \textbf{CartPole (CP).}
A pole is attached by an un-actuated joint to a cart. The goal of training is to learn a controller that prevents the pole from falling over by applying a force of $+1$ or $-1$ to the cart.

\item \textbf{Pendulum (PD).} A pendulum that can rotate around an endpoint is delineated. Starting from a random position, the pendulum shall swing up and stay upright. 

\item \textbf{Tora.}
A cart is attached to a wall with a spring. It is free to move on a frictionless surface. Inside the cart, there is an arm free to rotate about an axis. 
The controller's goal is to stabilize the system at the equilibrium state where all the variables are 0.

\item \textbf{B1.} A  classic nonlinear systems, where agents in both systems aim to avoid dangerous areas from the preset initial state space.

\end{enumerate}

\begin{table}[h]
	\centering
	\footnotesize
	\setlength{\tabcolsep}{2.5pt}
		\caption{Synthesis time of different NBCs (in seconds).}
	\begin{tabular}{l|r r r r r r r r r r}
        \hline
        Task & L.B.i. & $k$-L.B.i. & U.B.i. & $k$-U.B.i. &$l$-L.B.f. & $e$-L.B.f.&$l$-U.B.f. &$e$-U.B.f. & Quali. & $k$-Quali. \\
        \hline
        CP & 2318.5  &  1876.0 & 2891.9  & 2275.3  & 3127.5 & 3359.0& 3277.8 & 3509.1& 1755.2 & 1059.3   \\
        PD & 1941.6  &  1524.0 & 2282.7  & 1491.5  &2015.5 &2218.3 & 2076.8& 2272.2 & 1433.7 &  811.9   \\
		Tora & 280.3  &  218.5 & 895.1  &  650.7  & 396.4& 429.6 & 340.6 &501.7 &623.0 &  415.8 \\
		B1 & 587.4  &  313.6 & 1127.3  &  840.1 & 775.9&824.7 & 894.2&1184.6 &1022.3 & 566.7 \\
		\hline
	\end{tabular}
	\label{table:Synthesis_time_all}
\end{table}

\subsection{Efficiency of Training and Validating NBCs.}

\Cref{table:Synthesis_time_all} shows synthesis time of different NBCs in seconds, i.e., L.B.i.(NBCs for lower bounds on infinite-time safety), $k$-L.B.i.($k$-inductive lower bounds on infinite-time safety), U.B.i.(NBCs for upper bounds on infinite-time safety), $k$-U.B.i.($k$-inductive upper bounds on infinite-time safety), $l$-L.B.f.(NBCs for linear lower bounds on finite-time safety), $e$-L.B.f.(NBCs for exponential lower bounds on finite-time safety),  $l$-U.B.f.(NBCs for linear upper bounds on finite-time safety), $e$-U.B.f.(NBCs for exponential upper bounds on finite-time safety), Quali.(NBCs for almost-surely safety), and $k$-Quali.(NBCs for $k$-inductive almost-surely safety).

In general,   high-dimensional systems e.g., CP (4-dimensional state space)  cost more time than low-dimensional ones, e.g, B1 (2-dimensional state space). 
That is because the validation step suffers from the curse of high-dimensionality~\cite{Stability_Guarantees}.

\end{document}